 \renewcommand{\citet}[1]{\cite{#1}}
\titlespacing*{\paragraph} {0pt}{0.35ex plus 0.3ex minus .2ex}{1em}
\newcommand{\tb}[1]{{\color{black} #1}}
\newcommand{\tc}[1]{{\color{black} #1}}
\DeclareMathOperator*{\argmin}{arg\,min}
\newcommand{\R}{\mathbb{R}}
\newcommand{\V}{\mathcal{V}}
\newcommand{\N}{\mathbb{N}}
\newcommand{\XX}{\mathcal{X}}
\newcommand{\YY}{\mathcal{Y}}
\newcommand{\C}{\mathcal{C}}
\newcommand{\cbap}{{\sf CBA\textsuperscript{+}}}
\newcommand{\cba}{{\sf CBA}}
\newcommand{\rmp}{{\sf RM\textsuperscript{+}}}
\newcommand{\cfrp}{{\sf CFR\textsuperscript{+}}}
\newcommand{\rmm}{{\sf RM}}
\newcommand{\chp}{{\sf CHOOSEDECISION}}
\newcommand{\upp}{{\sf UPDATEPAYOFF}}
\theoremstyle{plain}
\newtheorem{theorem}{Theorem}[section]
\newtheorem{lemma}[theorem]{Lemma}
\newtheorem{proposition}[theorem]{Proposition}
\theoremstyle{definition}
\newtheorem{remark}[theorem]{Remark}
\title{Conic Blackwell Algorithm: Parameter-Free Convex-Concave Saddle-Point Solving}
\author{%
  Julien Grand-Cl{\'e}ment \\
  ISOM Department \& Hi!Paris\\
  HEC Paris\\
  \texttt{grand-clement@hec.fr} \\
   \And
   Christian Kroer \\
   IEOR Department \\
   Columbia University \\
   \texttt{christian.kroer@columbia.edu} \\
}
\begin{document}

\maketitle

\begin{abstract}
We develop new parameter-free and scale-free algorithms for solving convex-concave saddle-point problems. Our results are based on a new simple regret minimizer, the Conic Blackwell Algorithm$^+$ (\cbap), which attains $O(1/\sqrt{T})$ average regret. Intuitively, our approach generalizes to other decision sets of interest ideas from the Counterfactual Regret minimization (CFR$^+$) algorithm, which has very strong practical performance for solving sequential games on simplexes.
We show how to implement \cbap{} for the simplex, $\ell_{p}$ norm balls, and ellipsoidal confidence regions in the simplex, and we present numerical experiments for solving matrix games and distributionally robust optimization problems.
Our empirical results show that \cbap{} is a simple algorithm that outperforms state-of-the-art methods on synthetic data and real data instances, without the need for any choice of step sizes or other algorithmic parameters.
\end{abstract}

\section{Introduction}
\label{sec:intro}

We are interested in solving \emph{saddle-point problems} (SPPs) of the form
\begin{align}\label{eq:spp}
\min_{\bm{x} \in \XX}  \max_{\bm{y} \in \YY} F(\bm{x},\bm{y}),
\end{align}
where $\XX \subset \R^{n}, \YY \subset \R^{m}$ are convex, compact sets, and $F: \XX \times \YY \rightarrow \R$ is a subdifferentiable convex-concave function.
Convex-concave SPPs arise in a number of practical problems. For example, the problem of computing a Nash equilibrium of a zero-sum games can be formulated as a convex-concave SPP, and this is the foundation of most methods for solving sequential zero-sum games~\citep{stengel1996efficient,zinkevich2007regret,tammelin2015solving,kroer2018faster}. They also arise in imaging~\citep{ChambollePock2011}, $\ell_{\infty}$-regression \citep{sidford2018coordinate}, Markov Decision Processes \citep{Iyengar,Kuhn,sidford2018coordinate}, and in distributionally robust optimization, where the max term represents the distributional uncertainty~\citep{namkoong2016stochastic,ben2015oracle}. In this paper we propose efficient, \textit{parameter-free} algorithms for solving \eqref{eq:spp} in many settings, i.e., algorithms that do not require any tuning or choices of step sizes.

\paragraph{Repeated game framework}
One way to solve convex-concave SPPs is by viewing the SPP as a repeated game between two players, where each step $t$ consists of one player choosing $\bm{x}_t\in \XX$, the other player choosing $\bm{y}_t\in \YY$, and then the players observe the payoff $F(\bm{x}_t,\bm{y}_t)$.
If each player employs a regret-minimization algorithm, then a well-known folk theorem says that the uniform average strategy generated by two regret minimizers repeatedly playing an SPP against each other converges to a solution to the SPP.
We will call this the ``repeated game framework'' (see Section \ref{sec:setup}).
There are already well-known algorithms for instantiating the above repeated game framework for \eqref{eq:spp}. 
For example, one can employ the \emph{online mirror descent} (OMD) algorithm, which generates iterates as follows for the first player (and similarly for the second player):
\begin{align}\label{eq:omd}
       \bm x_{t+1} = \argmin_{\bm x \in \XX} \langle \eta \bm{f}_{t}, \bm{x} \rangle + D(\bm x \| \bm x_{t}),
\end{align}
where $\bm{f}_{t} \in \partial_{\bm{x}} F(\bm{x}_t, \bm{y}_t)$ ($\partial_{\bm{x}}$ denotes  here the set of subgradients as regards the variable $\bm{x}$), $D(\cdot \| \cdot)$ is a Bregman divergence which measures distance between pairs of points, and $\eta>0$ is an appropriate step size. By choosing $D$ appropriately for $\XX$, the update step \eqref{eq:omd} becomes efficient, and one can achieve an overall regret on the order of $O(\sqrt{T})$ after $T$ iterations. \tb{This regret can be achieved either by choosing a fixed step size $\eta = \alpha /L \sqrt{T}$, where an upper bound $L$ on the norms of the subgradients visited $\left( \bm{g}_{t} \right)_{t \geq 0}$, or by choosing adaptive step sizes $\eta_{t} = \alpha / \sqrt{t}$, for $\alpha>0$.  This is problematic,  as 1) the upper bound $L$ may be hard to obtain in many applications and may be too conservative in practice, and 2) adequately tuning the parameter $\alpha$ can be time- and resource-consuming, and even practically infeasible for very large instances, since we won't know if the step size will cause a divergence until late in the optimization process.} This is not just a theoretical issue,  as we highlight in our numerical experiments (Section \ref{sec:simu}) and in the appendices (Appendices \ref{app:OMD-etc}).
Similar results and challenges hold for the popular \emph{follow the regularized leader} (FTRL) algorithm (see Appendix \ref{app:OMD-etc}).

\tc{
The above issues can be circumvented by employing \emph{adaptive} variants of OMD or FTRL, which lead to parameter- and scale-free algorithms that estimate the parameters through the observed subgradients, e.g., AdaHedge for the simplex setting~\citep{de2014follow} or AdaFTRL for general compact convex decisions sets~\citep{orabona2015scale}.
Yet these adaptive variants have not seen practical adoption in large-sale game-solving, where regret-matching variants are preferred (see the next paragraph).
As we show in our experiments, adaptive variants of OMD and FTRL perform much worse than our proposed algorithms.
While these adaptive algorithms are referred to as parameter-free, this is only true in the sense that they are able to learn the necessary parameters. Our algorithm is parameter-free in the stronger sense that there are no parameters that even require learning. Formalizing this difference may be one interesting avenue for explaining the performance discrepancy on saddle-point problems.
}

\paragraph{Regret Matching}
In this paper, we introduce alternative regret-minimization schemes for instantiating the above framework. 
Our work is motivated by recent advances on solving large-scale zero-sum sequential games. 
In the zero-sum sequential game setting,  $\XX$ and $\YY$ are simplexes, the objective function becomes $F(\bm x, \bm y) = \langle \bm{x},\bm{Ay} \rangle$, and thus \eqref{eq:spp} reduces to a \emph{bilinear} SPP.
Based on this bilinear SPP formulation, the best practical methods for solving large-scale sequential games use the repeated game framework, where each player minimizes regret via some variant of \emph{counterfactual regret minimization} (CFR, \citep{zinkevich2007regret}).
Variants of CFR were used in every recent poker AI challenge, where poker AIs beat human poker players~\citep{bowling2015heads,moravvcik2017deepstack,brown2018superhuman,brown2019superhuman}.
The CFR framework itself is a decomposition of the overall regret of the bilinear SPP into local regrets at each decision point in a sequential game~\citep{farina2019online}.
The key to the practical performance of CFR-based algorithms seems to be three ingredients (beyond the CFR decomposition itself): 
(1) a particular regret minimizer called \emph{regret matching$^+$} (\rmp)~\citep{tammelin2015solving} which is employed at each decision point, 
(2) aggressive iterate averaging schemes that put greater weight on recent iterates (e.g. \emph{linear averaging}, which weights iterate at period $t$ by $2t / T(T+1)$), and 
(3) an alternation scheme where the updates of the repeated game framework are performed in an asymmetric fashion.
The CFR framework itself is specific to sequential bilinear games on simplexes, but these last three ingredients could potentially be generalized to other problems of the form \eqref{eq:spp}.
That is the starting point of the present paper.

The most challenging aspect of generalizing the above ingredients is that \rmp{} is specifically designed for minimizing regret over a simplex. 
However, many problems of the form \eqref{eq:spp} have convex sets $\XX,\YY$ that are not simplexes, e.g., box constraints or norm-balls for distributionally robust optimization~\citep{ben2015oracle}.
In principle, regret matching arises from a general theory called \emph{Blackwell approachability}~\citep{blackwell1956analog,hart2000simple}, and similar constructions can be envisioned for other convex sets.
However, in practice the literature has only focused on developing concrete implementable instantiations of Blackwell approachability for simplexes. 
A notable deviation from this is the work of \citet{abernethy2011blackwell}, who showed a general reduction between regret minimization over general convex compact sets and Blackwell approachability. However, their general reduction still does not yield a practically implementable algorithm: among other things,  their reduction relies on certain black-box projections that are not always efficient.
We show how to implement these necessary projections for the setting where $\XX$ and $\YY$ are simplexes, $\ell_p$ balls, and intersections of the $\ell_2$ ball with a hyperplane (with a focus on the case where an $\ell_2$ ball is intersected with a simplex,  which arises naturally as confidence regions). 
This yields an algorithm which we will refer to as the \emph{conic Blackwell algorithm} (\cba), which is similar in spirit to the regret matching algorithm, but crucially generalizes to other decision sets. Motivated by the practical performance of \rmp{}, we construct a variant of \cba{} which uses a thresholding operation similar to the one employed by {\rmp}. We call this algorithm \cbap{}.

\paragraph{Our contributions}
We introduce \cbap, a parameter-free algorithm which achieves $O(\sqrt{T})$ regret in the worst-case and generalizes the strong performances of \rmp{} for bilinear, simplex saddle-points solving to other more general settings.
A major selling point for \cbap{} is that it does not require any step size choices. Instead, the algorithm implicitly adjusts to the structure of the domains and losses by being instantiations of Blackwell's approachability algorithm.
After developing the \cbap{} algorithm, we then develop analogues of another crucial components for large-scale game solving. 
In particular, we prove a generalization of the folk theorem for the repeated game framework for solving \eqref{eq:spp}, which allows us to incorporate  polynomial averaging schemes such as linear averaging.
We then show that \cbap{} is compatible with linear averaging on the iterates. This mirrors the case of RM and \rmp{}, where only \rmp{} is compatible with linear averaging on the iterates.
We also show that both \cba{} and \cbap{} are compatible with polynomial averaging when simultaneously performed on the regrets and the iterates.
Combining all these ingredients, we arrive at a new class of algorithms for solving convex-concave SPPs. As long as efficient projection operations can be performed (which we show for several practical domains, including the simplex, $\ell_{p}$ balls and confidence regions in the simplex), one can apply the repeated game framework on \eqref{eq:spp}, where one can 
 use either \cba{} or \cbap{} as a regret minimizer for $\XX$ and $\YY$ along with polynomial averaging on the generated iterates to solve \eqref{eq:spp} at a rate of $O \left( 1/\sqrt{T} \right)$.
 
We highlight the practical efficacy of our algorithmic framework on several domains.
First, we solve two-player zero-sum matrix games and extensive-form games, where \rmp{} regret minimizer combined with linear averaging and alternation, and CFR$^+$, lead to very strong practical algorithms~\citep{tammelin2015solving}. We find that \cbap{} combined with linear averaging and alternation leads to a comparable performance in terms of the iteration complexity, and may even slightly outperform \rmp{} and CFR$^+$. On this simplex setting, we also find that \cbap{} outperforms both AdaHedge and AdaFTRL.
Second, we apply our approach to a setting where \rmp{} and CFR$^+$ do not apply: distributionally robust empirical risk minimization (DR-ERM) problems. Across two classes of synthetic problems and four real data sets, we find that our algorithm based on \cbap{} performs orders of magnitude better than online mirror descent and FTRL, as well as their optimistic variants, when using their theoretically-correct fixed step sizes.
Even when considering adaptive step sizes, or fixed step sizes that are up to $10,000$ larger than those predicted by theory, our \cbap{} algorithm performs better, with only a few cases of comparable performance (at step sizes that lead to divergence for some of the other non-parameter free methods). The fast practical performance of our algorithm, combined with its simplicity and the total lack of step sizes or parameters tuning, suggests that it should be seriously considered as a practical approach for solving  convex-concave SPPs in various settings.

Finally, we make a brief note on accelerated methods. Our algorithms have a rate of convergence towards a saddle point of  $O(1/\sqrt{T})$, similar to OMD and FTRL. 
In theory, it is possible to obtain a faster $O \left( 1/T \right)$ rate of convergence when $F$ is differentiable with Lipschitz gradients, for example via mirror prox~\citep{nemirovski2004prox} or other primal-dual algorithms~\citep{ChambollePock16}. 
However, our experimental results show that \cbap{} is faster than optimistic variants of FTRL and OMD~\citep{syrgkanis2015fast}, the latter being almost identical to the mirror prox algorithm, and both achieving $O(1/T)$ rate of convergence.
A similar conclusion has been drawn in the context of sequential game solving, where the fastest $O(1/\sqrt{T})$ CFR-based algorithms have better practical performance than the theoretically-superior $O \left( 1/T \right)$-rate methods~\citep{kroer2018faster,kroer2018solving}.
\tc{
In a similar vein, using \emph{error-bound conditions}, it is possible to achieve a linear rate, e.g., when solving bilinear saddle-point problems over polyhedral decision sets using the extragradient method~\citep{tseng1995linear} or optimistic gradient descent-ascent~\citep{wei2020linear}. However, these linear rates rely on unknown constants, and may not be indicative of practical performance.
}

\section{Game setup and Blackwell Approachability}\label{sec:setup}
As stated in \cref{sec:intro}, we will solve \eqref{eq:spp} using a repeated game framework. The first player chooses strategies from $\XX$ in order to minimize the sequence of payoffs in the repeated game, while the second player chooses strategies from $\YY$ in order to maximize payoffs.
There are $T$ iterations with indices $t=1,\ldots,T$. In this framework, each iteration $t$ consists of the following steps:
\begin{enumerate}
    \item Each player chooses strategies $\bm x_t\in \XX, \bm y_t \in \YY$
    \item First player observes $\bm{f_t} \in \partial_{\bm x} F(\bm x_t,\bm y_t)$ and uses $\bm{f}_{t}$ when computing the next strategy
    \item Second player observes $\bm{g_t} \in \partial_{\bm y} F(\bm x_t,\bm y_t)$ and uses $\bm{g}_{t}$ when computing the next strategy
\end{enumerate}

The goal of each player is to minimize their regret $R_{T,\bm{x}},R_{T,\bm{y}}$ across the $T$ iterations:
\begin{align*}
R_{T,\bm{x}}  = \sum_{t=1}^{T} \langle \bm f_t, \bm x_t \rangle - \min_{\bm{x} \in \XX} \sum_{t=1}^{T} \langle \bm f_t, \bm x \rangle ,\quad
R_{T,\bm{y}} = \max_{\bm{y} \in \YY} \sum_{t=1}^{T} \langle \bm g_t ,\bm{y} \rangle - \sum_{t=1}^{T} \langle \bm g_t,\bm{y}_{t} \rangle.
\end{align*}
The reason this repeated game framework leads to a solution to the SPP problem~\eqref{eq:spp} is the following folk theorem. Relying on $F$ being convex-concave and subdifferentiable,  it connects the regret incurred by each player to the duality gap in \eqref{eq:spp}.
\begin{theorem}[Theorem 1, \cite{kroer2020ieor8100}]\label{thm:folk theorem}
Let $\left( \bar{\bm{x}}_{T},\bar{\bm{y}}_{T} \right) = \dfrac{1}{T}\sum_{t=1}^{T} \left(\bm{x}_{t}, \bm{y}_{t} \right)$ for any $\left(\bm{x}_{t}\right)_{t \geq 1},\left(\bm{y}_{t}\right)_{t \geq 1}$.
Then 
\[\max_{\bm{y} \in \YY} F(\bar{\bm{x}}_{T},\bm{y}) - \min_{\bm{x} \in \XX} F(\bm{x},\bar{\bm{y}}_{T}) \leq (R_{T,\bm{x}} + R_{T,\bm{y}})/T. \]
\end{theorem}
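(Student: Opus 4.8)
The plan is to reduce the duality gap on the left-hand side to a sum of per-iteration linear terms by invoking the subgradient inequalities of the convex-concave function, and then to recognize those terms as exactly the two regrets $R_{T,\bm{x}}$ and $R_{T,\bm{y}}$. No averaging scheme is needed here beyond the uniform average defining $\bar{\bm{x}}_T$ and $\bar{\bm{y}}_T$, so the argument is purely a chain of convexity estimates.

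First I would exploit the convex-concave structure pointwise at each iterate. Since $F(\cdot,\bm{y}_t)$ is convex and $\bm{f}_t \in \partial_{\bm{x}} F(\bm{x}_t,\bm{y}_t)$, the subgradient inequality gives $F(\bm{x}_t,\bm{y}_t) - F(\bm{x},\bm{y}_t) \leq \langle \bm{f}_t, \bm{x}_t - \bm{x}\rangle$ for every $\bm{x}\in\XX$. Symmetrically, since $F(\bm{x}_t,\cdot)$ is concave and $\bm{g}_t \in \partial_{\bm{y}} F(\bm{x}_t,\bm{y}_t)$ is a supergradient, $F(\bm{x}_t,\bm{y}) - F(\bm{x}_t,\bm{y}_t) \leq \langle \bm{g}_t, \bm{y} - \bm{y}_t\rangle$ for every $\bm{y}\in\YY$. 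Adding these two and cancelling the common $F(\bm{x}_t,\bm{y}_t)$ term yields, for all $\bm{x}\in\XX,\bm{y}\in\YY$, the key per-step estimate $F(\bm{x}_t,\bm{y}) - F(\bm{x},\bm{y}_t) \leq \langle \bm{f}_t, \bm{x}_t - \bm{x}\rangle + \langle \bm{g}_t, \bm{y} - \bm{y}_t\rangle$.

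Next I would pass from the individual iterates to their averages. Summing the above over $t=1,\dots,T$ and dividing by $T$, the left side equals $\frac{1}{T}\sum_{t=1}^T [F(\bm{x}_t,\bm{y}) - F(\bm{x},\bm{y}_t)]$. Applying Jensen's inequality to the convexity of $F(\cdot,\bm{y})$ gives $F(\bar{\bm{x}}_T,\bm{y}) \leq \frac{1}{T}\sum_{t=1}^T F(\bm{x}_t,\bm{y})$, and applying it to the concavity of $F(\bm{x},\cdot)$ gives $F(\bm{x},\bar{\bm{y}}_T) \geq \frac{1}{T}\sum_{t=1}^T F(\bm{x},\bm{y}_t)$. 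Hence the averaged left side is lower-bounded by $F(\bar{\bm{x}}_T,\bm{y}) - F(\bm{x},\bar{\bm{y}}_T)$, so that for all $\bm{x},\bm{y}$ we obtain $F(\bar{\bm{x}}_T,\bm{y}) - F(\bm{x},\bar{\bm{y}}_T) \leq \frac{1}{T}\sum_{t=1}^T \langle \bm{f}_t, \bm{x}_t - \bm{x}\rangle + \frac{1}{T}\sum_{t=1}^T \langle \bm{g}_t, \bm{y} - \bm{y}_t\rangle$.

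Finally I would specialize the free comparators. By compactness of $\XX,\YY$ I may choose $\hat{\bm{y}} \in \argmax_{\bm{y}\in\YY} F(\bar{\bm{x}}_T,\bm{y})$ and $\hat{\bm{x}} \in \argmin_{\bm{x}\in\XX} F(\bm{x},\bar{\bm{y}}_T)$; then the left side evaluated at $(\hat{\bm{x}},\hat{\bm{y}})$ is exactly the duality gap to be bounded. On the right side, $\sum_{t=1}^T \langle \bm{f}_t, \bm{x}_t - \hat{\bm{x}}\rangle \leq \sum_{t=1}^T \langle \bm{f}_t, \bm{x}_t\rangle - \min_{\bm{x}\in\XX}\sum_{t=1}^T \langle \bm{f}_t, \bm{x}\rangle = R_{T,\bm{x}}$, and likewise $\sum_{t=1}^T \langle \bm{g}_t, \hat{\bm{y}} - \bm{y}_t\rangle \leq R_{T,\bm{y}}$, which after dividing by $T$ gives the claimed inequality. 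The \emph{main obstacle} is not difficulty but bookkeeping: one must fix the sign/direction convention for the concave supergradient $\bm{g}_t$ so that both subgradient inequalities point in the same direction, and one must select the comparators $\hat{\bm{x}},\hat{\bm{y}}$ so that the averaged left-hand side coincides with the $\max$/$\min$ duality gap while each right-hand sum collapses precisely into the definition of the corresponding regret.
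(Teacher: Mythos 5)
Your proof is correct, and it is essentially the same argument the paper uses for its weighted generalization (Theorem \ref{th:folk-final}, Appendix \ref{app:proof-folk-final}), which recovers this statement with uniform weights $\omega_t = 1$: both proofs chain the per-iterate subgradient/supergradient inequalities with Jensen's inequality on the averages and then recognize the two regret terms, differing only in the order in which Jensen and the subgradient bounds are applied.
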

Therefore,  when each player runs a regret minimizer that guarantees regret on the order of $O(\sqrt{T})$,   $\left(\bar{\bm{x}}_{T}, \bar{\bm{y}}_{T} \right)_{T \geq 0}$ converges to a solution to \eqref{eq:spp} at a rate of $O\left(1/\sqrt{T}\right)$.
Later we will show a generalization of Theorem \ref{thm:folk theorem} that will allow us to incorporate more aggressive averaging schemes that put additional weight on the later iterates.
Given the repeated game framework, the next question becomes which algorithms to employ in order to minimize regret for each player. As mentioned in Section \ref{sec:intro}, for zero-sum games, variants of regret matching are used in practice. 

\paragraph{Blackwell Approachability}
Regret matching arises from the \emph{Blackwell approachability} framework~\citep{blackwell1956analog}.  
In Blackwell approachability, a decision maker repeatedly takes decisions $\bm x_t$ from some convex decision set $\XX$ (this set plays the same role as $\XX$ or $\YY$ in \eqref{eq:spp}). After taking decision $\bm x_t$ the player observes a vector-valued affine payoff function $\bm u_t(\bm x) \in \mathbb R^n$. 
The goal for the decision maker is to force the average payoff $\frac{1}{t} \sum_{\tau = 1}^t \bm{u}_\tau(\bm x_\tau)$ to approach some convex target $\mathcal S$.
Blackwell proved that a convex target set $\mathcal S$ can be approached if and only if for every halfspace $\mathcal H\supseteq \mathcal S$, there exists $\bm x \in \XX$ such that for every possible payoff function $\bm u(\cdot)$, $\bm u(\bm x)$ is guaranteed to lie in $\mathcal H$. The action $\bm x$ is said to \emph{force} $\mathcal H$.
Blackwell's proof is via an algorithm: at iteration $t$, his algorithm projects the average payoff $\bar{\bm{u}} = \frac{1}{t-1} \sum_{\tau = 1}^{t-1} \bm{u}_\tau(\bm x_\tau)$ onto $\mathcal S$, and then the decision maker chooses an action $\bm x_t$ that forces the tangent halfspace to $\mathcal S$ generated by the normal $\bar{\bm{u}} - \pi_{\mathcal S}(\bar{\bm{u}})$, where $\pi_{\mathcal S}(\bar{\bm{u}})$ is the orthogonal projection of $\bar{\bm{u}}$ onto $\mathcal S$.
We call this algorithm \emph{Blackwell's algorithm}; it approaches $\mathcal S$ at a rate of $O(1/\sqrt{T})$.
It is important to note here that Blackwell's algorithm is rather a meta-algorithm than a concrete algorithm. Even within the context of Blackwell's approachability problem, one needs to devise a way to compute the forcing actions needed at each iteration, i.e., to compute $\pi_{\mathcal S}(\bar{\bm{u}})$.

\paragraph{Details on Regret Matching}
Regret matching arises by instantiating Blackwell approachability with the decision space $\XX$ equal to the simplex $\Delta(n)$, the target set $\mathcal S$ equal to the nonpositive orthant $\mathbb R_{-}^n$, and the vector-valued payoff function $\bm u_t(\bm x_t) = \bm f_t - \langle \bm f_t,\bm x_t \rangle \bm{e}$ equal to the regret associated to each of the $n$ actions (which correspond to the corners of $\Delta(n)$). 
Here $\bm{e} \in \R^{n}$ has one on every component.
 \citet{hart2000simple} showed that with this setup, playing each action with probability proportional to its positive regret up to time $t$ satisfies the forcing condition needed in Blackwell's algorithm.
 Formally, regret matching (\rmm) keeps a running sum $\bm r_t = \sum_{\tau = 1}^{t} \left( \bm f_{\tau} - \langle\bm f_{\tau}, \bm x_\tau \rangle \bm{e} \right)$, and then action $i$ is played with probability $\bm x_{t+1,i} = [\bm r_{t,i}]^+ / \sum_{i=1}^n [\bm r_{t,i}]^+$, where $[\cdot]^+$ denotes thresholding at zero.
 By Blackwell's approachability theorem, this algorithm converges to zero average regret at a rate of $O(1/\sqrt{T})$.
 In zero-sum game-solving, it was discovered that a variant of regret matching leads to extremely strong practical performance (but the same theoretical rate of convergence). In regret matching$^+$ (\rmp), the running sum is thresholded at zero at every iteration: $\bm r_t = [\bm r_{t-1} + \bm f_{t} - \langle\bm f_{t}, \bm x_t \rangle \bm{e} ]^+$, and then actions are again played proportional to $\bm r_t$.
In the next section, we describe a more general class of regret-minimization algorithms based on Blackwell's algorithm for general sets $\XX$, introduced in \citet{abernethy2011blackwell}.
Note that a similar construction of a general class of algorithms can be achieved through the \emph{Lagrangian Hedging} framework of \citet{gordon2007no}. It would be interesting to construct a \cbap-like algorithm and efficient projection approaches for this framework as well.

\section{Conic Blackwell Algorithm}\label{sec:CBA}
We present the Conic Blackwell Algorithm Plus (\cbap), a no-regret algorithm which uses a variation of Blackwell's approachability procedure~\citep{blackwell1956analog} to perform regret minimization on general convex compact decision sets $\XX$. We will assume that losses are coming from a bounded set; this occurs, for example, if there exists $L_x,L_y$ (that we do not need to know), such that
\begin{equation}\label{eq:definition-Lx-Lx}
\|  \bm{f} \| \leq L_x, \| \bm{g} \| \leq L_y, \; \forall \; \bm{x}\in \XX, \bm{y} \in \YY, \forall \; \bm{f} \in \partial_{\bm{x}} F(\bm{x}, \bm{y}), \forall \; \bm{g} \in \partial_{\bm{y}} F(\bm{x}, \bm{y}).
\end{equation}

{\cbap} is best understood as a combination of two steps. The first is the basic CBA algorithm, derived from Blackwell's algorithm, which we describe next.
To convert Blackwell's algorithm to a regret minimizer on $\XX$, we use the reduction from \citet{abernethy2011blackwell}, which considers the conic hull $\C = \textrm{cone}(\{\kappa\} \times \XX)$ where  $\kappa = \max_{\bm{x} \in \XX} \| \bm{x} \|_{2}$. 
The Blackwell approachability problem is then instantiated with $\XX$ as the decision set, target set equal to the polar  $\C^\circ = \{ \bm{z} : \langle \bm{z}, \bm{\hat z} \rangle \leq 0, \forall \bm{\hat z} \in \C\}$ of $\C$, and payoff vectors $(\langle\bm{f}_t,\bm{x}_t \rangle, -\bm{f}_t)$.
The conic Blackwell algorithm (\cba) is implemented by projecting the average payoff vector onto $\C$, calling this projection $\alpha(\kappa,\bm{x})$ with $\alpha \geq 0$ and $\bm{x} \in \XX$, and playing the action $\bm{x}$.

The second step in {\cbap} is to modify \cba{} to make it analogous to {\rmp} rather than to \rmm. To do this, the algorithm does not keep track of the average payoff vector. Instead, we keep a running aggregation of the payoffs, where we always add the newest payoff to the aggregate, and then project the aggregate onto $\C$.
More concretely, pseudocode for {\cbap} is given in Algorithm \ref{alg:CBAp}.
This pseudocode relies on two functions:
$\chp_{\cbap}: \R^{n+1} \rightarrow \R^{n}$, which maps the aggregate payoff vector $\bm u_t$ to a decision in $\XX$, and $\upp_{\cbap}$ which controls how we aggregate payoffs.
Given an aggregate payoff vector $\bm{u}=(\tilde{u},\hat{\bm{u}}) \in \R \times \R^{n}$,  we have
\[\chp_{\cbap}(\bm{u})=(\kappa/\tilde{u}) \hat{\bm{u}}.\]
If $\tilde{u}=0$, we just let $\chp_{\cbap}(\bm{u})=\bm{x}_{0}$ for some chosen $\bm{x}_{0} \in \XX$. 

The function $\upp_{\cbap}$ is implemented by adding the most recent payoff to the aggregate payoffs, and then projecting onto $\C$.
More formally, it is defined as
\[
\upp_{\cbap}(\bm{u}, \bm{x},\bm{f},\omega,S) =\pi_{\C} \left( \frac{S}{S+\omega}\bm{u} + \frac{\omega}{S+\omega} \left( \langle \bm{f},\bm{x} \rangle / \kappa, 
- \bm{f} \right) \right),
\]
where $\omega$ is the weight assigned to the most recent payoff and $S$ the weight assigned to the previous aggregate payoff $\bm{u}$. Because of the projection step in $\upp_{\cbap}$, we always have $\bm{u} \in \C$,  which in turn guarantees that $\chp_{\cbap}(\bm{u}) \in \XX$, since $\C = \textrm{cone}(\{\kappa\} \times \XX)$.
\begin{algorithm}[t]
\caption{Conic Blackwell Algorithm Plus (\cbap)}\label{alg:CBAp}
\begin{algorithmic}[1]
\State \textbf{Input} A convex, compact set $\XX \subset \R^{n}$, $\kappa = \max \{ \| \bm{x} \|_{2} \; | \; \bm{x} \in \XX \}$.
\State \textbf{Algorithm parameters} Weights $\left(\omega_{\tau} \right)_{\tau \geq 1} \in \R^{\N}$.
\State \textbf{Initialization} $t=1$,  $\bm{x}_{1} \in \XX$. 
\State Observe $\bm{f}_{1}$ then set
$\bm{u}_{1} = \left( \langle \bm{f}_{1},\bm{x}_{1} \rangle / \kappa,  - \bm{f}_{1} \right) \in \R \times \R^{n}$.
\For{$t \geq 1$}
\State Choose $\bm{x}_{t+1} = \chp_{\cbap}(\bm{u}_{t})$. \label{step:alg:projection}
\State Observe the loss $\bm{f}_{t+1} \in \R^{n}$.
\State Update $\bm{u}_{t+1} = \upp_{\cbap}(\bm{u}_{t}, \bm{x}_{t+1},\bm{f}_{t+1},\omega_{t+1},\sum_{\tau=1}^{t} \omega_{\tau}).$ \label{alg:step:update-u}
\State Increment $t \leftarrow t+1.$
\EndFor
\end{algorithmic}
\end{algorithm}
\tb{
Let us give some intuition on the effect of projection onto $\C$. In a geometric sense, 
it is easier to visualize things in $\R^{2}$ with $\C = \R^{2}_{+}$ and $\C^{\circ} = \R^{2}_{-}$. The projection on  $\C$ moves the vector along the edges of $\C^{\circ}$, maintaining the distance to $\C^{\circ}$ and moving toward the vector $\bm{0}$. This is illustrated in Figure \ref{fig:projection-cone} in Appendix \ref{app:figure-projection-cone}. From a game-theoretic standpoint, the projection on $\C=\R^{2}_{+}$ eliminates the components of the payoffs that are negative. It enables CBA+ to be less pessimistic than CBA, which may accumulate negative payoffs on actions for a long time and never resets the components of the aggregated payoff to $0$, leading to some actions being chosen less frequently. }
 
We will see in the next section that \rmp{} is related to \cbap{} but replaces the exact projection step $\pi_{\C}(\bm{u})$ in $\upp_{\cbap}$ by a suboptimal solution to the projection problem.
Let us also note the difference between \cbap{} and  the algorithm introduced in \cite{abernethy2011blackwell}, which we have called \cba.  
\cba{} uses different \upp{} and \chp{} functions. 
In \cba{} the payoff update is defined as \[\upp_{\cba}(\bm{u}, \bm{x},\bm{f},\omega,S) = \frac{S}{S+\omega} \bm{u} +  \frac{\omega}{S+\omega} \left( \langle \bm{f} / \kappa,\bm{x} \rangle, 
- \bm{f} \right).\]
Note in particular the lack of projection as compared to \cbap{}, analogous to the difference between \rmm{} and \rmp. The $\chp_{\cba}$ function then requires a projection onto $\C$: \[\chp_{\cba}(\bm{u}) = \chp_{\cbap}\left(\pi_{\C}(\bm{u}) \right).\]
Based upon the analysis in \cite{blackwell1956analog},  \cite{abernethy2011blackwell} show that \cba{} with uniform weights (both on payoffs and decisions) guarantees $O(1/\sqrt{T})$ average regret.
The difference between \cbap{} and \cba{} is similar to the difference between the \rmm{} and \rmp{} algorithms. 
In practice, \rmp{} performs significantly better than \rmm{} for solving matrix games, when combined with \emph{linear averaging} on the decisions (as opposed to the uniform averaging used in Theorem \ref{thm:folk theorem}).  In the next theorem, we show that \cbap{} is compatible with linear averaging on decisions only. We present a detailed proof in Appendix \ref{app:proof-lin-avg}.
\begin{theorem}\label{th:cbap-linear-averaging-only-policy}
 Consider $\left(\bm{x}_{t} \right)_{t \geq 0}$ generated by \cbap{} with uniform weights: $\omega_{\tau} =1, \forall \; \tau \geq 1$. \tb{Let $L = \max \{ \| \bm{f}_{t} \|_{2} \; | \; t \geq 1\}$ and $\kappa = \max \{ \| \bm{x} \|_{2} \; | \; \bm{x} \in \XX\}$.}
Then 
\[ \dfrac{\sum_{t=1}^{T} t \langle \bm{f}_{t},\bm{x}_{t} \rangle -  \min_{\bm{x} \in \XX} \sum_{t=1}^{T} t \langle \bm{f}_{t},\bm{x} \rangle}{T(T+1)} = O \left(\kappa L /\sqrt{T}\right).\]
\end{theorem}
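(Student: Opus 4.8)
The plan is to translate the averaged internal update of \cbap{} into a running-sum (\rmp-style) update and then run a Blackwell-style potential argument, followed by a summation-by-parts step that is the actual source of the linear-averaging compatibility. Write $\bm{v}_t = (\langle \bm{f}_t,\bm{x}_t\rangle/\kappa,\,-\bm{f}_t)$ for the payoff vector and $\bm{c} = (\kappa,\bm{x}) \in \C$ for the lifted comparator associated with any $\bm{x}\in\XX$. First I would exploit two homogeneity facts: $\chp_{\cbap}$ is invariant under positive scaling of its argument, and $\pi_\C$ is positively homogeneous because $\C$ is a cone. With uniform weights $\omega_\tau=1$ and $S=t$, these let me rewrite the iterates via the un-normalized aggregate $\bm{w}_t = t\,\bm{u}_t$, which satisfies the clean recursion $\bm{w}_{t+1}=\pi_\C(\bm{w}_t+\bm{v}_{t+1})$ while leaving $\bm{x}_{t+1}=\chp_{\cbap}(\bm{w}_t)$ unchanged. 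Writing $\bm{w}_{t-1}=(\tilde w,\hat{\bm w})$ so that $\bm{x}_t=(\kappa/\tilde w)\hat{\bm w}$, a direct computation then yields the exact forcing identity $\langle \bm{w}_{t-1},\bm{v}_t\rangle=0$, since the two coordinate contributions cancel.

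Second, I would establish the two standard Blackwell estimates. Using nonexpansiveness of the projection onto $\C$ (which contains $\bm{0}$) together with the forcing identity, $\|\bm{w}_t\|^2 \le \|\bm{w}_{t-1}+\bm{v}_t\|^2 = \|\bm{w}_{t-1}\|^2+\|\bm{v}_t\|^2$, and since $\|\bm{v}_t\|^2\le 2L^2$ this telescopes to $\|\bm{w}_t\|\le \sqrt{2}\,L\sqrt{t}$. For the regret side, the key is that $\langle \bm{c},\bm{v}_t\rangle=\langle \bm{f}_t,\bm{x}_t-\bm{x}\rangle$ is exactly the instantaneous regret against $\bm{x}$, and that each projection only increases the inner product with $\bm{c}\in\C$: since $\bm{z}-\pi_\C(\bm{z})=\pi_{\C^\circ}(\bm{z})\in\C^\circ$, we have $\langle \bm{c},\pi_\C(\bm{z})\rangle \ge \langle \bm{c},\bm{z}\rangle$. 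Telescoping this gives $\langle \bm{c},\bm{w}_t\rangle \ge \sum_{\tau\le t}\langle \bm{c},\bm{v}_\tau\rangle$, hence the per-step bound $\langle \bm{c},\bm{v}_t\rangle \le \langle \bm{c},\bm{w}_t\rangle-\langle \bm{c},\bm{w}_{t-1}\rangle$, while $|\langle \bm{c},\bm{w}_t\rangle|\le \|\bm{c}\|\,\|\bm{w}_t\|\le 2\kappa L\sqrt{t}$ using $\|\bm{c}\|\le\sqrt{2}\,\kappa$.

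The hard part — and the step that genuinely distinguishes \cbap{} from \cba{} — is converting these $O(\sqrt{t})$ per-step estimates into a bound on the \emph{linearly weighted} regret $\sum_{t=1}^T t\langle \bm{c},\bm{v}_t\rangle$. Here I would apply Abel summation to $\sum_{t=1}^T t(a_t-a_{t-1})$ with $a_t=\langle \bm{c},\bm{w}_t\rangle$ and $a_0=0$, giving $\sum_{t=1}^T t\langle \bm{c},\bm{v}_t\rangle \le T a_T - \sum_{t=1}^{T-1} a_t$. The first term is at most $2\kappa L\,T^{3/2}$; the delicate point is the residual $-\sum_{t=1}^{T-1} a_t$, which needs a \emph{lower} bound on $\langle \bm{c},\bm{w}_t\rangle$ (the $a_t$ need not be nonnegative, as $\C$ need not be self-dual), supplied by $a_t\ge -2\kappa L\sqrt{t}$. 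Summing $\sum_{t\le T}\sqrt{t}=O(T^{3/2})$ controls both contributions, so $\sum_{t=1}^T t\langle \bm{c},\bm{v}_t\rangle = O(\kappa L\,T^{3/2})$ uniformly in $\bm{x}\in\XX$; dividing by $T(T+1)$ and maximizing over $\bm{x}$ gives the claimed $O(\kappa L/\sqrt{T})$ rate. I expect the main obstacle to lie precisely in making this summation-by-parts residual rigorous, since it is where the thresholding/projection of \cbap{} — which is exactly what keeps $\|\bm{w}_t\|$ at $O(\sqrt{t})$ throughout, rather than only on average — becomes indispensable.
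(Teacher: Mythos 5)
Your proof is correct, and in substance it runs parallel to the paper's argument (Appendix~\ref{app:proof-lin-avg}), but it is packaged differently in a way worth spelling out. Where you fix a lifted comparator $\bm{c}=(\kappa,\bm{x})\in\C$ and track the scalar potential $a_t=\langle\bm{c},\bm{w}_t\rangle$, the paper works with the vector-valued conic order $\leq_{\C^{\circ}}$: its inequality $\bm{v}_{t+1}\geq_{\C^{\circ}}(t+1)\bm{u}_{t+1}-t\bm{u}_{t}$ is exactly your per-step bound $\langle \bm{c},\bm{v}_{t+1}\rangle\le a_{t+1}-a_{t}$ asserted simultaneously for all $\bm{c}\in\C$, and your Abel summation is precisely its weighted telescoping (note $Ta_T=\langle\bm{c},T^2\bm{u}_T\rangle$ and $\sum_{t<T}a_t=\sum_{t<T}t\langle\bm{c},\bm{u}_t\rangle$). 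The one substantive divergence is the treatment of the residual $-\sum_{t=1}^{T-1}a_t$. The paper disposes of it via a structural fact specific to this lifted cone (Statement 4 of Lemma~\ref{lem:conic-opt}): because $\kappa=\max_{\bm{x}\in\XX}\|\bm{x}\|_2$, Cauchy--Schwarz gives $\langle\bm{z},\bm{z}'\rangle\ge 0$ for all $\bm{z},\bm{z}'\in\C$, i.e.\ $\bm{u}\in\C\Rightarrow-\bm{u}\in\C^{\circ}$, hence $a_t\ge 0$ and the residual can simply be dropped. So your worry that ``the $a_t$ need not be nonnegative'' is moot for this particular cone --- they are nonnegative --- but your workaround via the two-sided bound $|a_t|\le\|\bm{c}\|_2\|\bm{w}_t\|_2=O(\kappa L\sqrt{t})$ together with $\sum_{t\le T}\sqrt{t}=O(T^{3/2})$ is valid, costs only a constant factor, and is arguably more robust in that it never invokes the acuteness of $\C$. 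The remaining ingredients match the paper one-for-one: your inline derivation of $\|\bm{w}_t\|_2\le\sqrt{2}L\sqrt{t}$ (forcing identity plus nonexpansiveness of $\pi_{\C}$, which fixes $\bm{0}$) is the computation the paper outsources to Theorem~\ref{th:cbap-linear-averaging-both}, there phrased through the Moreau decomposition, and your homogeneity reduction to the unnormalized recursion $\bm{w}_{t+1}=\pi_{\C}(\bm{w}_t+\bm{v}_{t+1})$ is a clean equivalent of the paper's bookkeeping with the scaled aggregates $S_t\bm{u}_t$.
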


Note that in Theorem~\ref{th:cbap-linear-averaging-only-policy}, we have \textit{uniform} weights on the sequence of payoffs $\left( \bm{u}_t \right)_{t \geq 0}$, but \textit{linearly increasing} weights on  the sequence of decisions.  
The proof relies on properties specific to \cbap{}, and it does not extend to \cba{}. Numerically it also helps \cbap{} but not \cba{}.
In Appendix~\ref{app:proof-lin-avg}, we show that both \cba{} and \cbap{} achieve $O \left(\kappa L /\sqrt{T}\right)$ convergence rates when using a weighted average on \textit{both} the decisions and the payoffs  (Theorems~\ref{th:cba-linear-averaging-both}-\ref{th:cbap-linear-averaging-both}). 
In practice,  using linear averaging only on the decisions, as in Theorem \ref{th:cbap-linear-averaging-only-policy}, performs vastly better than linear averaging on both decisions and payoffs.  We present empirical evidence of this in Appendix~\ref{app:proof-lin-avg}. 

\tb{
We can compare the $O\left(\kappa L /\sqrt{T}\right)$ average regret for \cbap{} with the $O\left(\Omega L /\sqrt{T}\right)$ average regret for \ref{alg:OMD}~\citep{nemirovsky1983problem,BenTal-Nemirovski} and \ref{alg:FTRL}~\citep{abernethy2009competing,mcmahan2011follow}, where $\Omega = \max \{ \| \bm{x} - \bm{x}'\|_{2} | \bm{x},\bm{x}' \in \XX\}.$ We can always recenter $\XX$ to contain $\bm{0}$, in which case the bounds for OMD/FTRL and \cbap{} are equivalent since $\kappa \leq \Omega \leq 2 \kappa$. Note that the bound on the average regret for Optimistic OMD (\ref{alg:pred-OMD}, \cite{chiang2012online}) and Optimistic FTRL (\ref{alg:pred-FTRL}, \cite{rakhlin2013online})  is $O\left(\Omega^{2} L /T\right)$ in the game setup, a priori better than the bound for \cbap{} as regards the number of iterations $T$. Nonetheless, we will see in Section \ref{sec:simu} that the empirical performance of \cbap{} is better than that of $O(1/T)$ methods.  
A similar situation occurs for \rmp{} compared to \ref{alg:pred-OMD} and \ref{alg:pred-FTRL} for solving poker games~\citep{farina2019optimistic,kroer2018faster}.}

The following theorem gives the convergence rate of \cbap{} for solving saddle-points~\eqref{eq:spp},  based on our convergence rate on the regret of each player (Theorem \ref{th:cbap-linear-averaging-only-policy}). The proof is in Appendix \ref{app:proof-folk-final}.
\begin{theorem}\label{th:folk-final}
Let $\left( \bar{\bm{x}}_{T},\bar{\bm{y}}_{T} \right) =2 \sum_{t=1}^{T} t \left(\bm{x}_{t}, \bm{y}_{t} \right)/(T(T+1)),$ where $\left(\bm{x}_{t} \right)_{t \geq 0}, \left(\bm{y}_{t}\right)_{t \geq 0}$ are generated by the repeated game framework with \cbap{} with uniform weights: $\omega_{\tau} =1, \forall \; \tau \geq 1$.
\tb{Let $L = \max \{ L_{x},L_{y}\}$ defined in \eqref{eq:definition-Lx-Lx} and $\kappa = \max \{ \max \{\| \bm{x} \|_{2},\| \bm{y} \|_{2}\} \; | \; \bm{x} \in \XX,\bm{y} \in \YY\}$.}
Then 
\[ \max_{\bm{y} \in \YY} F(\bar{\bm{x}}_{T},\bm{y}) - \min_{\bm{x} \in \XX} F(\bm{x},\bar{\bm{y}}_{T}) = O \left(\kappa L /\sqrt{T} \right). \]
\end{theorem}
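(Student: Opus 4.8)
The plan is to prove Theorem~\ref{th:folk-final} by composing two ingredients: a linearly-weighted generalization of the folk theorem (Theorem~\ref{thm:folk theorem}), and the linear-averaging regret bound for \cbap{} already established in Theorem~\ref{th:cbap-linear-averaging-only-policy}. Since the latter absorbs all of the algorithm-specific work, the remaining task is essentially to produce a weighted duality-gap inequality and then check that the normalizations line up. First I would fix the weights $w_t = t$ and write $W = \sum_{t=1}^T w_t = T(T+1)/2$, so that the averaged iterates in the statement are exactly the $w_t$-weighted averages $\bar{\bm x}_T = \tfrac{1}{W}\sum_t w_t \bm x_t$ and $\bar{\bm y}_T = \tfrac{1}{W}\sum_t w_t \bm y_t$, because $2/(T(T+1)) = 1/W$. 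I would then introduce the linearly-weighted regrets
\[ R^{\mathrm{lin}}_{T,\bm x} = \sum_{t=1}^T t\,\langle \bm f_t, \bm x_t\rangle - \min_{\bm x \in \XX}\sum_{t=1}^T t\,\langle \bm f_t, \bm x\rangle, \qquad R^{\mathrm{lin}}_{T,\bm y} = \max_{\bm y \in \YY}\sum_{t=1}^T t\,\langle \bm g_t, \bm y\rangle - \sum_{t=1}^T t\,\langle \bm g_t, \bm y_t\rangle. \]

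Next I would prove the weighted folk theorem. Using that $F$ is subdifferentiable convex-concave, the subgradient inequalities give, for every fixed $\bm x \in \XX$ and $\bm y \in \YY$, the per-iteration estimate $F(\bm x_t, \bm y) - F(\bm x, \bm y_t) \leq \langle \bm f_t, \bm x_t - \bm x\rangle + \langle \bm g_t, \bm y - \bm y_t\rangle$, where the cross term $F(\bm x_t, \bm y_t)$ cancels. Multiplying by $w_t$, summing over $t$, and then applying Jensen's inequality with weights $w_t/W$ to the convex map $\bm x \mapsto F(\bm x, \bm y)$ and the concave map $\bm y \mapsto F(\bm x, \bm y)$, I obtain $W\left[F(\bar{\bm x}_T, \bm y) - F(\bm x, \bar{\bm y}_T)\right] \leq \sum_t w_t\langle \bm f_t, \bm x_t - \bm x\rangle + \sum_t w_t\langle \bm g_t, \bm y - \bm y_t\rangle$. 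Evaluating at the maximizing $\bm y$ on the left and the minimizing $\bm x$ on the left, and bounding the two sums on the right by $R^{\mathrm{lin}}_{T,\bm x}$ and $R^{\mathrm{lin}}_{T,\bm y}$ respectively, yields the weighted analogue of Theorem~\ref{thm:folk theorem}:
\[ \max_{\bm y \in \YY} F(\bar{\bm x}_T, \bm y) - \min_{\bm x \in \XX} F(\bm x, \bar{\bm y}_T) \leq \frac{R^{\mathrm{lin}}_{T,\bm x} + R^{\mathrm{lin}}_{T,\bm y}}{W} = \frac{2\left(R^{\mathrm{lin}}_{T,\bm x} + R^{\mathrm{lin}}_{T,\bm y}\right)}{T(T+1)}. \]

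Finally I would invoke Theorem~\ref{th:cbap-linear-averaging-only-policy} for each player. That theorem states precisely that $R^{\mathrm{lin}}_{T,\bm x}/(T(T+1)) = O(\kappa L/\sqrt{T})$; applying it to $\YY$ with the negated loss $-\bm g_t$ (so that the maximizing player is treated as a \cbap{} minimizer, with $\|-\bm g_t\|_2 \leq L_y \leq L$) gives the same bound for $R^{\mathrm{lin}}_{T,\bm y}/(T(T+1))$. Substituting both into the displayed inequality collapses the $T(T+1)$ factors and leaves $O(\kappa L/\sqrt{T})$, as claimed. The main obstacle is not in this composition: the genuine difficulty has been front-loaded into Theorem~\ref{th:cbap-linear-averaging-only-policy}, which is used here as a black box. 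What remains is bookkeeping: getting the signs right in the subgradient step so that both players' regrets appear correctly, confirming that the weighted Jensen step is simultaneously valid for the convex and the concave argument, and checking that the shared constants $\kappa = \max\{\|\bm x\|_2, \|\bm y\|_2\}$ and $L = \max\{L_x, L_y\}$ dominate the per-player quantities so the two $O(\kappa L/\sqrt{T})$ estimates can be merged into one.
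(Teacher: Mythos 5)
Your proposal is correct and follows essentially the same route as the paper's proof in Appendix C: a weighted Jensen step exploiting convex-concavity, reduction of the weighted duality gap to the two players' linearly-weighted regrets via subgradient inequalities, and then invoking Theorem~\ref{th:cbap-linear-averaging-only-policy} as a black box for each player. The only cosmetic difference is that the paper adds and subtracts the cross term $\sum_t \omega_t F(\bm{x}_t,\bm{y}_t)$ and bounds the two resulting differences separately, whereas you cancel $F(\bm{x}_t,\bm{y}_t)$ in a single per-iteration inequality (and you make explicit the loss-negation needed to apply the regret bound to the maximizing player, a detail the paper leaves implicit); both amount to the same argument.
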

\subsection{Efficient implementations of \cbap{}}\label{sec:efficient-projection}
To obtain an implementation of \cbap{} and \cba, we need to efficiently resolve the functions $\chp_{\cbap}{}$ and $\upp_{\cbap{}}$. In particular, we need to compute $\pi_{\C}(\bm{u})$, the orthogonal projection of $\bm{u}$ onto the cone $\C$, where $\C  =\textrm{cone}(\{\kappa\} \times \XX)$:
\begin{equation}\label{eq:projection-onto-C}
\pi_{\C}(\bm{u}) \in \arg \min_{\bm{y} \in\C} \| \bm{y} - \bm{u} \|_{2}^{2}.
\end{equation}
Even for \cba{} this problem must be resolved, since \cite{abernethy2011blackwell} did not study whether~\eqref{eq:projection-onto-C} can be efficiently solved.
It turns out that \eqref{eq:projection-onto-C} can be computed in closed-form or quasi closed-form for many  decision sets $\XX$ of interest.  Interestingly,  parts of the proofs rely on \textit{Moreau's Decomposition Theorem}~\citep{combettes2013moreau}, which states that $\pi_{\C}(\bm{u})$ can be recovered from $\pi_{\C^{\circ}}(\bm{u})$ and vice-versa, because $\pi_{\C}(\bm{u})+\pi_{\C^{\circ}}(\bm{u})=\bm{u}.$ We present the detailed complexity results and the proofs in Appendix \ref{app:proofs-of-projections}.  

\paragraph{Simplex} \tb{$\XX = \Delta(n)$ is the classical setting used for matrix games. Also, for  extensive-form games, CFR decomposes the decision sets (treeplexes) into a set of regret minimization problems over the simplex~\citep{farina2019online}.} Here, $n$ is the number of actions of a player and $\bm{x} \in \Delta(n)$ represents a randomized strategy. In this case, $\pi_{\C}(\bm{u})$ can be computed in $O(n\log(n))$.  
\tb{
Note that \rmm{} and \rmp{} are obtained by choosing a suboptimal solution to \eqref{eq:projection-onto-C},  avoiding the $O(n\log(n))$ sorting operation, whereas \cba{} and \cbap{} choose optimally (see Appendix \ref{app:proofs-of-projections}). Thus, \rmm{} and \rmp{} can be seen as approximate versions of \cba{} and \cbap{}, where \eqref{eq:projection-onto-C} is solved approximatively at every iteration.}
 In our numerical experiments,  we will see that \cbap{} slightly outperforms \rmp{} and CFR$^+$ in terms of iteration count.
\paragraph{$\ell_{p}$ balls} This is when $\XX = \{ \bm{x} \in \R^{n} \; | \ ; \| \bm{x} \|_{p} \leq 1\}$ with $p  \geq 1$ or $p=\infty$. This is of interest for instance in distributionally robust optimization \citep{ben2015oracle,namkoong2016stochastic},  $\ell_{\infty}$ regression \citep{sidford2018coordinate} and saddle-point reformulation of Markov Decision Process \citep{jin2020efficiently}.  For $p=2$, we can compute $\pi_{\C}(\bm{u})$ in closed-form, i.e., in $O(n)$ arithmetic operations. For $p \in \{1,\infty\}$,  we can compute $\pi_{\C}(\bm{u})$ in $O(n \log(n))$ arithmetic operations using a sorting algorithm.
\paragraph{Ellipsoidal confidence region in the simplex} Here, $\XX$ is an \textit{ellipsoidal subregion of the simplex}, defined as 
$\XX = \{ \bm{x} \in \Delta(n) \; | \; \| \bm{x} - \bm{x}_{0} \|_{2} \leq \epsilon_{x} \}$. 
This type of decision set is widely used because it is associated with confidence regions when estimating a probability distribution from observed data~\citep{Iyengar,bertsimas2019probabilistic}. It can also be used in the Bellman update for robust Markov Decision Process \citep{Iyengar,Kuhn,GGC}. 
We also assume that the confidence region is ``entirely contained in the simplex'': $\{ \bm{x} \in \R^{n} | \bm{x}^{\top}\bm{e}=1 \} \bigcap \{ \bm{x} \in \R^{n} \; | \; \| \bm{x}- \bm{x}_{0} \|_{2} \leq \epsilon_{x} \} \subseteq \Delta(n)$, to avoid degenerate components. In this case, using a change of basis we show that it is possible to compute $\pi_{\C}(\bm{u})$ in closed-form, i.e., in $O(n)$ arithmetic operations.
\tb{
\paragraph{Other potential sets of interests} Other important decision sets include sets based on Kullback-Leibler divergence $\{ \bm{x} \in \Delta(n) \; | \; KL\left(\bm{x},\bm{x}_{0} \right) \leq \epsilon_{x} \}$, or, more generally, $\phi$-divergence~\citep{ben2013robust}. For these sets, we did not find a closed-form solution to the projection problem \eqref{eq:projection-onto-C}. Still, as long as the domain $\XX$ is a convex set, computing $\pi_{\C}(\bm{u})$ remains a convex problem, and it can be solved efficiently with solvers, although this results in a slower algorithm than with closed-form computations of $\pi_{\C}(\bm{u})$. 
}
\section{Numerical experiments}\label{sec:simu}
In this section we  investigate the practical performances of our algorithms on several instances of saddle-point problems. We start by comparing \cbap{} with \rmp{} in the matrix  and extensive-form games setting.  We then turn to comparing our algorithms on instances from the distributionally robust optimization literature. The code for all experiments is available in the supplemental material.

   \vspace{-8mm}

\subsection{Matrix games on the simplex and Extensive-Form Games}
\label{sec:simu-bilinear-games-on-simplex}
   \vspace{-5mm}

Since the motivation for \cbap{} is to obtain the strong empirical performances of \rmp{} and CFR$^+$ on other decision sets than the simplex, we start by checking that \cbap{} indeed provide comparable performance on simplex settings. 
We compare these methods on matrix games
\[ \min_{\bm{x} \in \Delta(n) } \max_{\bm{y} \in \Delta(m) } \langle \bm{x},\bm{Ay} \rangle,\]
where $\bm{A}$ is the matrix of payoff, and on extensive-form games (EFGs). 
EFGs can also be written as SPPs with bilinear objective and $\XX,\YY$ polytopes encoding the players' space of sequential strategies~\citep{stengel1996efficient}.  
EFGs can be solved via simplex-based regret minimization by using the counterfactual regret minimization (CFR) framework to decompose regrets into local regrets at each simplex. 
Explaining CFR is beyond the scope of this work; we point the reader to~\citep{zinkevich2007regret} or newer explanations~\citep{farina2019regret,farina2019online}.
For matrix games, we generate 70 synthetic $10$-dimensional matrix games with $A_{ij} \sim U[0,1]$ and  compare the most efficient algorithms for matrix games with linear averaging: \cbap{} and \rmp. \tb{We also compare with two other scale-free no-regret algorithms, AdaHedge~\citep{de2014follow} and AdaFTRL~\citep{orabona2015scale}.}
Figure~\ref{fig:subfig:matrix-uniform-parameters} presents the duality gap of the current solutions vs.  the number of steps.
Here, both \cbap{} and \rmp{} use \emph{alternation}, which is a trick that is well-known to improve the performances of \rmp{}~\citep{tammelin2015solving}, where the repeated game framework is changed such that players take turns updating their strategies, rather than performing these updates simultaneously, see Appendix \ref{app:numerical-setup-details} for details.\footnote{We note that \rmp{} is guaranteed to retain its convergence rate under alternation. In contrast, we leave resolving this property for \cbap{} to future work.}

For EFGs, we compare \cbap and CFR$^+$ on many poker AI benchmark instances, including {\sf Leduc, Kuhn, search games} and {\sf sheriff} (see  \citet{farina2021faster} for game descriptions). We present our results in Figures \ref{fig:subfig:efg_leduc_2pl_9ranks}-\ref{fig:subfig:efg_leduc_2pl_5ranks}.
Additional details and experiments for EFGs are presented in Appendix~\ref{app:simu-EFG}.
Overall, we see in Figure~\ref{fig:matrix-games-simplex} that \cbap may slightly outperform \rmp{} and CFR$^+$, two of the strongest algorithms for matrix games and EFGs, which were shown to achieve the best empirical performances compared to a wide range of algorithms, including Hedge and other first-order methods~\citep{kroer2020ieor8100,kroer2018solving,farina2019optimistic}.  For matrix games, AdaHedge and AdaFTRL are both outperformed by \rmp{} and \cbap ; we present more experiments to compare \rmp{} and \cbap{} in Appendix \ref{app:comparing-rm-cba}, and more experiments with matrix games in Appendix \ref{app:simu-matrix-games}. Recall that our goal is to generalize these strong performance to other settings: we present our numerical experiments for solving distributionally robust optimization problems in the next section. 

\begin{figure}[H]
       \vspace{-2mm}
\centering
     \begin{subfigure}{0.24\textwidth}
\includegraphics[width=1.0\linewidth]{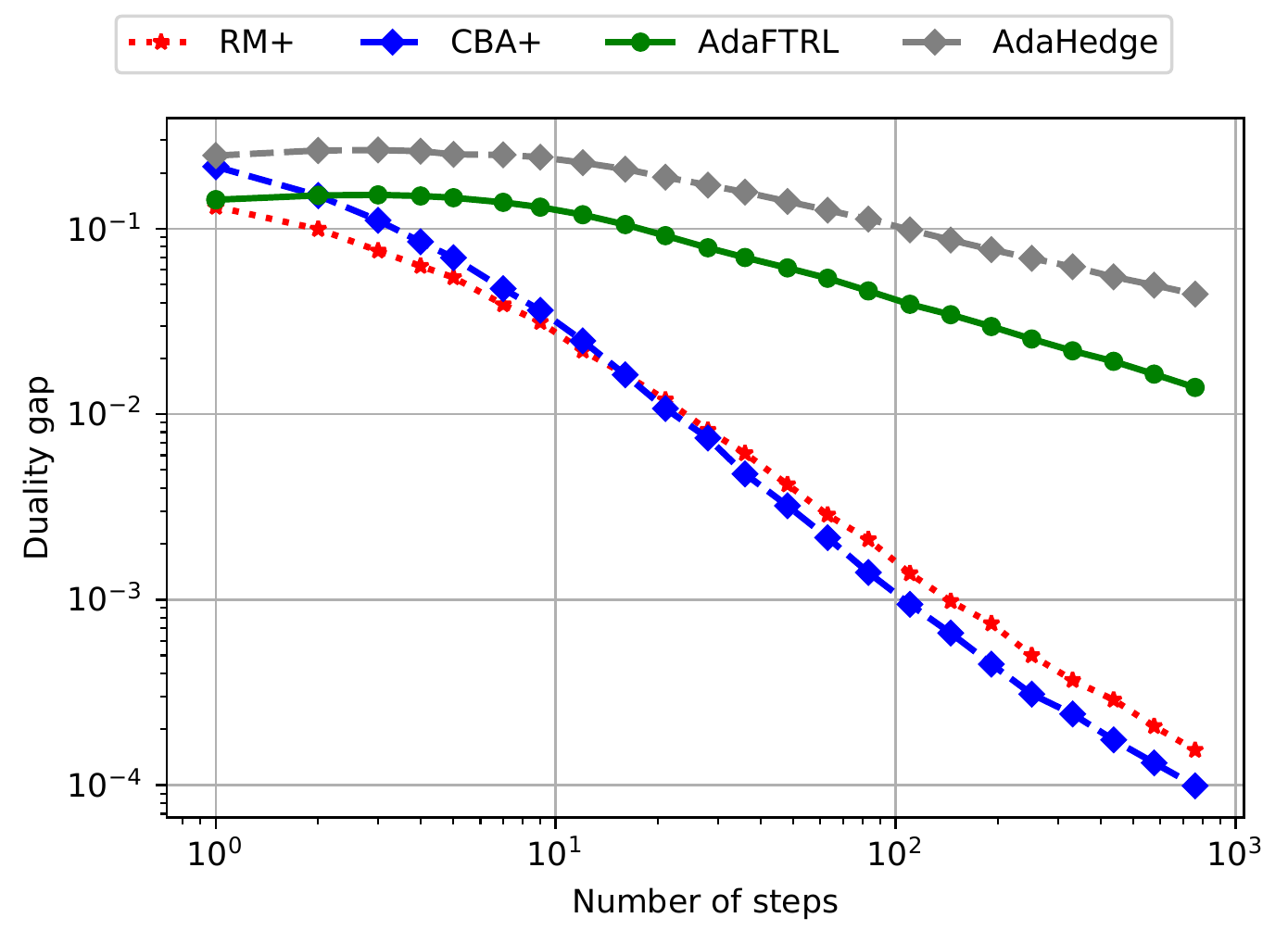}
         \caption{Matrix game.}\label{fig:subfig:matrix-uniform-parameters}
  \end{subfigure}
     \begin{subfigure}{0.24\textwidth}
         \includegraphics[width=1.0\linewidth]{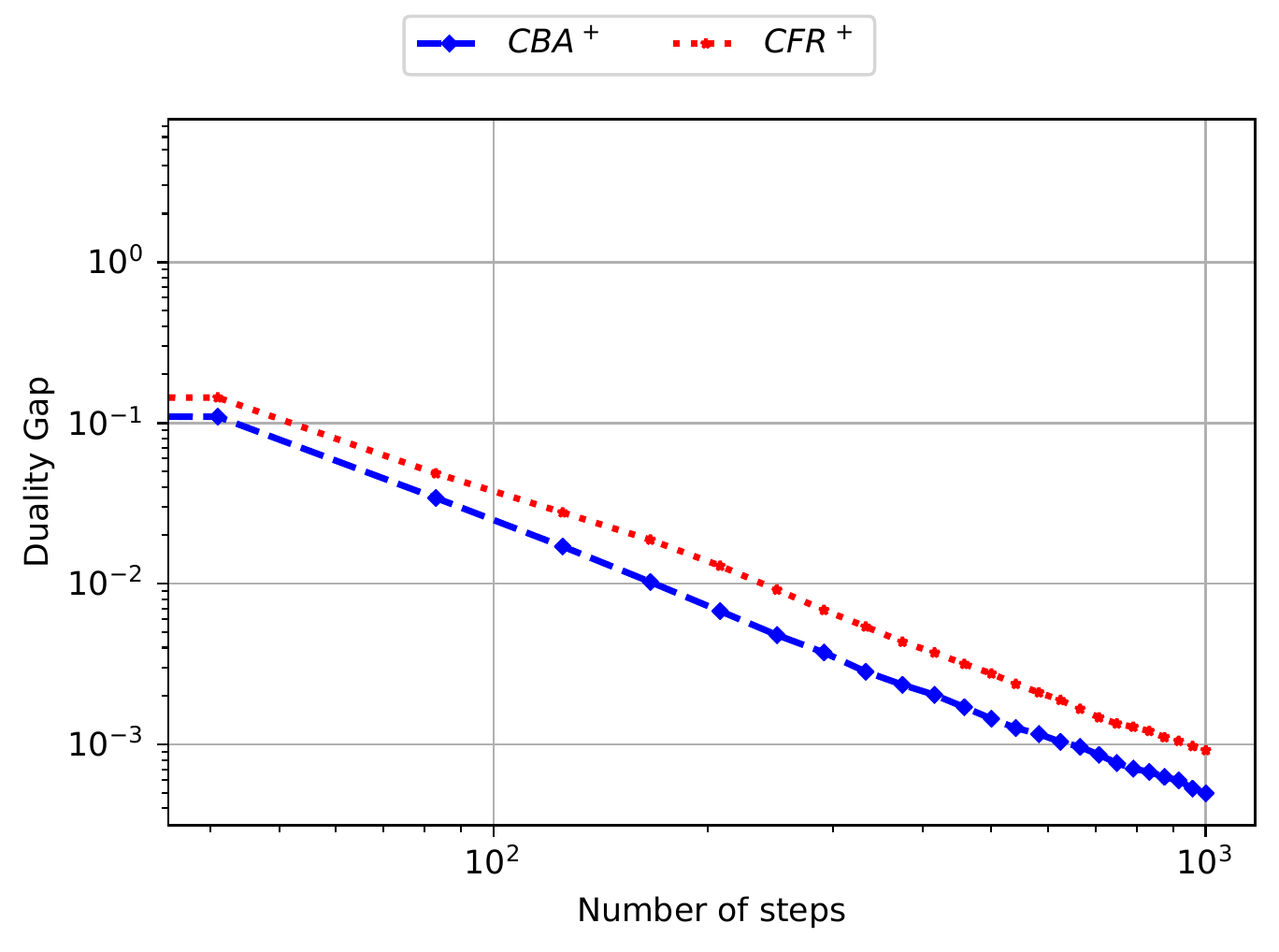}
         \caption{EFG: Leduc, 2 players, 9 ranks.}\label{fig:subfig:efg_leduc_2pl_9ranks}
  \end{subfigure}
     \begin{subfigure}{0.24\textwidth}
         \includegraphics[width=1.0\linewidth]{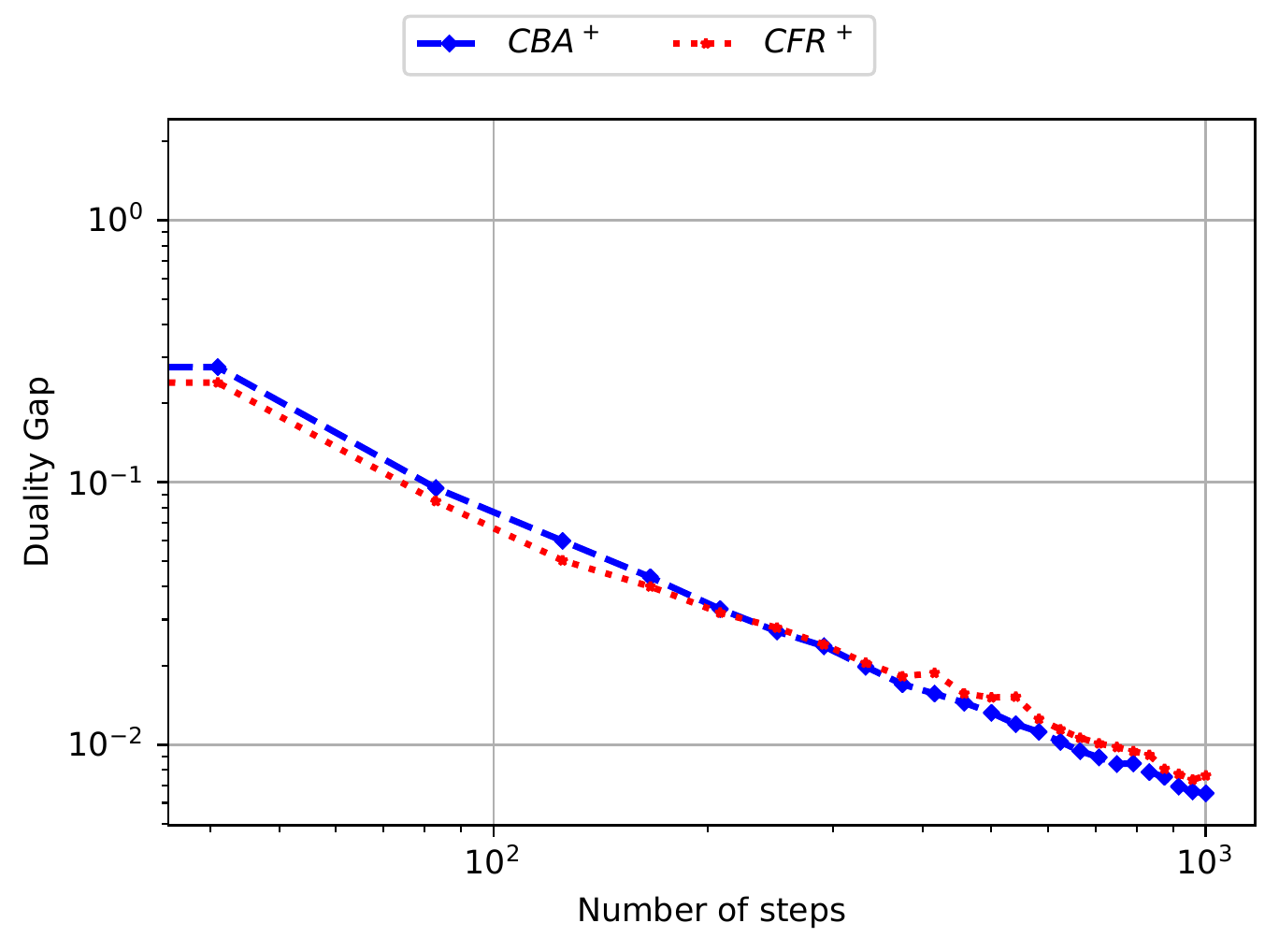}
         \caption{EFG: Battleship, 3 turns, 1 ship.}\label{fig:subfig:efg_bs_0sum_2x3_3turns_1shipL2v4}
  \end{subfigure}
       \begin{subfigure}{0.24\textwidth}
         \includegraphics[width=1.0\linewidth]{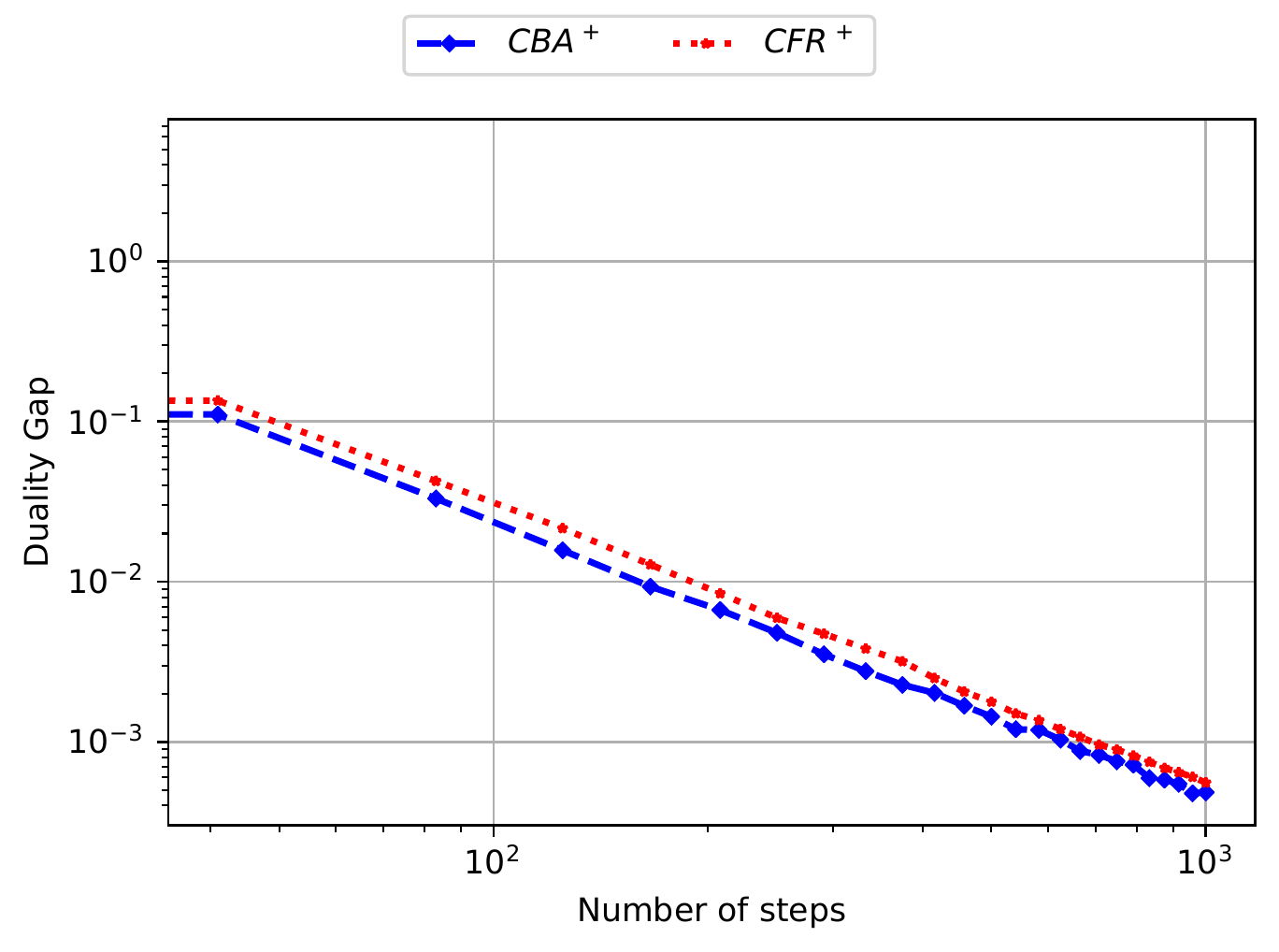}
         \caption{EFG: Leduc, 2 players, 5 ranks.}\label{fig:subfig:efg_leduc_2pl_5ranks}
  \end{subfigure}
       \vspace{-2mm}
  \caption{Comparison of \cbap{} with \rmp{} and CFR$^{+}$ on matrix games and EFGs. }
       \vspace{-4mm}
  \label{fig:matrix-games-simplex}
\end{figure}

\subsection{Distributionally Robust Optimization}
\paragraph{Problem setup}
Broadly speaking, DRO attempts to exploit partial knowledge of the statistical properties of the model parameters to obtain risk-averse optimal solutions~\citep{rahimian2019distributionally}.
We focus on the following instance of distributionally robust classification with logistic losses \citep{namkoong2016stochastic,ben2015oracle}.  There are $m$ observed feature-label pairs $\left( \bm{a}_{i},b_{i} \right) \in \R^{n} \times \{-1,1\}$,  and we want to solve 
\begin{equation}\label{eq:DRO-empirical}
\min_{\bm{x} \in \R^{n}, \| \bm{x} - \bm{x}_{0}\|_{2} \leq R} \max_{\bm{y} \in \Delta(m), \| \bm{y} - \bm{y}_{0} \|_{2}^{2} \leq \lambda} \sum_{i=1}^{m} y_{i}\ell_{i}(\bm{x}),
\end{equation}
where $\ell_{i}(\bm{x}) = \log(1+\exp(-b_{i}\bm{a}^{\top}_{i}\bm{x}))$. The formulation \eqref{eq:DRO-empirical} takes a worst-case approach to put more weight on misclassified observations and provides some statistical guarantees, e.g., it can be seen as a convex regularization of standard empirical risk minimization instances~\citep{duchi2021statistics}.

We compare \cbap{} (with linear averaging and alternation) with Online Mirror Descent (\ref{alg:OMD}), Optimistic OMD (\ref{alg:pred-OMD}), Follow-The-Regularized-Leader (\ref{alg:FTRL}) and Optimistic FTRL (\ref{alg:pred-FTRL}).  We provide a detailed presentation of our implementations of these algorithms in Appendix \ref{app:OMD-etc}. 
We compare the performances of these algorithms with \cbap{} on two synthetic data sets and four real data sets.  
We use linear averaging on decisions for all algorithms,  and parameters $\bm{x}_{0} = \bm{0},R=10,\bm{y}_{0} = \left(1,...,1\right)/m,\lambda = 1/2m$ in \cref{eq:DRO-empirical}.

\paragraph{Synthetic and real instances}
For the synthetic classification instances, we generate an optimal $\bm{x}^{*} \in \R^{n}$,  sample $\bm{a}_{i} \sim N(0,\bm{I})$  for $i \in \{1,...,m\}$,  set labels $b_{i} = \text{sign} (\bm{a}^{\top}_{i}\bm{x}^{*})$, and then  we flip $10 \%$ of them.  
For the real classification instances, we use the following data sets from the {\sf libsvm} website\footnote{https://www.csie.ntu.edu.tw/$\sim$cjlin/libsvmtools/datasets/}: \textit{adult}, {\em australian}, {\em splice}, {\em madelon}. Details about the empirical setting, the data sets and additional numerical experiments are presented in Appendix \ref{app:details-simu}. 

\paragraph{Choice of step sizes} One of the main motivation for \cbap{} is to obtain a \textit{parameter-free} algorithm. Choosing a {\it fixed} step size $\eta$ for the other algorithms requires knowing a bound $L$ on the norm of the instantaneous payoffs (see Appendix \ref{app:OMD-implementation} for our derivations of this upper bound).  
This is a major limitation in practice: these bounds may be very conservative, leading to small step sizes.  
We highlight this by showing the performance of all four algorithms, for various fixed step sizes $\eta = \alpha \times \eta_{\sf th}$, where $\alpha \in \{1,100,1,000,10,000\}$ is a multiplier and $\eta_{\sf th}$ is the theoretical step size which guarantees the convergence of the algorithms for each instance. \tb{We detail the computation of $\eta_{\sf th}$ in Appendix \ref{app:OMD-implementation}.}
We present the results of our numerical experiments on synthetic and real data sets in Figure \ref{fig:simu-DRO}. Additional simulations with adaptive step sizes $\eta_{t} = 1 / \sqrt{\sum_{\tau=1}^{t-1} \| \bm{f}_{\tau} \|_{2}^{2}}$~\citep{orabona2019modern} are presented in Figure \ref{fig:simu-DRO-adaptive} and in Appendix~\ref{app:details-simu}.
\begin{figure}
\includegraphics[width=1.0\linewidth]{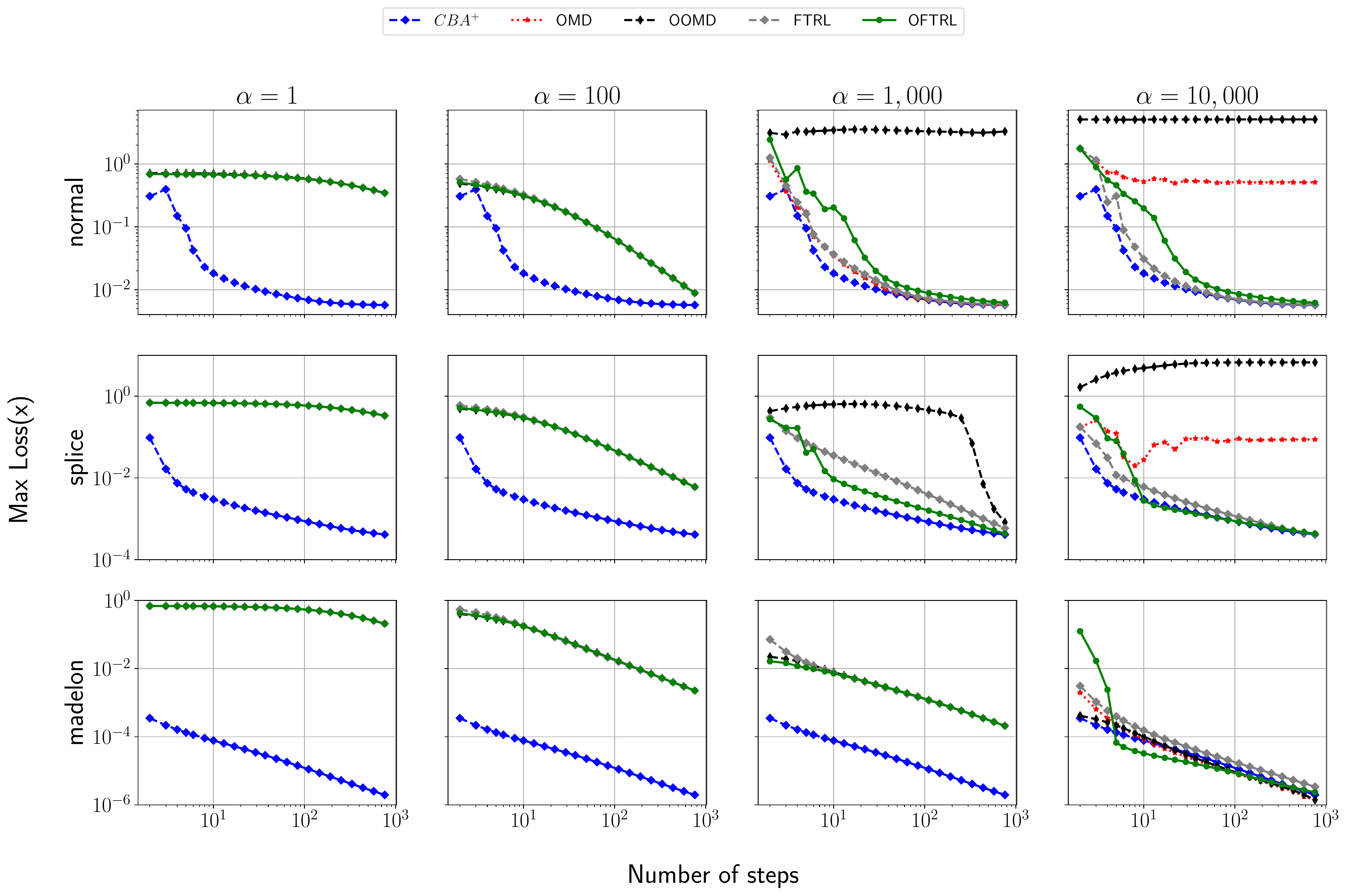}
       \vspace{-2mm}
\caption{Comparisons of the performances of \cbap{} with \ref{alg:OMD},\ref{alg:FTRL},\ref{alg:pred-OMD} and \ref{alg:pred-FTRL} with fixed step sizes, on synthetic (with \textit{normal} distribution) and real data sets (\textit{splice} and \textit{madelon}).}\label{fig:simu-DRO}
       \vspace{-4mm}
\end{figure}

\paragraph{Results and discussion} In Figure \ref{fig:simu-DRO},  we present the worst-case loss of the current solution $\bar{\bm{x}}_{T}$ in terms of the number of steps $T$.  We see that when the step sizes is chosen as the theoretical step sizes guaranteeing the convergence of the non-parameter free algorithms ($\alpha=1$), \cbap{} vastly outperforms all of the algorithms.  When we take more aggressive step sizes, the non-parameter-free algorithms become more competitive. For instance, when $\alpha=1,000$,  \ref{alg:OMD}, \ref{alg:FTRL} and \ref{alg:pred-FTRL} are competitive with \cbap{} for the experiments on synthetic data sets.  However, for this same instance and $\alpha=1,000$, \ref{alg:pred-OMD} diverges, because  the step sizes are far greater than the theoretical step sizes guaranteeing convergence.  At $\alpha=10,000$,  both \ref{alg:OMD} and \ref{alg:pred-OMD} diverge. 
The same type of performances also hold for the \textit{splice} data set.  Finally, for the \textit{madelon} data set, the non parameter-free algorithms start to be competitive with \cbap{} only when $\alpha = 10,000$.  Again, we note that this range of step sizes $\eta$ is completely outside the values $\eta_{\sf th}$ that guarantee convergence of the algorithms, \tb{ and fine-tuning the algorithms is time- and resource-consuming. In contrast, \cbap{} can be used without wasting time on exploring and finding the best convergence rates, and with confidence in the convergence of the algorithm.} Similar observations hold for adaptive step sizes (see Figure \ref{fig:simu-DRO-adaptive} and Appendix \ref{app:details-simu}). The overall poor performances of the optimistic methods (compared to their $O(1/T)$ average regret guarantees) may reflect their sensibility to the choice of the step sizes.
Additional experiments in Appendix~\ref{app:details-simu} with other real and synthetic EFG and DRO instances show the robustness of the strong performances of \cbap{} across additional problem instances. 
\begin{figure}
\includegraphics[width=1.0\linewidth]{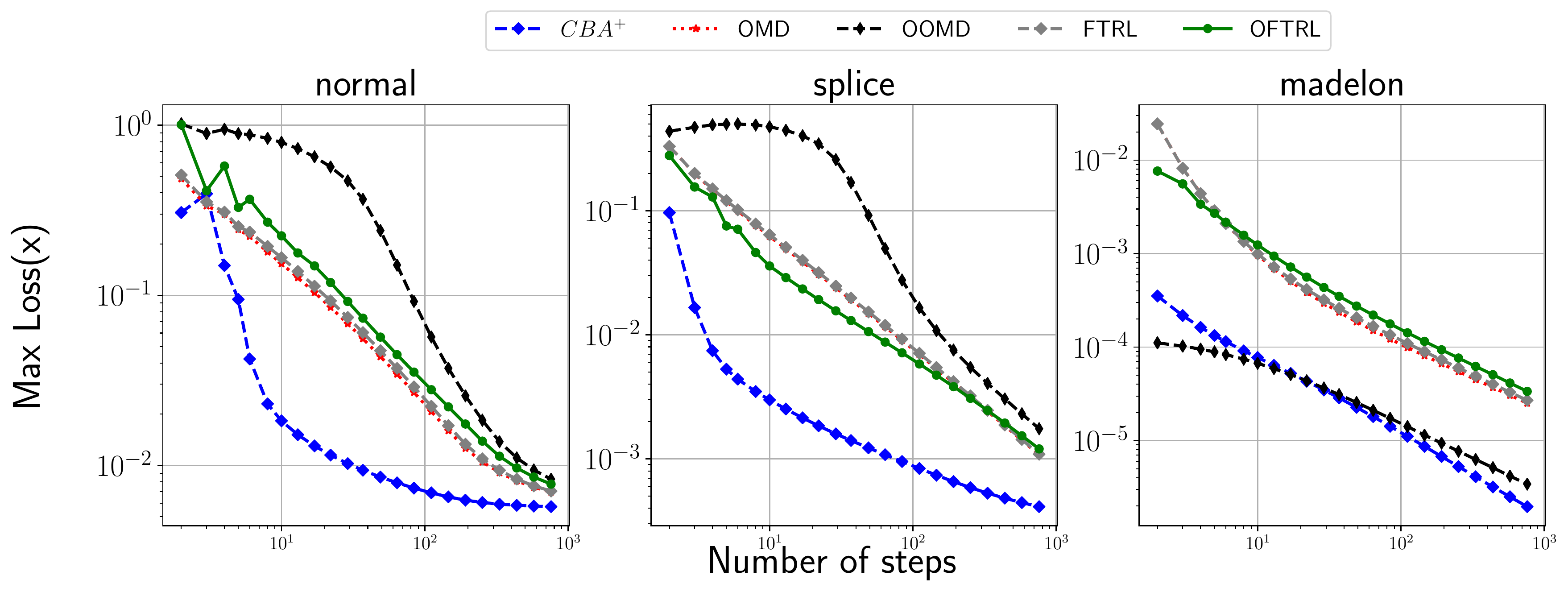}
       \vspace{-2mm}
\caption{Comparisons of the performances of \cbap{}, \ref{alg:OMD},\ref{alg:FTRL},\ref{alg:pred-OMD} and \ref{alg:pred-FTRL} with adaptive step sizes, on synthetic (with \textit{normal} distribution) and real data sets (\textit{splice} and \textit{madelon}).}\label{fig:simu-DRO-adaptive}
       \vspace{-4mm}
\end{figure}

\tb{
\paragraph{Running times compared to \cbap} We would like to emphasize that all of our figures show the {\it number of steps} on the $x$-axis, and not the actual running times of the algorithms. Overall, \cbap{} converges to an optimal solution to the DRO instance \eqref{eq:DRO-empirical} vastly faster than the other algorithms. In particular, empirically, \cbap{} is 2x-2.5x faster than \ref{alg:OMD}, \ref{alg:FTRL} and \ref{alg:pred-FTRL}, and 3x-4x faster than \ref{alg:pred-OMD}.  This is because
 \ref{alg:OMD}, \ref{alg:FTRL},  \ref{alg:pred-OMD}, and \ref{alg:pred-FTRL}  require binary searches at each step, see Appendix \ref{app:OMD-etc}.  The functions used in the binary searches themselves require solving an optimization program (an orthogonal projection onto the simplex, see \eqref{eq:q-mu}) at each evaluation.  Even though computing the orthogonal projection of a vector onto the simplex can be done in $O(n \log(n))$, this results in slower overall running time, compared to \cbap{} with (quasi) closed-form updates at each step.  The situation is even worse for \ref{alg:pred-OMD}, which requires two proximal updates at each iteration. 
We acknowledge that the same holds for \cbap{} compared to \rmp. In particular, \cbap{} is slightly slower than \rmp, because of the computation of $\pi_{\C}(\bm{u})$ in $O(n\log(n))$ operations at every iteration.} 

\section{Conclusion}
We have introduced \cbap, a new algorithm for convex-concave saddle-point solving, that  is 1) simple to implement for many practical decision sets,  2) completely parameter-free and does not attempt to lear any step sizes, and 3) competitive with, or even better than, state-of-the-art approaches for the best choices of parameters,  both for matrix games, extensive-form games, and distributionally robust instances. \tb{Our paper is based on Blackwell approachability, which has been used to achieved important breakthroughs in poker AI in recent years, and we hope to generalize the use and implementation of this framework to other important problem instances.} Interesting future directions of research include developing a theoretical understanding of the improvements related to alternation in our setting,  designing efficient implementations for other widespread decision sets (e.g., based on Kullback-Leibler divergence or $\phi$-divergence),  and novel accelerated versions based on strong convex-concavity or optimistim.
\paragraph{Societal impact}
Our work enables faster  and simpler computation of  solutions of saddle-point problems, which have become widely used for applications in the industry.  There is a priori no direct negative societal consequence to this work, since our methods simply return the same solutions as previous algorithms but in a simpler and more efficient way.
{
\small 
\bibliographystyle{plainnat} 
\bibliography{FOM_RMDP}
}

\appendix

\section{Comparison to \citet{shimkin2016online} and \citet{farina2021faster}}
The focus of our paper is on developing new algorithms for convex-concave saddle-point solving via Blackwell approachability algorithms for sets beyond the simplex domain. This is what motivated the development of \cbap, which attempts to generalize the ideas from \rmp{} and \cfrp{} beyond simplex settings.
A complementary question is to consider the second construction of \citet{abernethy2011blackwell}, which is a way to convert a no-regret algorithm into an approachability algorithm. At a very high level, that construction ends up using the no-regret algorithm to select which hyperplane to force, and performing regret minimization on the choice of hyperplane.
An observation made by both \citet{shimkin2016online} and \citet{farina2021faster} is that if one uses FTRL with the Euclidean regularizer, then it corresponds to Blackwell's original approachability algorithm, and thus to regret matching on the simplex.
This begs the question of what happens if one uses a different algorithm from FTRL. The most natural substitute would be OMD with a Bregman divergence derived from the Euclidean distance, which is also known as online gradient descent (OGD).

\citet{shimkin2016online} considers OGD on a variation of the original construction of \citet{abernethy2011blackwell} with the simplex as the decision set. This setup (and the original setup from \citet{abernethy2011blackwell}) uses (a subset of) the unit ball as the feasible set for the regret minimizer, and thus OGD ends up projecting the cumulated payoff vector onto the unit ball after every iteration. Therefore, the OGD setup by \citet{shimkin2016online} yields an algorithm that is reminiscent of \rmp, but where we repeatedly renormalize the cumulated payoff vector. One consequence of this is that the stepsize used to add new payoff vectors becomes important.
\citet{farina2021faster} show that the \citet{abernethy2011blackwell} construction can be extended to allow the regret minimizer to use any decision set $\mathcal D$ such that $\mathcal K \subseteq \mathcal D \subseteq \mathcal S^\circ$, where $\mathcal{S},\mathcal{S}^\circ$ is the target set and its polar cone, and $\mathcal K$ is $\mathcal S^\circ$ intersected with the unit ball. Then, \citet{farina2021faster} consider the simplex regret-minimization setting, and show that OGD instantiated on $\mathbb R_+^n = \mathcal S^\circ$ is equivalent to \rmp.

\cbap{} does not generalize either of the two simplex approaches above. For \citet{shimkin2016online}, this can be seen because of the projection and subsequent dependence on stepsize required by \citet{shimkin2016online}'s construction.
For \citet{farina2021faster}, this can be seen by the fact that they obtain \rmp{} when applying their approach to the simplex setting, and \cbap{} differs from \rmp{} (in fact, that paper does not attempt to derive new unaccelerated regret minimizers; the goal is to design accelerated, or predictive/optimistic, variants of RM and \rmp).
However, an alternative derivation of \cbap{} can be accomplished by using the generalization from \citet{farina2021faster} where OGD is run with the decision set $\mathcal{S}^\circ$. Instead of applying OGD on the no-regret formulation of the Blackwell formulation of regret-minimization on the simplex with target set $\mathcal{S}=\mathbb R_-^n$ as in \citet{farina2021faster}, we can apply the OGD-setup of \citet{farina2021faster} to the Blackwell formulation of a general no-regret problem with decision set $\mathcal X$ and target set $\mathcal C^\circ$ where $\mathcal C = \textrm{cone}(\{\kappa\}\times \mathcal X)$.
Then, we would get an indirect proof of correctness of the \cbap{} algorithm through the correctness of OGD and the two layers of reduction from regret minimization to Blackwell approachability to regret minimization. However, we believe that this approach is significantly less intuitive than understanding \cbap{} directly in terms of its properties as a Blackwell approachability algorithm.

\section{Proofs of Theorem \ref{th:cbap-linear-averaging-only-policy} }\label{app:proof-lin-avg}
\paragraph{Notations and classical results in conic optimization} We make use of the following facts. We provide a proof here for completeness.
\begin{lemma}\label{lem:conic-opt}
Let $\C \subset \R^{n+1} $ a closed convex cone and $\C^{\circ}$ its polar.
\begin{enumerate}
\item If $\bm{u} \in \R^{n+1}$, then $\bm{u} - \pi_{\C^{\circ}}(\bm{u}) = \pi_{\C}(\bm{u}) \in \C$,$\langle \bm{u} - \pi_{\C^{\circ}}(\bm{u}),\pi_{\C^{\circ}}(\bm{u}) \rangle = 0,$ and $\| \bm{u} - \pi_{\C^{\circ}}(\bm{u}) \|_{2} \leq \| \bm{u} \|_{2}$. \label{lem:statement:u-minus-proj-in-C-polar}
\item If $\bm{u} \in \R^{n+1}$ then 
\[ d(\bm{u},\C)   = \max_{\bm{w} \in \C^{\circ} \bigcap B_{2}(1) } \langle\bm{u},\bm{w} \rangle,\]
\label{lem:statement:dist-to-cone}
where $B_{2}(1) = \{ \bm{w} \in \R^{n+1} \; | \; \| \bm{w} \|_{2} \leq 1\}$.
\item If $\bm{u} \in \C$, then $d(\bm{u},\C^{\circ}) = \| \bm{u} \|_{2}$. \label{lem:statement:norm2-C}
\item Assume that $\C = \textrm{cone}(\{ \kappa \} \times \XX)$ with $\XX \subset \R^{n}$ convex compact and $\kappa = \max_{\bm{x} \in \XX} \| \bm{x} \|_{2}$. Then $\C^{\circ}$ is a closed convex cone.  Additionally, if $\bm{u} \in \C$ we have $-\bm{u} \in \C^{\circ}$. \label{lem:statement:negative-cone-in-polar}
\item Let us write $\leq_{\C^{\circ}}$ the order induced by $\C^{\circ}:\bm{x} \leq_{\C^{\circ}} \bm{y} \iff \bm{y} - \bm{x} \in \C^{\circ}$.
Then 
\begin{align}
\bm{x} \leq_{\C^{\circ}} \bm{y}, \bm{x}' \leq_{\C^{\circ}} \bm{y}' & \Rightarrow \bm{x} + \bm{x}' \leq_{\C^{\circ}} \bm{y} + \bm{y}',  \forall \; \bm{x},\bm{x}',\bm{y},\bm{y}' \in \R^{n+1}, \label{eq:partial-order-additivity}\\
\bm{x} + \bm{x}'  \leq_{\C^{\circ}} \bm{y} & \Rightarrow \bm{x} \leq_{\C^{\circ}} \bm{y}, \forall \; \bm{x},\bm{y} \in \R^{n+1}, \forall \; \bm{x}' \in \C^{\circ}, \label{eq:partial-order-minus-x-prime}
\end{align}
\item \label{lem:statement:x-y-dist} Assume that $\bm{x} \leq_{\C^{\circ}} \bm{y}$ for $\bm{x},\bm{y} \in \R^{n+1}$. Then $d(\bm{y},\C^{\circ}) \leq \| \bm{x} \|_{2}$.
\end{enumerate}
\end{lemma}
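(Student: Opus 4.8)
The plan is to argue directly from the variational definition of the distance to a set, so that the statement reduces to exhibiting a single admissible point. Recall that for the closed convex cone $\C^{\circ}$ (which is indeed a closed convex cone by statement~\ref{lem:statement:negative-cone-in-polar} of the present lemma), the distance from the point $\bm{y}$ is $d(\bm{y},\C^{\circ}) = \inf_{\bm{z} \in \C^{\circ}} \| \bm{y} - \bm{z} \|_{2}$. Since this is an infimum over \emph{all} points of $\C^{\circ}$, any particular feasible choice $\bm{z} \in \C^{\circ}$ already yields an upper bound $d(\bm{y},\C^{\circ}) \leq \| \bm{y} - \bm{z} \|_{2}$. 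The entire argument therefore hinges on selecting one good candidate $\bm{z}$.

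First I would unpack the hypothesis. By the definition of the order $\leq_{\C^{\circ}}$ introduced just above in the lemma, the assumption $\bm{x} \leq_{\C^{\circ}} \bm{y}$ means precisely that $\bm{y} - \bm{x} \in \C^{\circ}$. This immediately identifies the natural candidate to plug into the distance infimum, namely $\bm{z} = \bm{y} - \bm{x}$. Substituting this admissible point gives $d(\bm{y},\C^{\circ}) \leq \| \bm{y} - (\bm{y} - \bm{x}) \|_{2} = \| \bm{x} \|_{2}$, which is exactly the claimed bound.

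There is no genuine obstacle here: once the distance is written variationally and the partial order is unfolded, the result is essentially immediate. The only thing to be careful about is not overcomplicating the argument — one need not compute the actual projection $\pi_{\C^{\circ}}(\bm{y})$ or invoke Moreau's decomposition. Indeed, optimality of the projection only tells us that $d(\bm{y},\C^{\circ}) = \| \bm{y} - \pi_{\C^{\circ}}(\bm{y}) \|_{2} \leq \| \bm{y} - \bm{z} \|_{2}$ for every $\bm{z} \in \C^{\circ}$, and $\bm{y} - \bm{x}$ is simply one such admissible $\bm{z}$.
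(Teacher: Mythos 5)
Your treatment of the final item is correct, and it is essentially word-for-word the paper's own proof of that item: unfold $\bm{x} \leq_{\C^{\circ}} \bm{y}$ into $\bm{y}-\bm{x} \in \C^{\circ}$, feed this admissible point into $d(\bm{y},\C^{\circ}) = \min_{\bm{z}\in\C^{\circ}}\|\bm{y}-\bm{z}\|_{2}$, and read off the bound $\|\bm{y}-(\bm{y}-\bm{x})\|_{2} = \|\bm{x}\|_{2}$. No projection or Moreau decomposition is needed for this step, exactly as you say.

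The genuine gap is one of coverage: the statement to be proved is the entire six-part lemma, and you have proved only item 6. The other five items do not follow from your variational observation and each requires its own argument, which the paper supplies. Item 1 invokes Moreau's decomposition theorem ($\bm{u} = \pi_{\C}(\bm{u}) + \pi_{\C^{\circ}}(\bm{u})$ together with orthogonality of the two pieces, which also yields the norm inequality). Item 2 is a two-sided duality argument: Cauchy--Schwarz plus the orthogonality from item 1 gives $\langle\bm{u},\bm{w}\rangle \leq \|\bm{u}-\pi_{\C}(\bm{u})\|_{2}$ for every $\bm{w}\in\C^{\circ}\cap B_{2}(1)$, while feasibility of $(\bm{u}-\pi_{\C}(\bm{u}))/\|\bm{u}-\pi_{\C}(\bm{u})\|_{2}$ gives the reverse inequality. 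Item 3 follows from $\pi_{\C^{\circ}}(\bm{u})=\bm{0}$ when $\bm{u}\in\C$. Item 4 is the one place where the specific structure $\C=\textrm{cone}(\{\kappa\}\times\XX)$ enters: one must check that $\langle-\alpha(\kappa,\bm{x}),\alpha'(\kappa,\bm{x}')\rangle\leq 0$ reduces to $\kappa^{2}+\langle\bm{x},\bm{x}'\rangle\geq 0$, which holds by Cauchy--Schwarz and the definition $\kappa=\max_{\bm{x}\in\XX}\|\bm{x}\|_{2}$. Item 5 uses convexity and the cone property of $\C^{\circ}$ to obtain both the additivity rule and the cancellation rule. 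These items are not decorative: statements 1, 3, 4, 5 are all invoked later in the proofs of the averaging theorems (Theorems \ref{th:cba-linear-averaging-both}, \ref{th:cbap-linear-averaging-both} and \ref{th:cbap-linear-averaging-only-policy}), so any proof of the lemma must include them. A minor further remark: you cite item 4 to justify that $\C^{\circ}$ is a closed convex cone, but item 4 concerns the special conic-hull setting and your argument never actually needs closedness or convexity of $\C^{\circ}$ --- membership $\bm{y}-\bm{x}\in\C^{\circ}$ alone makes the point admissible in the infimum.
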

\begin{proof}
\begin{enumerate}
\item The fact that $\bm{u} - \pi_{\C^{\circ}}(\bm{u}) = \pi_{\C}(\bm{u}) \in \C$,$\langle \bm{u} - \pi_{\C^{\circ}}(\bm{u}),\pi_{\C^{\circ}}(\bm{u}) \rangle = 0$ follows from Moreau's Decomposition Theorem \citep{combettes2013moreau}. The fact that $\| \bm{u} - \pi_{\C^{\circ}}(\bm{u}) \|_{2} \leq \| \bm{u} \|_{2}$ is a straightforward consequence of $\langle \bm{u} - \pi_{\C^{\circ}}(\bm{u}),\pi_{\C^{\circ}}(\bm{u}) \rangle = 0$.
\item For any $\bm{w} \in \C^{\circ} \bigcap B_{2}(1)$ we have
\[ \langle \bm{u},\bm{w} \rangle \leq  \langle \bm{u} - \pi_{C}(\bm{u}),\bm{w} \rangle \leq \| \bm{w} \|_{2} \|  \bm{u} - \pi_{C}(\bm{u}) \|_{2} \leq \|  \bm{u} - \pi_{C}(\bm{u})\|_{2}.\]
Conversely,  since $\left( \bm{u} - \pi_{\C}(\bm{u}) \right)/\| \bm{u} - \pi_{\C}(\bm{u}) \|_{2} \in \C^{\circ}$, we have 
\[  \max_{\bm{w} \in \C^{\circ} \bigcap B_{2}^{d}(1) } \langle\bm{u},\bm{w} \rangle \geq \| \bm{u} - \pi_{\C}(\bm{u})\|_{2}.\]
This shows that 
\[ \max_{\bm{w} \in \C^{\circ} \bigcap B_{2}^{d}(1) } \langle\bm{u},\bm{w} \rangle = \| \bm{u} - \pi_{\C}(\bm{u})\|_{2} = d(\bm{u},\C).\]
\item For any $\bm{u} \in \R^{n+1}$,  by definition we have  $d(\bm{u},\C^{\circ}) = \| \bm{u} - \pi_{\C^{\circ}}(\bm{u})\|_{2}$. Now if $\bm{u} \in \C$ we have $ \pi_{\C^{\circ}}(\bm{u}) = 0$ so $d(\bm{u},\C^{\circ}) = \| \bm{u} \|_{2}$.
\item Let $\bm{u} \in \C$. Then $\bm{u} = \alpha (\kappa,\bm{x})$ for $\alpha \geq 0, \bm{x} \in \XX$. We will show that $-\bm{u} \in \C^{\circ}$. We have
\begin{align*}
- \bm{u} \in \C^{\circ} &  \iff \langle - \bm{u} , \bm{u}' \rangle \leq 0, \forall \; \bm{u}' \in \C \\
& \iff \langle - \alpha (\kappa,\bm{x}), \alpha' (\kappa,\bm{x}') \rangle \leq 0, \forall \; \alpha' \geq 0, \forall \; \bm{x}' \in \XX \\
& \iff   \kappa^{2} + \langle \bm{x},\bm{x}' \rangle \geq 0 \\
& \iff -  \langle \bm{x},\bm{x}' \rangle \leq \kappa^{2},
\end{align*}
and $- \langle \bm{x},\bm{x}' \rangle \leq \kappa^{2}$ is true by Cauchy-Schwartz and the definition of $\kappa = \max_{\bm{x} \in \XX} \| \bm{x} \|_{2}$.
\item We start by proving \eqref{eq:partial-order-additivity}.  Let $ \bm{x},\bm{x}',\bm{y},\bm{y}' \in \R^{n+1}, $ and assume that $ \bm{x} \leq_{\C^{\circ}} \bm{y}, \bm{x}' \leq_{\C^{\circ}} \bm{y}'$. Then $\bm{y} - \bm{x} \in \C^{\circ},\bm{y}'- \bm{x}' \in \C^{\circ}$. Because $\C^{\circ}$ is a convex set, and a cone, we have $2 \cdot \left( \dfrac{\bm{y} - \bm{x}}{2} +  \dfrac{\bm{y}' - \bm{x}'}{2} \right) \in \C^{\circ}$. Therefore, $\bm{y} + \bm{y}' - \bm{x} - \bm{x}' \in \C^{\circ}$, i.e.,  $\bm{x} + \bm{x}' \leq_{\C^{\circ}} \bm{y} + \bm{y}'$.

We now prove \eqref{eq:partial-order-minus-x-prime}.  Let $\; \bm{x},\bm{y} \in \R^{n+1}, \bm{x}' \in \C^{\circ}$ and assume that $\bm{x} + \bm{x}'  \leq_{\C^{\circ}} \bm{y}$. Then by definition $\bm{y}  - \bm{x} -  \bm{x}'  \in \C^{\circ}$. Additionally, $\bm{x}' \in \C^{\circ}$ by assumption.  Since $\C^{\circ}$ is convex, and is a cone, $2 \cdot \left( \dfrac{\bm{y}  - \bm{x} -  \bm{x}'}{2}+ \dfrac{\bm{x}'}{2} \right) \in \C^{\circ}$, i.e., $\bm{y} - \bm{x} \in \C^{\circ}$. Therefore, $\bm{x} \leq_{\C^{\circ}} \bm{y}.$
\item Let $\bm{x},\bm{y} \in \R^{n+1}$ such that $\bm{x} \leq_{\C^{\circ}} \bm{y}$. Then $\bm{y} - \bm{x} \in \C^{\circ}$. We have
\[ d(\bm{y},\C^{\circ}) = \min_{\bm{z} \in \C^{\circ}} \| \bm{y} - \bm{z} \|_{2} \leq \| \bm{y} - ( \bm{y} - \bm{x}) \|_{2} = \| \bm{x} \|_{2}.\]
\end{enumerate}
\end{proof}
Based on Moreau's Decomposition Theorem, we will use $\pi_{\C}(\bm{u})$ and $\bm{u}-\pi_{\C^{\circ}}(\bm{u})$ interchangeably.
\paragraph{Results for various linear averaging schemes} We now present our convergence results for various linear averaging schemes.  As a warm-up, we start with two theorems, Theorem \ref{th:cba-linear-averaging-both} and Theorem \ref{th:cbap-linear-averaging-both}, which show that \cba{} and \cbap{} are compatible with weighted average schemes, when \textit{both} the decisions and the payoffs are weighted. The proofs for these theorems will be used in the proof of our main theorem, Theorem \ref{th:cbap-linear-averaging-only-policy}. For the sake of consiness,  in all the proofs of this section we will always write $L = \max \{ \| \bm{f}_{t} \|_{2} \; |\;  t \geq 1\}, \kappa = \max \{ \| \bm{x} \|_{2} \; | \; t \geq 1 \},\bm{v}_{t} = \left(\langle \bm{f}_{t},\bm{x}_{t} \rangle / \kappa, - \bm{f}_{t} \rangle \right).$. 
We start with the following theorem.
\begin{theorem}\label{th:cba-linear-averaging-both}
Let $\left( \bm{x}_{t} \right)_{t \geq 0}$ the sequence of decisions generated by \cba{} with weights $\left( \omega_{t} \right)_{t \geq 0}$ and let $S_{t} = \sum_{\tau=1}^{t} \omega_{\tau}$ for any $t \geq 1$. 
Then
\[ \dfrac{\sum_{t=1}^{T} \omega_{t} \langle \bm{f}_{t},\bm{x}_{t} \rangle -  \min_{\bm{x} \in \XX} \sum_{t=1}^{T} \omega_{t} \langle \bm{f}_{t},\bm{x} \rangle}{S_{T}} = O \left( \kappa \cdot  d(\bm{u}_{T},\C^{\circ}) \right).\]
Additionally,
\[ d(\bm{u}_{T},\C^{\circ})^{2} = O \left(L^{2} \cdot  \frac{\sum_{t=1}^{T} \omega_{t}^{2}}{\left(\sum_{t=1}^{T} \omega_{t} \right)^{2}} \right).\]
Overall, 
\[\dfrac{\sum_{t=1}^{T} \omega_{t} \langle \bm{f}_{t},\bm{x}_{t} \rangle -  \min_{\bm{x} \in \XX} \sum_{t=1}^{T} \omega_{t} \langle \bm{f}_{t},\bm{x} \rangle}{S_{T}} = O \left( \kappa L \sqrt{ \frac{\sum_{t=1}^{T} \omega_{t}^{2}}{\left(\sum_{t=1}^{T} \omega_{t} \right)^{2}} } \right).\]
\end{theorem}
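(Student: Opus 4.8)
The plan is to prove the three displayed bounds in order, treating the third as an immediate consequence of the first two. Throughout I write $\bm{v}_t = (\langle \bm{f}_t, \bm{x}_t\rangle/\kappa, -\bm{f}_t)$ for the payoff vector, and I first unroll the \cba{} update (which, unlike \cbap, carries no projection) to obtain the closed form $\bm{u}_T = \tfrac{1}{S_T}\sum_{t=1}^T \omega_t \bm{v}_t$.

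First I would establish the regret-to-distance bound. For any comparator $\bm{x}\in\XX$, the point $(\kappa,\bm{x})$ generates $\C$ and satisfies $\langle \bm{v}_t,(\kappa,\bm{x})\rangle = \langle \bm{f}_t, \bm{x}_t - \bm{x}\rangle$, so summing against the weights shows that the weighted regret equals $S_T \max_{\bm{x}\in\XX}\langle \bm{u}_T,(\kappa,\bm{x})\rangle$. I would then split $\bm{u}_T = \pi_{\C^{\circ}}(\bm{u}_T) + \pi_{\C}(\bm{u}_T)$ via Moreau's theorem (statement \ref{lem:statement:u-minus-proj-in-C-polar} of Lemma \ref{lem:conic-opt}): the $\C^{\circ}$ part contributes a nonpositive term against $(\kappa,\bm{x})\in\C$, while Cauchy--Schwarz bounds the $\C$ part by $\|\pi_{\C}(\bm{u}_T)\|_2\,\|(\kappa,\bm{x})\|_2 \le d(\bm{u}_T,\C^{\circ})\cdot\kappa\sqrt2$, where I use $\|(\kappa,\bm{x})\|_2^2 = \kappa^2 + \|\bm{x}\|_2^2 \le 2\kappa^2$. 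Dividing by $S_T$ yields the $O(\kappa\, d(\bm{u}_T,\C^{\circ}))$ estimate.

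The crux is the second bound, which is the Blackwell approachability estimate. The key is the forcing identity $\langle \bm{v}_{t+1}, \pi_{\C}(\bm{u}_t)\rangle = 0$: writing $\pi_{\C}(\bm{u}_t) = (\tilde w, \hat{\bm{w}})$, the \cba{} decision is $\bm{x}_{t+1} = (\kappa/\tilde w)\hat{\bm{w}}$, so $\hat{\bm{w}} = (\tilde w/\kappa)\bm{x}_{t+1}$, and substituting into $\langle \bm{v}_{t+1},\pi_{\C}(\bm{u}_t)\rangle$ makes the two terms cancel exactly (the degenerate case $\tilde w = 0$ forces $\pi_{\C}(\bm{u}_t)=\bm 0$, so the identity is trivial). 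Working with the unnormalized aggregate $\bm{U}_t = S_t\bm{u}_t = \sum_{\tau\le t}\omega_\tau\bm{v}_\tau$, I would bound $d(\bm{U}_{t+1},\C^{\circ})^2 \le \|\bm{U}_{t+1} - \pi_{\C^{\circ}}(\bm{U}_t)\|_2^2 = \|\pi_{\C}(\bm{U}_t) + \omega_{t+1}\bm{v}_{t+1}\|_2^2$, expand, and kill the cross term via the forcing identity, which transfers from $\pi_{\C}(\bm{u}_t)$ to $\pi_{\C}(\bm{U}_t)$ by positive homogeneity of the cone projection. This produces the recursion $d(\bm{U}_{t+1},\C^{\circ})^2 \le d(\bm{U}_t,\C^{\circ})^2 + \omega_{t+1}^2\|\bm{v}_{t+1}\|_2^2$, which telescopes from $\bm{U}_0 = \bm 0$ to $d(\bm{U}_T,\C^{\circ})^2 \le \sum_{t=1}^T\omega_t^2\|\bm{v}_t\|_2^2$. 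Bounding $\|\bm{v}_t\|_2^2 = (\langle\bm{f}_t,\bm{x}_t\rangle/\kappa)^2 + \|\bm{f}_t\|_2^2 \le 2L^2$ and using homogeneity once more, namely $d(\bm{u}_T,\C^{\circ}) = d(\bm{U}_T,\C^{\circ})/S_T$, gives $d(\bm{u}_T,\C^{\circ})^2 \le 2L^2\sum_t\omega_t^2/S_T^2$, which is the claimed bound.

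Substituting the second bound into the first immediately yields the combined rate. I expect the main obstacle to be the forcing identity: it is the only place where the exact definition of $\chp_{\cba}$ enters, and obtaining the precise cancellation (rather than merely an inequality) is what makes the cross term vanish and drives the telescoping. The remaining work is the standard approachability recursion together with Moreau/Cauchy--Schwarz bookkeeping and the care needed to pass between $\bm{u}_T$ and $\bm{U}_T$ via positive homogeneity of $\pi_{\C}$.
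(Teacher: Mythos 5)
Your proof is correct and follows essentially the same route as the paper's: the same two-step structure of (i) reducing weighted regret to $O(\kappa\, d(\bm{u}_T,\C^{\circ}))$ via Moreau's decomposition/conic duality, and (ii) the Blackwell approachability recursion whose cross term is killed by the exact forcing identity $\langle \bm{v}_{t+1}, \pi_{\C}(\bm{u}_t)\rangle = 0$, followed by telescoping and $\|\bm{v}_t\|_2^2 \le 2L^2$. The only difference is cosmetic but mildly cleaner bookkeeping: you run the recursion on the unnormalized aggregate $\bm{U}_t = S_t \bm{u}_t$ and pass back via positive homogeneity, so the recursion picks up exactly $\omega_{t+1}^2\|\bm{v}_{t+1}\|_2^2$, whereas the paper's normalized recursion carries the term $\|\bm{v}_{t+1}-\pi_{\C^{\circ}}(\bm{u}_t)\|_2^2$ and implicitly needs the extra observation that $\|\pi_{\C^{\circ}}(\bm{u}_t)\|_2 = O(L)$.
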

\begin{proof} The proof proceeds in two steps.
We start by proving 
\[ \dfrac{\sum_{t=1}^{T} \omega_{t} \langle \bm{f}_{t},\bm{x}_{t} \rangle -  \min_{\bm{x} \in \XX} \sum_{t=1}^{T} \omega_{t} \langle \bm{f}_{t},\bm{x} \rangle}{S_{T}} = O \left(\kappa \cdot  d(\bm{u}_{T},S) \right).\]
We have
\begin{align}
d(\bm{u}_{T},\C^{\circ}) &  = \max_{\bm{w} \in \textrm{cone}(\{ \kappa \} \times \XX) \bigcap B_{2}(1) } \langle \dfrac{1}{S_{T}} \sum_{t=1}^{T} \omega_{t} \bm{v}_{t},\bm{w} \rangle \label{eq:proof-step-0} \\
 & \geq  \max_{\bm{x} \in \XX} \langle \dfrac{1}{S_{T}} \sum_{t=1}^{T}  \omega_{t} \bm{v}_{t},\dfrac{(\kappa,\bm{x})}{\| (\kappa,\bm{x}) \|_{2}} \rangle \nonumber \\
 & \geq \dfrac{1}{S_{T}} \max_{\bm{x} \in \XX} \dfrac{\sum_{t=1}^{T} \omega_{t} \langle \bm{f}_{t},\bm{x}_{t} \rangle - \sum_{t=1}^{T} \omega_{t} \langle \bm{f}_{t},\bm{x} \rangle}{ \| (\kappa,\bm{x})\|_{2}} \label{eq:proof-lin-avg-step-0},
\end{align}
where \eqref{eq:proof-step-0} follows from Statement 1 in Lemma \ref{lem:conic-opt}, and \eqref{eq:proof-lin-avg-step-0} follows from \cba{} maintaining
\[ \bm{u}_{t} = \left( \frac{1}{S_{t}} \sum_{\tau=1}^{t} \omega_{\tau} \frac{ \langle \bm{f}_{\tau},  \bm{x}_{\tau}\rangle}{\kappa} ,  -  \frac{1}{S_{t}}  \sum_{\tau=1}^{t} \omega_{\tau} \bm{f}_{\tau} \right), \forall \; t \geq 1.\]
We can conclude that
\[ 2 \kappa d(\bm{u}_{T},\C^{\circ}) \geq \dfrac{\sum_{t=1}^{T} \omega_{t} \langle \bm{f}_{t},\bm{x}_{t} \rangle -  \min_{\bm{x} \in \XX} \sum_{t=1}^{T} \omega_{t} \langle \bm{f}_{t},\bm{x} \rangle}{S_{T}}.\]
We now prove that \[ d(\bm{u}_{T},\C^{\circ})^{2} = O \left( L^2 \frac{\sum_{\tau=1}^{T} \omega_{\tau}^{2}}{\left(\sum_{\tau=1}^{T} \omega_{\tau} \right)^{2}} \right).\]
We have
\begin{align}
d(\bm{u}_{t+1},\C^{\circ})^{2} & = \min_{\bm{z} \in \C^{\circ}} \| \bm{u}_{t+1} - \bm{z} \|_{2}^{2} \\
& \leq \| \bm{u}_{t+1} - \pi_{\C^{\circ}}(\bm{u}_{t}) \|_{2}^{2} \nonumber \\
& \leq \|  \dfrac{S_{t}}{S_{t}+ \omega_{t+1}} \bm{u}_{t} + \dfrac{\omega_{t+1}}{S_{t}+ \omega_{t+1}} \bm{v}_{t+1} - \pi_{\C^{\circ}}(\bm{u}_{t})\|_{2}^{2} \nonumber \\
& \leq \|  \dfrac{S_{t}}{S_{t}+ \omega_{t+1}}  (\bm{u}_{t} -\pi_{\C^{\circ}}(\bm{u}_{t}))  + \dfrac{\omega_{t+1}}{S_{t}+ \omega_{t+1}} \left( \bm{v}_{t+1} - \pi_{\C^{\circ}}(\bm{u}_{t}) \right) \|_{2}^{2} \nonumber \\
& \leq \dfrac{1}{S_{t+1}^{2}} (S_{t}^{2} \| \bm{u}_{t} - \pi_{\C^{\circ}}(\bm{u}_{t}) \|_{2}^{2} + \omega_{t+1}^{2} \| \bm{v}_{t+1} - \pi_{\C^{\circ}}(\bm{u}_{t}) \|_{2}^{2} \nonumber \\
& + 2 S_{t}\omega_{t+1} \langle \bm{u}_{t} - \pi_{\C^{\circ}}(\bm{u}_{t}), \bm{v}_{t+1} - \pi_{\C^{\circ}}(\bm{u}_{t}) \rangle ) \nonumber \\
& \leq \dfrac{1}{S_{t+1}^{2}} \left( S_{t}^{2} \| \bm{u}_{t} - \pi_{\C^{\circ}}(\bm{u}_{t}) \|_{2}^{2} + \omega_{t+1}^{2} \| \bm{v}_{t+1} - \pi_{\C^{\circ}}(\bm{u}_{t}) \|_{2}^{2} \right), \label{eq:proof-lin-avg-step-1}
\end{align}
\tb{
where \eqref{eq:proof-lin-avg-step-1} follows from
\begin{equation}\label{eq:blackwell-forcing}
\langle \bm{u}_{t} - \pi_{\C^{\circ}}(\bm{u}_{t}), \bm{v}_{t+1} - \pi_{\C^{\circ}}(\bm{u}_{t}) \rangle = 0.
\end{equation} This is because:
\begin{itemize}
\item $ \langle \bm{u}_{t} - \pi_{\C^{\circ}}(\bm{u}_{t}), \bm{v}_{t+1} \rangle = 0$. This is one of the crucial component of Blackwell's approachability framework: the current decision is chosen to force a hyperplane on the aggregate payoff. To see this, first note that $\bm{u}_{t} - \pi_{\C^{\circ}}(\bm{u}) = \pi_{\C}(\bm{u}_{t})$. Let us write $\bm{\pi} = \left(\tilde{\pi},\hat{\bm{\pi}} \right) =  \pi_{\C}(\bm{u}_{t})$. Note that by definition, $\bm{x}_{t+1} =  (\kappa/\tilde{\pi})\hat{\bm{\pi}}$, and $\bm{v}_{t+1} = \left(\langle \bm{f}_{t+1},\bm{x}_{t+1} \rangle / \kappa, - \bm{f}_{t+1} \right)$.
Therefore,
\begin{align*}
\langle \bm{u}_{t} - \pi_{\C^{\circ}}(\bm{u}_{t}), \bm{v}_{t+1} \rangle & = \langle \bm{\pi}, \bm{v}_{t+1} \rangle \\
& = \langle \left(\tilde{\pi},\hat{\bm{\pi}} \right), \left(\langle \bm{f}_{t+1},\bm{x}_{t+1} \rangle / \kappa, - \bm{f}_{t+1}  \right) \rangle \\
&  = \langle \left(\tilde{\pi},\hat{\bm{\pi}} \right), \left(\langle \bm{f}_{t+1},(\kappa/\tilde{\pi})\hat{\bm{\pi}} \rangle / \kappa, - \bm{f}_{t+1} \right) \rangle \\
& = \langle \hat{\bm{\pi}}, \bm{f}_{t+1} \rangle - \langle \hat{\bm{\pi}}, \bm{f}_{t+1} \rangle \\
& = 0.
\end{align*}
\item $ \langle \bm{u}_{t} - \pi_{\C^{\circ}}(\bm{u}_{t}),  \pi_{\C^{\circ}}(\bm{u}_{t}) \rangle = 0$ from Statement 3 of Lemma \ref{lem:conic-opt} and $ \bm{u}_{t} - \pi_{\C^{\circ}}(\bm{u}_{t}) = \pi_{\C}(\bm{u}_{t}) \in \C$.
\end{itemize}
} 
We therefore have
\[ d(\bm{u}_{t+1},\C^{\circ})^{2} \leq \dfrac{1}{S_{t+1}^{2}} \left( S_{t}^{2} \| \bm{u}_{t} - \pi_{\C^{\circ}}(\bm{u}_{t}) \|_{2}^{2} + \omega_{t+1}^{2} \| \bm{v}_{t+1} - \pi_{\C^{\circ}}(\bm{u}_{t}) \|_{2}^{2} \right).\]
This recursion directly gives
\[ d(\bm{u}_{t+1},\C^{\circ})^{2} \leq \dfrac{1}{S_{t+1}^{2}} \sum_{\tau=1}^{t}  \omega_{\tau+1}^{2} \| \bm{v}_{\tau+1} - \pi_{\C^{\circ}}(\bm{u}_{\tau}) \|_{2}^{2} \leq O \left( L^2 \cdot \dfrac{\sum_{\tau=1}^{t+1}  \omega_{\tau}^{2}}{S_{t+1}^{2}} \right),\]
where the last inequality follows from the definition of $\bm{v}_{t}$ and $L$.
\end{proof}
 \begin{theorem}\label{th:cbap-linear-averaging-both}
Let $\left( \bm{x}_{t} \right)_{t \geq 0}$ the sequence of decisions generated by \cbap{} with weights $\left( \omega_{t} \right)_{t \geq 0}$ and let $S_{t} = \sum_{\tau=1}^{t} \omega_{\tau}$ for any $t \geq 1$.
Then
\[ \dfrac{\sum_{t=1}^{T} \omega_{t} \langle \bm{f}_{t},\bm{x}_{t} \rangle -  \min_{\bm{x} \in \XX} \sum_{t=1}^{T} \omega_{t} \langle \bm{f}_{t},\bm{x} \rangle}{S_{T}} = O \left( \kappa \cdot d(\bm{u}_{T},\C^{\circ}) \right).\]
Additionally,
\[ d(\bm{u}_{T},\C^{\circ})^{2} = O \left(L^{2} \cdot  \frac{\sum_{t=1}^{T} \omega_{t}^{2}}{\left(\sum_{t=1}^{T} \omega_{t} \right)^{2}} \right).\]
Overall, 
\[\dfrac{\sum_{t=1}^{T} \omega_{t} \langle \bm{f}_{t},\bm{x}_{t} \rangle -  \min_{\bm{x} \in \XX} \sum_{t=1}^{T} \omega_{t} \langle \bm{f}_{t},\bm{x} \rangle}{S_{T}} = O \left( \kappa L \sqrt{ \frac{\sum_{t=1}^{T} \omega_{t}^{2}}{\left(\sum_{t=1}^{T} \omega_{t} \right)^{2}}}  \right).\]
\end{theorem}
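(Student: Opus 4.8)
The plan is to follow the same two-step structure as the proof of Theorem~\ref{th:cba-linear-averaging-both}: first bound the weighted regret by $O\left(\kappa \cdot d(\bm{u}_T,\C^{\circ})\right)$, and then bound $d(\bm{u}_T,\C^{\circ})^2$ by $O\left(L^2 \sum_{t=1}^T \omega_t^2 / S_T^2\right)$, combining the two at the end. The one genuinely new difficulty relative to \cba{} is that $\upp_{\cbap}$ projects onto $\C$ at every step, so $\bm{u}_T$ is no longer the plain weighted average $\frac{1}{S_T}\sum_t \omega_t \bm{v}_t$, and the closed form of $\bm{u}_t$ exploited in \eqref{eq:proof-lin-avg-step-0}--\eqref{eq:proof-lin-avg-step-1} is unavailable. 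I would handle this by tracking the accumulated projection corrections through the partial order $\leq_{\C^{\circ}}$ built in Lemma~\ref{lem:conic-opt}.

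For the first step I would unroll the recursion $\bm{u}_{t+1} = \pi_{\C}(\bm{w}_{t+1})$, where $\bm{w}_{t+1} = \frac{S_t}{S_{t+1}}\bm{u}_t + \frac{\omega_{t+1}}{S_{t+1}}\bm{v}_{t+1}$. By Moreau's decomposition, $\bm{w}_{t+1} - \pi_{\C}(\bm{w}_{t+1}) = \pi_{\C^{\circ}}(\bm{w}_{t+1}) \in \C^{\circ}$, so each projection subtracts a polar-cone element, giving $\bm{u}_{t+1} \leq_{\C^{\circ}} \bm{w}_{t+1}$ and hence $S_{t+1}\bm{u}_{t+1} \leq_{\C^{\circ}} S_t \bm{u}_t + \omega_{t+1}\bm{v}_{t+1}$. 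Chaining these inequalities by induction, using reflexivity ($\bm{0}\in\C^{\circ}$), additivity \eqref{eq:partial-order-additivity}, and transitivity of $\leq_{\C^{\circ}}$ (Statements~4 and 5 of Lemma~\ref{lem:conic-opt}), yields $S_T\bm{u}_T \leq_{\C^{\circ}} \sum_{t=1}^T \omega_t \bm{v}_t$, i.e. $\sum_t \omega_t \bm{v}_t - S_T\bm{u}_T \in \C^{\circ}$. I would then invoke the distance characterization $d(\bm{u}_T,\C^{\circ}) = \max_{\bm{w}\in\C\cap B_2(1)}\langle \bm{u}_T,\bm{w}\rangle$ (Statement~2), test it against $\bm{w}=(\kappa,\bm{x})/\|(\kappa,\bm{x})\|_2$ for $\bm{x}\in\XX$, and use $\langle \sum_t \omega_t \bm{v}_t - S_T\bm{u}_T,(\kappa,\bm{x})\rangle \le 0$ (polar-cone definition, since $(\kappa,\bm{x})\in\C$) to replace $S_T\bm{u}_T$ by $\sum_t \omega_t \bm{v}_t$ without loss. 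Since $\langle \bm{v}_t,(\kappa,\bm{x})\rangle = \langle \bm{f}_t,\bm{x}_t\rangle - \langle \bm{f}_t,\bm{x}\rangle$ and $\|(\kappa,\bm{x})\|_2 \le \sqrt{2}\,\kappa$, taking the maximum over $\bm{x}$ delivers the first bound.

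For the second step the projection actually simplifies matters: because $\upp_{\cbap}$ projects, we always have $\bm{u}_t\in\C$, so $\pi_{\C^{\circ}}(\bm{u}_t)=\bm{0}$ and $d(\bm{u}_t,\C^{\circ})=\|\bm{u}_t\|_2$ by Statement~3. I would bound $d(\bm{u}_{t+1},\C^{\circ}) = \|\pi_{\C}(\bm{w}_{t+1})\|_2$ via nonexpansivity of $\pi_{\C}$ together with $\pi_{\C}(\bm{0})=\bm{0}$, giving $d(\bm{u}_{t+1},\C^{\circ}) \le \|\bm{w}_{t+1}\|_2$. Expanding the square and invoking the Blackwell forcing identity \eqref{eq:blackwell-forcing}, which here reduces to $\langle \bm{u}_t,\bm{v}_{t+1}\rangle = 0$ directly from the definition of $\chp_{\cbap}$, annihilates the cross term and produces the recursion $S_{t+1}^2\, d(\bm{u}_{t+1},\C^{\circ})^2 \le S_t^2\, d(\bm{u}_t,\C^{\circ})^2 + \omega_{t+1}^2\|\bm{v}_{t+1}\|_2^2$. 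Telescoping and using $\|\bm{v}_t\|_2^2 = O(L^2)$ (Cauchy--Schwarz and the definitions of $L,\kappa$) gives the second bound.

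I expect the main obstacle to be the first step: unlike in \cba, there is no closed form for $\bm{u}_T$, so the argument must carefully propagate the per-step polar-cone corrections rather than reading off a weighted average. The partial order $\leq_{\C^{\circ}}$ is the right bookkeeping device, and the key non-obvious point is that the accumulated correction $\sum_t \omega_t \bm{v}_t - S_T\bm{u}_T$ lands in $\C^{\circ}$ and therefore pairs nonpositively with every comparator direction $(\kappa,\bm{x})\in\C$, so it can be discarded when lower-bounding $d(\bm{u}_T,\C^{\circ})$. By contrast, the second step is cleaner than its \cba{} analogue precisely because the running projection keeps $\bm{u}_t\in\C$.
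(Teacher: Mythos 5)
Your proposal is correct and takes essentially the same route as the paper's proof: your chained inequality $S_T\bm{u}_T \leq_{\C^{\circ}} \sum_{t=1}^{T}\omega_t \bm{v}_t$ is precisely the paper's inequality \eqref{eq:delta-R-delta-u} summed over $t$, and your second step (projection decreases the norm, the forcing identity $\langle \bm{u}_t,\bm{v}_{t+1}\rangle = 0$, and telescoping) is identical to the paper's norm recursion. The only cosmetic deviations are that you test the dual distance characterization of $d(\bm{u}_T,\C^{\circ})$ directly against the comparators $(\kappa,\bm{x})$, whereas the paper bounds the regret by $O\left(\kappa\, d(\bm{R}_T,\C^{\circ})\right)$ with $\bm{R}_T = \frac{1}{S_T}\sum_{t=1}^{T}\omega_t\bm{v}_t$ and then applies Statement~\ref{lem:statement:x-y-dist} of Lemma~\ref{lem:conic-opt} to get $d(\bm{R}_T,\C^{\circ}) \leq \|\bm{u}_T\|_2$, and that you justify $\|\pi_{\C}(\bm{w})\|_2 \leq \|\bm{w}\|_2$ via nonexpansivity of the projection rather than via Moreau's decomposition (Statement~\ref{lem:statement:u-minus-proj-in-C-polar}).
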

\begin{proof}[Proof of Theorem \ref{th:cbap-linear-averaging-both}]
The proof proceeds in two steps.
We start by proving 
\[ \dfrac{\sum_{t=1}^{T} \omega_{t} \langle \bm{f}_{t},\bm{x}_{t} \rangle -  \min_{\bm{x} \in \XX} \sum_{t=1}^{T} \omega_{t} \langle \bm{f}_{t},\bm{x} \rangle}{S_{T}} = O \left( \kappa \cdot  d(\bm{u}_{T},S) \right).\]
Recall that $\bm{v}_{t} = \left(\langle \bm{f}_{t},\bm{x}_{t} \rangle / \kappa, - \bm{f}_{t} \rangle \right)$, and let us consider $\bm{R}_{t} = \dfrac{1}{S_{t}} \sum_{\tau=1}^{t} \omega_{\tau} \bm{v}_{\tau}$. By definition of $\bm{R}_{t}$,  similarly as in the proof of Theorem \ref{th:cba-linear-averaging-both}, we have
\[ \dfrac{\sum_{t=1}^{T} \omega_{t} \langle \bm{f}_{t},\bm{x}_{t} \rangle -  \min_{\bm{x} \in \XX} \sum_{t=1}^{T} \omega_{t} \langle \bm{f}_{t},\bm{x} \rangle}{S_{T}} = O \left(\kappa \cdot d(\bm{R}_{T},\C^{\circ}) \right).\]

Note that at any period $t$, we have
\begin{equation}\label{eq:delta-R-delta-u}
S_{t+1}\bm{u}_{t+1} -S_{t} \bm{u}_{t} \leq_{\C^{\circ}} S_{t+1} \bm{R}_{t+1} - S_{t} \bm{R}_{t}.
\end{equation}
This is simply because $\bm{u}_{t+1} = \pi_{\C}(\bm{u}_{t+1/2}) = \bm{u}_{t+1/2} - \pi_{\C^{o}}(\bm{u}_{t+1/2})$ with \[\bm{u}_{t+1/2} = \upp_{\sf CBA}(\bm{u}_{t}) =  \frac{S_{t}}{S_{t}+\omega_{t+1}}\bm{u}_{t} + \frac{\omega_{t+1}}{S_{t}+\omega_{t+1}} \bm{v}_{t+1}.\]
 Now we have
\begin{align*}
S_{t+1}\bm{R}_{t+1} - S_{t}\bm{R}_{t} - \left(  S_{t+1}\bm{u}_{t+1} -S_{t} \bm{u}_{t} \right) & = \omega_{t+1} \bm{v}_{t+1} +  S_{t} \bm{u}_{t} -S_{t+1} \bm{u}_{t+1/2} +S_{t+1} \pi_{\C^{\circ}}(\bm{u}_{t+1/2}) \\
& = S_{t+1} \bm{u}_{t+1/2} -S_{t+1} \bm{u}_{t+1/2} + S_{t+1} \pi_{\C^{\circ}}(\bm{u}_{t+1/2}) \\
& =  S_{t+1} \pi_{\C^{\circ}}(\bm{u}_{t+1/2}) \in \C^{\circ}.
\end{align*}
From \eqref{eq:partial-order-additivity} in Lemma \ref{lem:conic-opt}, we can sum the inequalities \eqref{eq:delta-R-delta-u}.  Noticing that $\bm{u}_{1} = \bm{R}_{1}$,  we can conclude that 
\[ \bm{u}_{t} \leq_{\C^{\circ}} \bm{R}_{t}.\]
From $\bm{u}_{t} \in \C$ and Statement \ref{lem:statement:x-y-dist} in Lemma \ref{lem:conic-opt}, we have $d(\bm{R}_{t},\C^{\circ}) \leq \| \bm{u}_{t} \|_{2}$. This implies
\[  \dfrac{\sum_{t=1}^{T} \omega_{t} \langle \bm{f}_{t},\bm{x}_{t} \rangle -  \min_{\bm{x} \in \XX} \sum_{t=1}^{T} \omega_{t} \langle \bm{f}_{t},\bm{x} \rangle}{S_{T}} = O \left(\kappa  \| \bm{u}_{T} \|_{2} \right).\]
We now turn to proving
\[ \| \bm{u}_{T} \|_{2}^{2} = O \left( L^2 \frac{\sum_{t=1}^{T} \omega_{t}^{2}}{\left(\sum_{t=1}^{T} \omega_{t} \right)^{2}} \right).\]
We have
\begin{align}
 \| \bm{u}_{t+1} \|_{2}^{2}
& = \| \bm{u}_{t+1/2} - \pi_{\C^{\circ}}(\bm{u}_{t+1/2}) \|_{2}^{2} \\
& \leq \| \bm{u}_{t+1/2} \|_{2}^{2} \label{eq:norm-proj-smaller-than-norm} \\
& \leq \|  \frac{S_{t}}{S_{t}+\omega_{t+1}}\bm{u}_{t} + \frac{\omega_{t+1}}{S_{t}+\omega_{t+1}} \bm{v}_{t+1} \|_{2}^{2}, 
\end{align}
where \eqref{eq:norm-proj-smaller-than-norm} follows from Statement \ref{lem:statement:u-minus-proj-in-C-polar} in Lemma \ref{lem:conic-opt}.
Therefore,
\begin{align*}
\| \bm{u}_{t+1}\|_{2}^{2} & \leq \dfrac{1}{\left(S_{t}+\omega_{t+1}\right)^{2}} \left( S_{t}^{2} \| \bm{u}_{t} \|^{2}_{2} + \omega_{t+1}^{2} \| \bm{v}_{t+1} \|^{2}_{2} + 2 S_{t}\omega_{t+1} \langle \bm{u}_{t},\bm{v}_{t+1} \rangle \right).
\end{align*}
By construction and for the same reason as for \eqref{eq:blackwell-forcing}, $\langle \bm{u}_{t},\bm{v}_{t+1} \rangle=0$. 
Therefore, we have the recursion
\[ \| \bm{u}_{t+1}\|_{2}^{2} \leq \dfrac{1}{S_{t+1}^{2}} \left( S_{t}^{2} \| \bm{u}_{t} \|^{2}_{2} + \omega_{t+1}^{2} \| \bm{v}_{t+1} \|^{2}_{2}\right).\]
By telescoping the inequality above we obtain
\[ d(\bm{u}_{t+1},\C^{\circ})^{2} \leq \dfrac{1}{S_{t+1}^{2}} \left( \sum_{\tau=1}^{t+1} \omega_{\tau}^{2} \| \bm{v}_{\tau} \|^{2}_{2} \right).\]
By definition of $L$,
\[ \| \bm{u}_{t+1} \|_{2}^{2} = O \left(L^{2} \cdot \dfrac{\sum_{\tau=1}^{t+1} \omega_{\tau}^{2}}{S_{t+1}^{2}} \right).\]
\end{proof}
\paragraph{Linear averaging only on decisions}
We are now ready to prove our main convergence result, Theorem \ref{th:cbap-linear-averaging-only-policy}.  Our proof heavily relies on the sequence of payoffs belonging to the cone $\C$ at every iteration ($\bm{u}_{t} \in \C, \forall \; t \geq 1$), and for this reason it does not extend to \cba. We also note that the use of conic optimization somewhat simplifies the argument compared to the proof that \rmp is compatible with linear averaging \citep{tammelin2015solving}.
\begin{proof}[Proof of Theorem \ref{th:cbap-linear-averaging-only-policy}]
Recall that $\bm{v}_{t} = \left(\langle \bm{f}_{t},\bm{x}_{t} \rangle / \kappa, - \bm{f}_{t} \rangle \right)$. By construction and following the same argument as for the proof of Theorem \ref{th:cbap-linear-averaging-both}, we have
\begin{align}
\sum_{t=1}^{T} t \langle \bm{f}_{t},\bm{x}_{t} \rangle -  \min_{\bm{x} \in \XX} \sum_{t=1}^{T} t \langle \bm{f}_{t},\bm{x} \rangle = O \left( \kappa \cdot d \left(\sum_{t=1}^{T} t \bm{v}_{t},\C^{\circ} \right) \right). 
\end{align}
Additionally,  Equation \eqref{eq:delta-R-delta-u} for uniform weights ($\omega_{\tau}=1, S_{\tau}=\tau$) yields
\[ \bm{v}_{t+1} \geq_{\C^{\circ}} (t+1)\bm{u}_{t+1} - t \bm{u}_{t}.\]
Therefore,
\[ (t+1) \bm{v}_{t+1} \geq_{\C^{\circ}} (t+1)^{2}\bm{u}_{t+1} - t^{2} \bm{u}_{t} -t \bm{u}_{t}.\]
Summing up the previous inequalities from $t=1$ to $t=T-1$ and using $\bm{u}_{1}=\bm{v}_{1}$ we obtain
\[ \sum_{t=1}^{T} t \bm{v}_{t} \geq_{\C^{\circ}} T^{2}\bm{u}_{T} - \sum_{t=1}^{T-1} t \bm{u}_{t}.\]

Note that since $\sum_{t=1}^{T-1} t \bm{u}_{t} \in \C$, Statement \ref{lem:statement:negative-cone-in-polar} in Lemma \ref{lem:conic-opt} shows that $- \sum_{t=1}^{T-1} t \bm{u}_{t} \in \C^{\circ}$.
Now, by applying \eqref{eq:partial-order-minus-x-prime} in Lemma \ref{lem:conic-opt},  we have
\[ \sum_{t=1}^{T} t \bm{v}_{t} \geq_{\C^{\circ}} T^{2}\bm{u}_{T} - \sum_{t=1}^{T-1} t \bm{u}_{t} \Rightarrow \sum_{t=1}^{T} t \bm{v}_{t} \geq_{\C^{\circ}} T^{2}\bm{u}_{T}. \]
Since $T^{2}\bm{u}_{T} \in \C$, Statement \ref{lem:statement:x-y-dist} shows that
\[ d \left(\sum_{t=1}^{T} t \bm{v}_{t} ,\C^{\circ} \right) \leq \| T^{2} \bm{u}_{T} \|_{2}.\]
By construction $\bm{u}_{T}$ is the output of \cbap{} with uniform weight, so that $d(\bm{u}_{T},\C^{\circ}) = \| \bm{u}_{T} \|_{2} =  O(L/\sqrt{T})$ (see Theorem \ref{th:cbap-linear-averaging-both}).
Therefore,  $d(\sum_{t=1}^{T} t \bm{v}_{t} ,\C^{\circ}) = O \left( L \cdot T^{3/2} \right).$
This shows that 
\[ \dfrac{\sum_{t=1}^{T} t \langle \bm{f}_{t},\bm{x}_{t} \rangle -  \min_{\bm{x} \in \XX} \sum_{t=1}^{T} t \langle \bm{f}_{t},\bm{x} \rangle}{T(T+1)} = O \left( \kappa \dfrac{d \left( \sum_{t=1}^{T} t \bm{v}_{t} ,\C^{\circ} \right)}{T(T+1)} \right) = O \left(\kappa L /\sqrt{T} \right).\]
\end{proof}
\paragraph{Comparisons of different weighted average schemes} We conclude this section with an empirical comparisons of the different weighted average schemes (Theorem \ref{th:cba-linear-averaging-both}, Theorem \ref{th:cbap-linear-averaging-both}, and Theorem \ref{th:cbap-linear-averaging-only-policy}). We also compare these algorithms with \rmp. We present our numerical experiments on sets of random matrix game instances in Figure \ref{fig:comparison-averaging-schemes}.  The setting is the same as in our simulation section, Section \ref{sec:simu}. We note that \cbap{} with linear averaging only on decisions outperforms both \cbap{} and \cba{} with linear averaging on both decisions and payoffs, as well as \rmp{} with linear averaging on decisions.
\begin{figure}[h]
 \begin{subfigure}{0.45\textwidth}
\centering
         \includegraphics[width=1.0\linewidth]{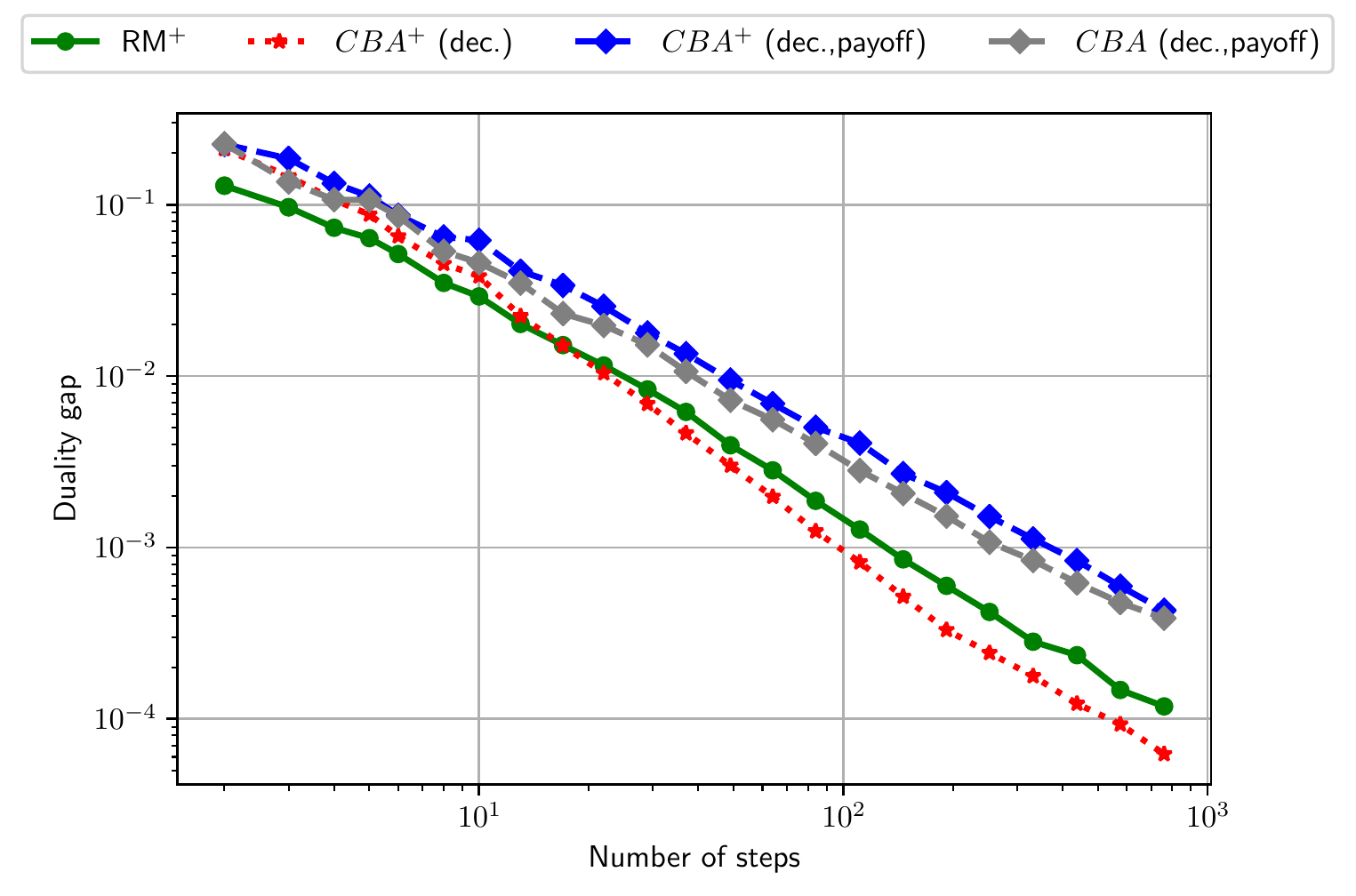}
         \caption{Uniform distribution.}
  \end{subfigure}
   \begin{subfigure}{0.45\textwidth}
\centering
         \includegraphics[width=1.0\linewidth]{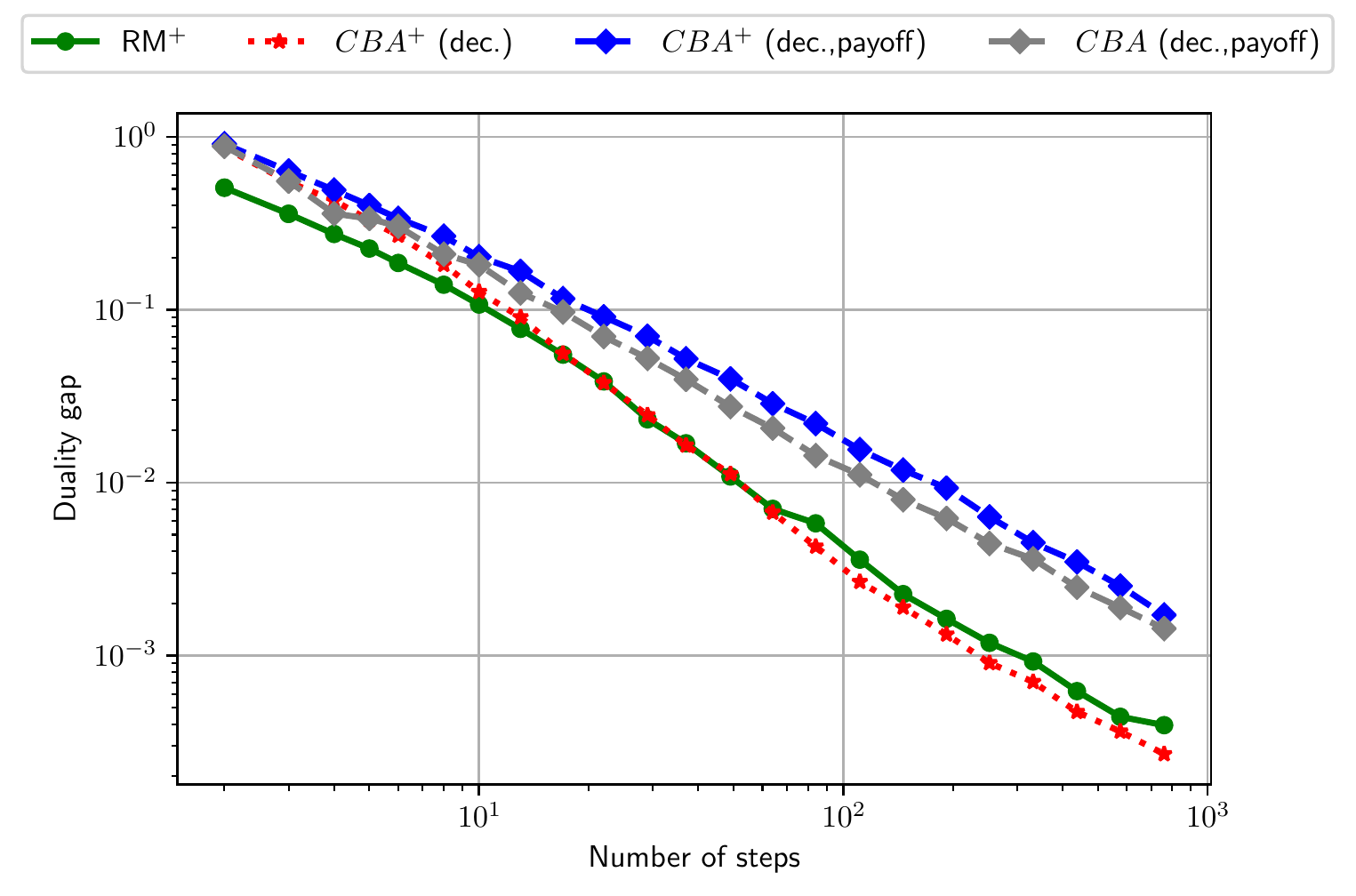}
         \caption{Normal distribution.}
  \end{subfigure}
  \caption{Comparison of \rmp{} vs.  \cbap{} and \cba{} with different linear averaging schemes: only on decisions (\cbap{} (dec.)), or on both the decisions and the payoffs $\bm{u}$ (\cbap{} (dec.,payoff),\cba{} (dec.,payoff)).}
  \label{fig:comparison-averaging-schemes}
\end{figure}
\tb{
\subsection{Geometric intuition on the projection step of \cbap}\label{app:figure-projection-cone}
Figure \ref{fig:projection-cone} illustrates the projection step $\pi_{\C}(\cdot)$ of $\cbap$. At a high level, from $\bm{u}_{t}$ to $\bm{u}_{t+1}$, an instantaneous payoff $\bm{v}_{t}$ is first added to $\bm{u}_{t}$ (where $\bm{v}_{t} = \left(\langle \bm{f}_{t},\bm{x}_{t} \rangle / \kappa, - \bm{f}_{t} \rangle \right)$), and then the resulting vector $\bm{u}^{+}_{t}=\bm{u}_{t}+\bm{v}_{t}$ is projected onto $\C$. The projection $\pi_{\C}(\cdot)$ moves the vector $\bm{u}^{+}_{t}$ along the edges of the cone $\C^{\circ}$, preserving the (orthogonal) distance $d$ to $\C^{\circ}$.
\begin{figure}[hbt]
\centering
   \begin{subfigure}{0.4\textwidth}

\centering
         \includegraphics[width=1.0\linewidth]{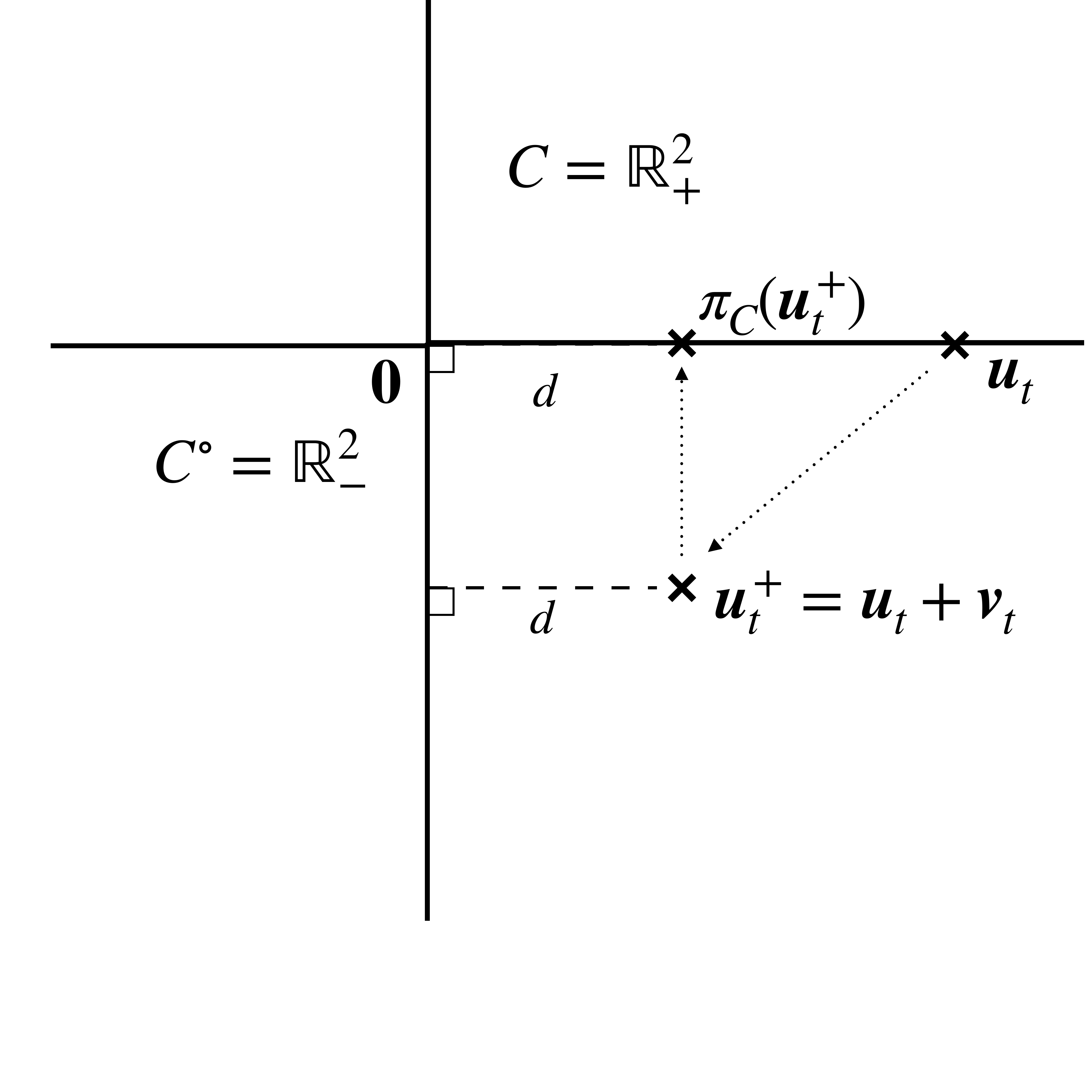}
  \end{subfigure}
     \begin{subfigure}{0.4\textwidth}
\centering
         \includegraphics[width=1.0\linewidth]{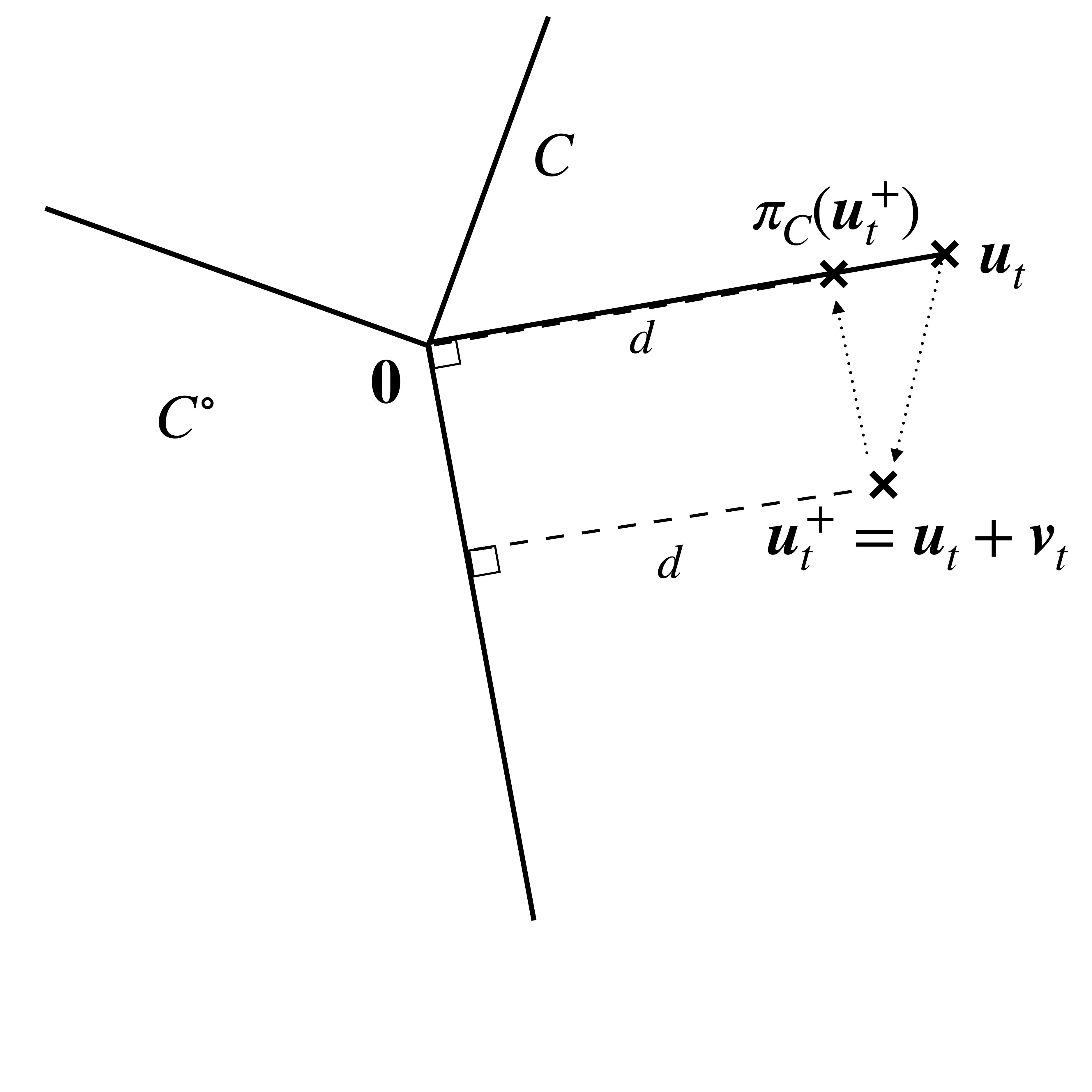}
  \end{subfigure}
  \caption{Illustration of $\pi_{\C}(\cdot)$ for $\C=\R_{+}^{2}$ (left-hand side) and $\C$ any cone in $\R^{2}$ (right-hand side).}
  \label{fig:projection-cone}
\end{figure}}
\section{Proof of Theorem \ref{th:folk-final}}\label{app:proof-folk-final}
Let $\omega_{t} =t, S_{T} = \sum_{t=1}^{T} \omega_{t} = T(T+1)/2$, and
\[  \bar{\bm{x}}_{T} = \frac{1}{S_{T}} \sum_{t=1}^{T} \omega_{t}\bm{x}_{t}, \bar{\bm{y}}_{T} = \frac{1}{S_{T}} \sum_{t=1}^{T} \omega_{t}\bm{y}_{t}.\] 
Since $F$ is convex-concave,  we first have
\[ \max_{\bm{y} \in \YY}F(\bar{\bm{x}}_{T},\bm{y}) - \min_{\bm{x} \in \XX} F(\bm{x},\bar{\bm{y}}_{T}) \leq \frac{2}{T(T+1)} \left(  \max_{\bm{y} \in \YY}  \sum_{t=1}^{T} \omega_{t}F(\bm{x}_{t},\bm{y}) - \min_{\bm{x} \in \XX} \sum_{t=1}^{T} \omega_{t}F(\bm{x},\bm{y}_{t}) \right).\]
Now,
\begin{align*}
 \max_{\bm{y} \in \YY}  \sum_{t=1}^{T} \omega_{t}F(\bm{x}_{t},\bm{y}) - \min_{\bm{x} \in \XX} \sum_{t=1}^{T} \omega_{t}F(\bm{x},\bm{y}_{t})  & =  \left( \max_{\bm{y} \in \YY}  \sum_{t=1}^{T} \omega_{t}F(\bm{x}_{t},\bm{y}) -  \sum_{t=1}^{T} \omega_{t} F(\bm{x}_{t},\bm{y}_{t}) \right) \\
 & + \left( \sum_{t=1}^{T} \omega_{t} F(\bm{x}_{t},\bm{y}_{t})  - \min_{\bm{x} \in \XX} \sum_{t=1}^{T} \omega_{t}F(\bm{x},\bm{y}_{t}) \right).
\end{align*}
Now since $F$ is convex-concave, we can use the following upper bound:
\begin{align*}
\max_{\bm{y} \in \YY}  \sum_{t=1}^{T} \omega_{t}F(\bm{x}_{t},\bm{y}) -  \sum_{t=1}^{T} \omega_{t} F(\bm{x}_{t},\bm{y}_{t}) &  \leq \max_{\bm{y} \in \YY}  \omega_{t} \sum_{t=1}^{T} \langle \bm g_t ,\bm{y} \rangle - \sum_{t=1}^{T}  \omega_{t} \langle \bm g_t,\bm{y}_{t} \rangle, \\
 \sum_{t=1}^{T} \omega_{t} F(\bm{x}_{t},\bm{y}_{t})  - \min_{\bm{x} \in \XX} \sum_{t=1}^{T} \omega_{t}F(\bm{x},\bm{y}_{t}) & \leq  \sum_{t=1}^{T}  \omega_{t} \langle \bm f_t, \bm x_t \rangle - \min_{\bm{x} \in \XX} \sum_{t=1}^{T} \omega_{t} \langle \bm f_t, \bm x \rangle,
\end{align*}
where $\bm f_t \in \partial_{\bm x} F(\bm x_t,\bm y_t),\bm g_t \in \partial_{\bm y} F(\bm x_t,\bm y_t)$ (recall the repeated game framework presented  at the beginning of Section \ref{sec:setup}).

Now we have proved in Theorem \ref{th:cbap-linear-averaging-only-policy} that 
\begin{align*}
\frac{2}{T(T+1)} \max_{\bm{y} \in \YY} \sum_{t=1}^{T} \omega_{t} \langle \bm g_t ,\bm{y} \rangle - \sum_{t=1}^{T} \langle  \omega_{t} \bm g_t,\bm{y}_{t} \rangle & = O \left( \kappa L / \sqrt{T} \right), \\
\frac{2}{T(T+1)} \sum_{t=1}^{T}  \omega_{t} \langle \bm f_t, \bm x_t \rangle - \min_{\bm{x} \in \XX} \sum_{t=1}^{T}  \omega_{t} \langle \bm f_t, \bm x \rangle & = O \left( \kappa L / \sqrt{T} \right).
\end{align*}
Therefore, we can conclude that 
\[\max_{\bm{y} \in \YY}F(\bar{\bm{x}}_{T},\bm{y}) - \min_{\bm{x} \in \XX} F(\bm{x},\bar{\bm{y}}_{T}) = O \left( \kappa L / \sqrt{T} \right).\]
\begin{remark}
Note that we essentially reprove the folk theorem, except that we consider \textit{weighted} average for the decisions of both players.  This is because Theorem \ref{th:folk-final} uses linear averaging on decisions, whereas Theorem \ref{thm:folk theorem} is written with uniform averaging on decisions.
\end{remark}
\section{Proofs of the projections of Section \ref{sec:efficient-projection}}\label{app:proofs-of-projections}
We will extensively use \textit{Moreau's Decomposition Theorem}~\citep{combettes2013moreau}: for any convex cone $\C \subset \R^{n+1}$ and $\bm{u} \in \R^{n+1}$, we can decompose $\bm{u} = \pi_{\C}(\bm{u}) + \pi_{\C^{\circ}}(\bm{u})$, where $\C^{\circ}$ is the \textit{polar cone} of $\C$. Therefore, to compute $\pi_{\C}(\bm{u})$, it is sufficient to compute $\pi_{\C^{\circ}}(\bm{u})$, the orthogonal projection of $\bm{u}$ onto $\C^{\circ}$. We will see that in some cases, it is simpler to compute $\pi_{\C^{\circ}}(\bm{u})$ and then use $\pi_{\C}(\bm{u}) = \bm{u} - \pi_{C^{\circ}}(\bm{u})$ than directly computing $\pi_{C}(\bm{u})$  via solving \eqref{eq:projection-onto-C}.
\subsection{The case of the simplex}\label{app:proof-proj-simplex}
We consider $\XX = \Delta(n)$.
Note that in this case, $\kappa=\max_{ \bm{x} \in \Delta(n)} \| \bm{x} \|_{2} = 1$.   
The next lemma gives a closed-form expression of $\C^{\circ}$. 
\begin{lemma}\label{lem:simplex-C-polar} 
Let $\C = \textrm{cone}(\{1\} \times \Delta(n))$. Then $ \C^{\circ} = \{ (\tilde{y},\hat{\bm{y}}) \in \R^{n+1} \; | \; \max_{i=1,...,n} \hat{y}_{i} \leq - \tilde{y} \}.$
\end{lemma}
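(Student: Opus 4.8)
The plan is to first make the cone $\C$ fully explicit, and then prove the claimed identity by a direct double inclusion. Since $\Delta(n)$ is the convex hull of the standard basis vectors $\bm{e}_{1},\dots,\bm{e}_{n}$, the set $\{1\}\times\Delta(n)$ is convex, so its conic hull collapses to the simple form $\C=\{\alpha(1,\bm{x}) : \alpha\geq 0,\ \bm{x}\in\Delta(n)\}$; equivalently, $\C=\textrm{cone}\{(1,\bm{e}_{i}) : i=1,\dots,n\}$, the conic hull of the $n$ generators $(1,\bm{e}_{i})$. I would record this reduction up front, taking a moment to verify the convexity argument that turns nonnegative combinations of points of $\{1\}\times\Delta(n)$ into single scalings $\alpha(1,\bm{x})$, and to note that the apex $\bm{0}$ is recovered at $\alpha=0$.

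For the inclusion $\{(\tilde y,\hat{\bm y}) : \max_i \hat y_i \leq -\tilde y\}\subseteq \C^{\circ}$, I would take any $(\tilde y,\hat{\bm y})$ with $\hat y_i+\tilde y\leq 0$ for all $i$ and any generic point $(\tilde z,\hat{\bm z})=\alpha(1,\bm{x})\in\C$, and compute the inner product $\langle(\tilde y,\hat{\bm y}),\alpha(1,\bm{x})\rangle = \alpha(\tilde y + \langle\hat{\bm y},\bm{x}\rangle)$. Using $\bm{x}\in\Delta(n)$, so that $x_i\geq 0$ and $\sum_i x_i = 1$, the convex combination $\langle\hat{\bm y},\bm{x}\rangle=\sum_i x_i\hat y_i$ is bounded above by $\max_i\hat y_i\leq -\tilde y$, which forces the whole expression to be $\leq 0$. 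Hence the point lies in $\C^{\circ}$.

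For the reverse inclusion $\C^{\circ}\subseteq\{(\tilde y,\hat{\bm y}) : \max_i \hat y_i \leq -\tilde y\}$, I would simply test the defining polar inequality against the generators: since each $(1,\bm{e}_i)\in\C$, any $(\tilde y,\hat{\bm y})\in\C^{\circ}$ satisfies $\langle(\tilde y,\hat{\bm y}),(1,\bm{e}_i)\rangle = \tilde y + \hat y_i \leq 0$ for every $i$, which is exactly the condition $\max_i \hat y_i \leq -\tilde y$.

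There is no real obstacle here: the statement is essentially the well-known fact that the polar of a finitely generated cone is cut out by the inequalities associated with its generators. The only points requiring care are the clean description of $\C$ (the convexity reduction and the $\alpha=0$ edge case) and being explicit that the generators $(1,\bm{e}_i)$ genuinely belong to $\C$, so that testing the polar inequality against them in the reverse inclusion is legitimate. I would therefore keep the generator characterization of $\C$ as the central internal step and let both inclusions follow in one or two lines each.
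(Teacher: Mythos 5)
Your proof is correct and takes essentially the same route as the paper: both rest on the parametrization $\C = \{\alpha(1,\bm{x}) : \alpha \geq 0,\ \bm{x} \in \Delta(n)\}$ and the fact that a linear function over $\Delta(n)$ attains its maximum at a vertex, so that $\max_{\bm{x} \in \Delta(n)} \langle \hat{\bm{y}},\bm{x} \rangle = \max_{i} \hat{y}_{i}$. The paper merely compresses your double inclusion into a single chain of equivalences, whereas you unpack the two directions as the convex-combination bound and testing against the generators $(1,\bm{e}_{i})$.
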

\begin{proof}[Proof of Lemma \ref{lem:simplex-C-polar}]
 Note that for $\bm{y}=(\tilde{y},\hat{\bm{y}}) \in \R^{n+1}$ we have
\begin{align*}
\bm{y} \in \C^{\circ} &  \iff \langle \bm{y},\bm{z} \rangle \leq 0, \forall \; \bm{z} \in \C \\
& \iff \langle (\tilde{y},\hat{\bm{y}}), \alpha(1,\bm{x}) \rangle \leq 0, \forall \; \bm{x} \in \Delta(n), \forall \; \alpha \geq 0 \\
& \iff \tilde{y} + \langle \hat{\bm{y}},\bm{x} \rangle \leq 0, \forall \; \bm{x} \in \Delta(n) \\
& \iff \max_{\bm{x} \in \Delta(n)} \langle \hat{\bm{y}},\bm{x} \rangle \leq - \tilde{y} \\
& \iff \max_{i=1,...,n} \hat{y}_{i} \leq - \tilde{y}.
\end{align*}
\end{proof}
For a given $\bm{u} = \left(\tilde{u},\hat{\bm{u}}\right)$,  computing $\pi_{\C^{\circ}}(\bm{u})$ is now equivalent to solving
\begin{equation}\label{eq:projection-onto-C-polar-simplex}
\min \{ ( \tilde{y}-\tilde{u} )^{2} +  \| \hat{\bm{y}}-\hat{\bm{u}} \|_{2}^{2}  \; | \; (\tilde{y},\hat{\bm{y}}) \in \R^{n+1}, \max_{i=1,...,n} \hat{y}_{i} \leq - \tilde{y} \}. 
\end{equation}
Using the reformulation \eqref{eq:projection-onto-C-polar-simplex},  we show that for a fixed $\tilde{y}$, the optimal $\hat{\bm{y}}(\tilde{y})$ can be computed in closed-form.  It is then possible to avoid a binary search over $\tilde{y}$ and to simply use a sorting algorithm to obtain the optimal $\tilde{y}$.  The next proposition summarizes our complexity result for $\XX = \Delta(n)$.
\begin{proposition}
An optimal solution $\pi_{\C^{\circ}}(\bm{u})$ to \eqref{eq:projection-onto-C-polar-simplex} can be computed in $O(n \log(n))$ time.
\end{proposition}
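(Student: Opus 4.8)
The plan is to exploit the structure of the polar cone $\C^{\circ}$ from Lemma~\ref{lem:simplex-C-polar} to collapse the $(n+1)$-dimensional projection \eqref{eq:projection-onto-C-polar-simplex} into a one-dimensional convex problem in the scalar $\tilde{y}$ alone. First I would fix $\tilde{y}$ and note that the residual problem in $\hat{\bm{y}}$ is fully separable across coordinates, each subject only to the box constraint $\hat{y}_{i} \leq -\tilde{y}$. Since the unconstrained minimizer of $(\hat{y}_i - \hat{u}_i)^2$ is $\hat{u}_i$, the constrained optimum is the closed-form clipping $\hat{y}_i(\tilde{y}) = \min\{\hat{u}_i, -\tilde{y}\}$. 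Substituting this back eliminates $\hat{\bm{y}}$ and leaves the single-variable function
\[ g(\tilde{y}) = (\tilde{y} - \tilde{u})^2 + \sum_{i \,:\, \hat{u}_i > -\tilde{y}} (\hat{u}_i + \tilde{y})^2, \]
because coordinates with $\hat{u}_i \leq -\tilde{y}$ contribute zero.

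Next I would argue that $g$ is convex and piecewise quadratic. Each summand equals $(\max\{0, \hat{u}_i + \tilde{y}\})^2$, a convex nondecreasing function of $\tilde{y}$, and $(\tilde{y} - \tilde{u})^2$ is convex, so $g$ is convex; its only nonsmooth points, where the active set $\{i : \hat{u}_i > -\tilde{y}\}$ changes, are the $n$ breakpoints $\tilde{y} = -\hat{u}_i$. The idea is then to sort these breakpoints in $O(n\log n)$ time and sweep over the resulting intervals. On each interval the active set is fixed, so $g$ is an explicit quadratic $a\tilde{y}^2 + b\tilde{y} + c$ whose coefficients update in $O(1)$ as a single breakpoint is crossed (adding or removing one term $(\hat{u}_i + \tilde{y})^2$). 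On each interval I compute the unconstrained vertex, clamp it to the interval, and retain the best value of $g$ seen; convexity guarantees this returns the global minimizer $\tilde{y}^\star$. The sweep costs $O(n)$, so the initial sort dominates and the total is $O(n\log n)$.

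Finally, from $\tilde{y}^\star$ I recover $\hat{\bm{y}}^\star$ through the clipping formula and assemble $\pi_{\C^{\circ}}(\bm{u}) = (\tilde{y}^\star, \hat{\bm{y}}^\star)$; if $\pi_{\C}(\bm{u})$ is ultimately needed, Moreau's decomposition yields it as $\bm{u} - \pi_{\C^{\circ}}(\bm{u})$. The main obstacle I anticipate is the bookkeeping of the sweep: I must verify that the quadratic coefficients are genuinely maintainable incrementally across breakpoints, and handle the boundary cases where $\tilde{y}^\star$ lands exactly at a breakpoint or where every per-interval vertex falls outside its interval (so the optimum sits at an endpoint). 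A cleaner alternative I would also present is that $g'$ is a continuous, strictly increasing, piecewise-linear function, so locating its unique zero reduces to a search among the sorted breakpoints; this removes most of the per-interval casework while preserving the $O(n\log n)$ bound.
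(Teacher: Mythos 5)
Your proposal is correct, and its first half --- fixing $\tilde{y}$, clipping coordinate-wise to get $\hat{y}_{i}(\tilde{y}) = \min\{\hat{u}_{i}, -\tilde{y}\}$, and reducing to the one-dimensional convex function $g(\tilde{y}) = (\tilde{y}-\tilde{u})^{2} + \| (\hat{\bm{u}}+\tilde{y}\bm{e})^{+}\|_{2}^{2}$ --- is exactly the paper's reduction (the paper calls this function $\phi$). Where you differ is in how the scalar problem is finished. The paper neither sweeps intervals nor differentiates: it invokes Moreau's decomposition to note that $\bm{u} - \pi_{\C^{\circ}}(\bm{u}) = \left(\tilde{u}-\tilde{y}^{*}, (\hat{\bm{u}}+\tilde{y}^{*}\bm{e})^{+}\right)$ must lie in $\C = \textrm{cone}(\{1\}\times\Delta(n))$, so that $(\hat{\bm{u}}+\tilde{y}^{*}\bm{e})^{+}/(\tilde{u}-\tilde{y}^{*}) \in \Delta(n)$, which forces $\tilde{y}^{*} + \sum_{i=1}^{n}\max\{\hat{u}_{i}+\tilde{y}^{*},0\} = \tilde{u}$; sorting the entries of $\hat{\bm{u}}$ then localizes and solves this monotone piecewise-linear equation in $O(n\log(n))$. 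Your primary method (sorted breakpoint sweep, per-interval quadratic vertex, clamp, keep the best) is a more elementary, self-contained route to the same bound, at the cost of the bookkeeping you flag. Note, however, that your ``cleaner alternative'' is not merely similar to the paper's argument --- it is identical in substance: $g'(\tilde{y})=0$ reads $\tilde{y}-\tilde{u}+\sum_{i=1}^{n}\max\{\hat{u}_{i}+\tilde{y},0\}=0$, which is precisely the paper's equation, obtained from first-order optimality rather than from conic feasibility. So both finishes hinge on the same equation; the paper's derivation buys a derivative-free argument with no per-interval casework, while yours buys independence from Moreau's theorem. One cosmetic quibble: $g$ is actually $C^{1}$ (each term has derivative $2\max\{0,\hat{u}_{i}+\tilde{y}\}$, which is continuous), so the breakpoints are not points of nonsmoothness but only of discontinuity of the second derivative; this does not affect your argument.
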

\begin{proof}
 Computing $\pi_{\C^{\circ}}(\bm{u})$ is equivalent to computing
\begin{align*}
\min \;  & ( \tilde{y}-\tilde{u} )^{2} +  \| \hat{\bm{y}}-\hat{\bm{u}} \|_{2}^{2} \\
& \tilde{y} \in \R,\hat{\bm{y}} \in \R^{n}, \\
& \max_{i=1,...,n} \hat{y}_{i} \leq - \tilde{y}.
\end{align*}
Let us fix $\tilde{y} \in \R$ and let us first solve
\begin{equation}\label{eq:y-hat-of-y-tilde}
\begin{aligned}
\min \;  &  \| \hat{\bm{y}}-\hat{\bm{u}} \|_{2}^{2} \\
&\hat{\bm{y}} \in \R^{n}, \\
& \max_{i=1,...,n} \hat{y}_{i} \leq - \tilde{y}.
\end{aligned}
\end{equation}

This is essentially the projection of $\hat{\bm{u}}$ on $(-\infty,-\tilde{y}]^{n}$. So a solution to \eqref{eq:y-hat-of-y-tilde} is $ \hat{y}_{i}(\tilde{y}) = \min \{-\tilde{y}, \hat{u}_{i} \}, \forall \; i=1,...,n.$
Note that in this case we have
$ \hat{\bm{u}} - \hat{\bm{y}}(\tilde{y}) = \left( \hat{\bm{u}} + \tilde{y}\bm{e} \right)^{+}.$
So overall the projection brings down to the optimization of $\phi: \R \mapsto \R_{+}$ such that
\begin{equation}\label{eq:function-F}
\phi: \tilde{y} \mapsto  (\tilde{y}-\tilde{u} )^{2} + \| \left( \hat{\bm{u}} + \tilde{y}\bm{e} \right)^{+} \|_{2}^{2}.
\end{equation}
In principle, we could use binary search with a doubling trick to compute a $\epsilon$-minimizer of the convex function $\phi$ in $O\left( \log(\epsilon^{-1}) \right)$ calls to $\phi$. However, it is possible to a minimizer $\tilde{y}^{*}$ of $\phi$ using the following remark.
By construction,  we know that $\bm{u} - \pi_{\C^{\circ}}(\bm{u}) \in \C$. Here,  $\C = \textrm{cone}\left( \{1\} \times \Delta(n) \right)$,  and $\bm{u} - \pi_{\C^{\circ}}(\bm{u}) = \left( \tilde{u}-\tilde{y}^{*},\left( \hat{\bm{u}} + \tilde{y}^{*}\bm{e} \right)^{+} \right).$ 
This proves that
\[ \dfrac{\left( \hat{\bm{u}} + \tilde{y}^{*}\bm{e} \right)^{+} }{\tilde{u}-\tilde{y}^{*}} \in \Delta(n),\]
which in turns imply that 
\begin{equation}\label{eq:simple-eq-y-tilde}
 \tilde{y}^{*}+ \sum_{i=1}^{n} \max \{ \hat{u}_{i} + \tilde{y}^{*},0 \}=\tilde{u}.
\end{equation}
We can use \eqref{eq:simple-eq-y-tilde} to efficiently compute $\tilde{y}^{*}$ without using any binary search. In particular, we can sort the coefficients of $\hat{\bm{u}}$ in $O\left( n \log(n) \right)$ operations, and use \eqref{eq:simple-eq-y-tilde} to find $\tilde{y}^{*}$.
\end{proof}
Having obtained $\pi_{\C^{\circ}}(\bm{u})$, we can obtain $\pi_{\C}(\bm{u})$ by using the identity $\pi_{\C}(\bm{u}) = \bm{u} - \pi_{\C^{\circ}}(\bm{u})$.  Note that \rmm{} and \rmp{} are obtained by choosing the closed-form feasible point corresponding to $\tilde{y} = \tilde{u}$ in \eqref{eq:projection-onto-C-polar-simplex}.  
\subsection{The case of an $\ell_{p}$ ball}\label{app:proof-ell-p-ball}
In this section we assume that $\XX = \{ \bm{x} \in \R^{n} \; | \ ; \| \bm{x} \|_{p} \leq 1\}$ with $p  \geq 1$ or $p=\infty$. 
The next lemma provides a closed-form reformulation of the polar cone $\C^{\circ}$.
\begin{lemma}\label{lem:ball-p-C-polar}
Let $\XX = \{ \bm{x} \in \R^{n} \; | \ ; \| \bm{x} \|_{p} \leq 1\},$ with $p  \geq 1$ or $p=\infty$.
 Then $\C^{\circ} = \{ (\tilde{y},\bm{y}) \in \R \times \R^{n} \; | \; \| \bm{y} \|_{q} \leq - \tilde{y} \}$,  with $q$ such that $1/p + 1/q =1$.
\end{lemma}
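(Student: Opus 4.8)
The plan is to read off $\C^{\circ}$ straight from the definition of the polar cone and then reduce the resulting condition to a support-function computation over the unit $\ell_p$ ball. Parametrizing an arbitrary element of $\C=\textrm{cone}(\{\kappa\}\times\XX)$ as $\alpha(\kappa,\bm{x})$ with $\alpha\ge 0$ and $\bm{x}\in\XX$, and writing a candidate polar vector as $(\tilde{y},\bm{y})\in\R\times\R^n$, the defining inequality of $\C^{\circ}$ becomes
\[
\langle(\tilde{y},\bm{y}),\,\alpha(\kappa,\bm{x})\rangle=\alpha\bigl(\kappa\tilde{y}+\langle\bm{y},\bm{x}\rangle\bigr)\le 0,\qquad\forall\,\alpha\ge 0,\ \forall\,\bm{x}\in\XX.
\]
Since $\alpha$ ranges over all of $\R_{+}$, the nonnegative scalar can be divided out, so membership in $\C^{\circ}$ is equivalent to $\kappa\tilde{y}+\langle\bm{y},\bm{x}\rangle\le 0$ for every $\bm{x}\in\XX$, i.e. to $\max_{\bm{x}\in\XX}\langle\bm{y},\bm{x}\rangle\le-\kappa\tilde{y}$. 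This mirrors the simplex computation in Lemma \ref{lem:simplex-C-polar}, the only difference being which set $\XX$ we maximize over.

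The heart of the argument is therefore to evaluate the support function $\sigma_{\XX}(\bm{y})=\max_{\|\bm{x}\|_p\le 1}\langle\bm{y},\bm{x}\rangle$ of the unit $\ell_p$ ball. I would invoke the standard duality between the $\ell_p$ and $\ell_q$ norms: $\sigma_{\XX}(\bm{y})=\|\bm{y}\|_q$ whenever $1/p+1/q=1$. The upper bound $\langle\bm{y},\bm{x}\rangle\le\|\bm{y}\|_q\|\bm{x}\|_p\le\|\bm{y}\|_q$ is exactly H\"older's inequality, while the matching lower bound comes from exhibiting an explicit maximizer $\bm{x}^{*}$ on the unit sphere for which H\"older holds with equality: for $1<p<\infty$ one takes $x_i^{*}\propto\textrm{sign}(y_i)\,|y_i|^{q-1}$ and renormalizes in $\ell_p$, whereas the two boundary cases $p=1$ (so $q=\infty$) and $p=\infty$ (so $q=1$) are handled by placing all mass on a coordinate of largest modulus, respectively by matching signs coordinatewise. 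Establishing this dual-norm identity cleanly across the full range $p\in[1,\infty]$, including those boundary cases, is the step I expect to require the most care.

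Combining the two steps, the polar cone is
\[
\C^{\circ}=\{(\tilde{y},\bm{y})\in\R\times\R^{n}\;:\;\|\bm{y}\|_q\le-\kappa\tilde{y}\},
\]
matching the stated description up to the positive scaling constant $\kappa=\max_{\bm{x}\in\XX}\|\bm{x}\|_2$ inherited from the definition of $\C$ (it is absent exactly when $\kappa=1$, as in the simplex case). Note in passing that $-\kappa\tilde{y}\ge\|\bm{y}\|_q\ge 0$ forces $\tilde{y}\le 0$ on $\C^{\circ}$, as expected. As in the simplex setting, the payoff of this reformulation is that, once $\C^{\circ}$ is written through a single norm constraint, computing $\pi_{\C^{\circ}}(\bm{u})$ — and hence $\pi_{\C}(\bm{u})=\bm{u}-\pi_{\C^{\circ}}(\bm{u})$ by Moreau's decomposition — collapses to a one-dimensional problem in $\tilde{y}$, which is what the subsequent propositions exploit.
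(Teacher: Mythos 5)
Your proposal is correct and follows essentially the same route as the paper's proof: unfold the definition of the polar cone over the generators $\alpha(\kappa,\bm{x})$ of $\C$, divide out the nonnegative scalar $\alpha$, and identify the support function of the unit $\ell_p$ ball with the dual norm $\|\cdot\|_q$ via H\"older's inequality. If anything, you are more careful than the paper on one point: the paper's proof writes the generators as $\alpha(1,\bm{x})$, implicitly taking $\kappa=1$, whereas you correctly carry the factor $\kappa$ through to get $\|\bm{y}\|_q\le-\kappa\tilde{y}$ and note that it vanishes exactly when $\kappa=\max_{\bm{x}\in\XX}\|\bm{x}\|_2=1$ (true for $p\le 2$, but not for $p>2$, e.g.\ $\kappa=\sqrt{n}$ when $p=\infty$).
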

\begin{proof}[Proof of Lemma \ref{lem:ball-p-C-polar}]
Let us write $B_{p}(1) = \{ \bm{z} \in \R^{n} \; | \; \| \bm{z} \|_{p} \leq 1\}$.
 Note that for $\bm{y}=(\tilde{y},\hat{\bm{y}}) \in \R^{n+1}$ we have
\begin{align*}
\bm{y} \in \C^{\circ} &  \iff \langle \bm{y},\bm{z} \rangle \leq 0, \forall \; \bm{z} \in \C \\
& \iff \langle (\tilde{y},\hat{\bm{y}}), \alpha(1,\bm{x}) \rangle \leq 0, \forall \; \bm{x} \in B_{p}(1), \forall \; \alpha \geq 0 \\
& \iff \tilde{y} + \langle \hat{\bm{y}},\bm{x} \rangle \leq 0, \forall \; \bm{x} \in B_{p}(1), \\
& \iff \max_{\bm{x} \in B_{p}(1),} \langle \hat{\bm{y}},\bm{x} \rangle \leq - \tilde{y} \\
& \iff \| \bm{x} \|_{q} \leq - \tilde{y},
\end{align*}
since $\| \cdot \|_{q}$ is the dual norm of $\| \cdot \|_{p}$.
\end{proof}
The orthogonal projection problem onto $\C^{\circ}$ becomes
\begin{equation}\label{eq:projection-onto-C-polar-ell-p}
\min \{ ( \tilde{y}-\tilde{u} )^{2} +  \| \hat{\bm{y}}-\hat{\bm{u}} \|_{2}^{2}  \; | \; (\tilde{y},\hat{\bm{y}}) \in \R^{n+1}, \| \hat{\bm{y}} \|_{q} \leq - \tilde{y} \}. 
\end{equation}
For $p=2$, \eqref{eq:projection-onto-C-polar-ell-p} has a closed-form solution. 
For $p =1$, a quasi-closed-form solution to \eqref{eq:projection-onto-C-polar-ell-p} can be obtained  efficiently using sorting.  For $p = \infty$, it is more efficient to directly compute $\pi_{\C}(\bm{u})$. This is because the dual norm of $\| \cdot \|_{\infty}$ is $\| \cdot \|_{1}$. 
\begin{proposition}\label{prop:complexity-computing-ortho-proj-norm-p}
\begin{itemize}
\item  For $p=1$, $\pi_{\C^{\circ}}(\bm{u})$ can be computed in $O(n \log(n))$ arithmetic operations.  
\item For $p=\infty$, $\pi_{\C}(\bm{u})$ can be computed in $O(n \log(n))$ arithmetic operations.  
\item For $p = 2$,  $\pi_{\C}(\bm{u})$ can be computed in closed-form.
\end{itemize}
\end{proposition}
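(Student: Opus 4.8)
The plan is to treat all three cases by a single reduction that already appears for the simplex in Appendix~\ref{app:proof-proj-simplex}: collapse the $(n+1)$-dimensional conic projection to a one-dimensional search over the scalar coordinate $\tilde{y}$, solve the inner vector problem in closed form, and then minimize the resulting convex univariate function. This works uniformly because the polar cones produced by Lemma~\ref{lem:ball-p-C-polar} all have the separable form $\|\hat{\bm{y}}\|_{q}\le -\tilde{y}$, and the primal cones $\C=\textrm{cone}(\{1\}\times\XX)$ have the dual form $\|\hat{\bm{y}}\|_{p}\le \tilde{y}$.

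For $p=2$ (so $q=2$), the cone $\C=\{(\tilde{y},\hat{\bm{y}}) : \|\hat{\bm{y}}\|_{2}\le \tilde{y}\}$ is exactly the (self-dual) second-order cone, so I would project directly onto $\C$ via the classical closed-form Lorentz-cone formula rather than going through Moreau: $\pi_{\C}(\bm{u})=\bm{u}$ when $\|\hat{\bm{u}}\|_{2}\le \tilde{u}$, $\pi_{\C}(\bm{u})=\bm{0}$ when $\|\hat{\bm{u}}\|_{2}\le -\tilde{u}$, and $\pi_{\C}(\bm{u})=\tfrac{\tilde{u}+\|\hat{\bm{u}}\|_{2}}{2\|\hat{\bm{u}}\|_{2}}\,(\|\hat{\bm{u}}\|_{2},\hat{\bm{u}})$ otherwise. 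Verifying this amounts to checking the KKT conditions of \eqref{eq:projection-onto-C}, and the cost is only the $O(n)$ needed to form $\|\hat{\bm{u}}\|_{2}$.

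For $p=1$ (so $q=\infty$) I would project onto $\C^{\circ}=\{(\tilde{y},\hat{\bm{y}}) : \|\hat{\bm{y}}\|_{\infty}\le -\tilde{y}\}$ as in \eqref{eq:projection-onto-C-polar-ell-p}. Fixing $\tilde{y}\le 0$ and writing $b=-\tilde{y}\ge 0$, the inner problem $\min\{\|\hat{\bm{y}}-\hat{\bm{u}}\|_{2}^{2} : \|\hat{\bm{y}}\|_{\infty}\le b\}$ is a projection onto the box $[-b,b]^{n}$, which is separable and solved coordinatewise by clamping, $\hat{y}_{i}(b)=\max(-b,\min(b,\hat{u}_{i}))$, with residual $\max(|\hat{u}_{i}|-b,0)$. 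Substituting back yields the univariate objective $\phi(b)=(\tilde{u}+b)^{2}+\sum_{i}(\max(|\hat{u}_{i}|-b,0))^{2}$, which is convex and piecewise quadratic with breakpoints exactly at $b=|\hat{u}_{i}|$. Sorting the $|\hat{u}_{i}|$ in $O(n\log n)$ and scanning for the interval where $\phi'$ changes sign determines the optimal $b$ in closed form on that interval; equivalently, the complementarity relation $\bm{u}-\pi_{\C^{\circ}}(\bm{u})\in\C$ from Moreau's theorem gives an equation analogous to \eqref{eq:simple-eq-y-tilde} pinning down $\tilde{y}^{*}$ directly, and then $\pi_{\C}(\bm{u})=\bm{u}-\pi_{\C^{\circ}}(\bm{u})$.

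The case $p=\infty$ (so $q=1$) is where projecting onto $\C$ rather than $\C^{\circ}$ pays off: the polar constraint $\|\hat{\bm{y}}\|_{1}\le -\tilde{y}$ is less convenient than the box structure of $\C=\{(\tilde{y},\hat{\bm{y}}) : \|\hat{\bm{y}}\|_{\infty}\le \tilde{y}\}$ itself. I would repeat the $p=1$ argument on $\C$ directly: for fixed $\tilde{y}\ge 0$ the inner problem again clamps $\hat{u}_{i}$ to $[-\tilde{y},\tilde{y}]$, and the reduced objective $\psi(\tilde{y})=(\tilde{y}-\tilde{u})^{2}+\sum_{i}(\max(|\hat{u}_{i}|-\tilde{y},0))^{2}$ is minimized by the same sort-and-scan in $O(n\log n)$. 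The main obstacle across the $p\in\{1,\infty\}$ cases is precisely this univariate step: one must argue convexity and piecewise-quadraticity carefully to justify that the breakpoints are exactly $\{|\hat{u}_{i}|\}$ and that a single sort recovers the optimum \emph{exactly}, rather than settling for an $\epsilon$-accurate binary search — mirroring how \eqref{eq:simple-eq-y-tilde} let the simplex case avoid binary search. The $p=2$ case, by contrast, is routine once the second-order cone formula is invoked.
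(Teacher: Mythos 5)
Your proposal is correct and takes essentially the same route as the paper: use the polar-cone description from Lemma~\ref{lem:ball-p-C-polar}, fix the scalar coordinate so the inner vector problem becomes a closed-form ball projection (clamping for $p\in\{1,\infty\}$, radial scaling for $p=2$), minimize the resulting convex piecewise-quadratic univariate function by sorting in $O(n\log n)$, and recover the complementary projection via Moreau's decomposition $\pi_{\C}(\bm{u})=\bm{u}-\pi_{\C^{\circ}}(\bm{u})$. The only real deviation is at $p=2$, where you invoke the standard three-case self-dual Lorentz-cone projection formula directly; this is in fact slightly more careful than the paper's derivation, which records only the generic-case closed form obtained from first-order conditions on the reduced univariate problem, and your $p=1$ residual $\max(|\hat{u}_{i}|-b,0)$ correctly accounts for two-sided clamping.
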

\begin{proof}
\textbf{The case $p=1$.}
Assume that $p=1$.  Then $\| \cdot \|_{q} = \| \cdot \|_{\infty}$. 
We want to compute the projection of $\left(\tilde{u},\hat{\bm{u}} \right)$ on $\C^{\circ}$:
\begin{equation}\label{eq:ortho-proj-onto-polar-norm-1}
\begin{aligned}
\min_{\bm{y} \in \C^{\circ}} \| \bm{y} - \bm{u} \|_{2}^{2} = \min & \; (\tilde{y}-\tilde{u})^{2} + \| \hat{\bm{y}}-\hat{\bm{u}} \|_{2}^{2} \\
& \; \tilde{y} \in \R, \hat{\bm{y}} \in \R^{n}, \\
& \| \hat{\bm{y}} \|_{\infty} \leq -\tilde{y}.
\end{aligned}
\end{equation}
For a fixed $\tilde{y}$, we want to compute
\begin{equation}\label{eq:proj-norm-q-y-hat-infty}
\begin{aligned}
 \min & \; \| \hat{\bm{y}}-\hat{\bm{u}} \|_{2}^{2} \\
& \hat{\bm{y}} \in \R^{n}, \\
& \| \hat{\bm{y}} \|_{\infty} \leq -\tilde{y}.
\end{aligned}
\end{equation}
 The projection \eqref{eq:proj-norm-q-y-hat-infty} can be computed in closed-form as
 \begin{equation}\label{eq:closed-form-tilde-y-for-p-1} 
 \hat{\bm{y}}^{*}(\tilde{y})  =  \min \{ - \tilde{y}, \max \{ \tilde{y}, \hat{\bm{u}} \} \}
 \end{equation}
since this is simply the orthogonal projection of $\hat{\bm{u}}$ onto the $\ell_{\infty}$ ball of radius $-\tilde{y}$.  Let us call $\phi: \R \mapsto \R$ such that
\[ \phi(\tilde{y}) = \left( \tilde{y} - \tilde{u} \right)^{2} + \| \hat{\bm{y}}^{*}(\tilde{y}) - \hat{\bm{u}} \|_{2}^{2}.\]
Because of the closed-form expression for $ \hat{\bm{y}}^{*}(\tilde{y}) $ as in \eqref{eq:closed-form-tilde-y-for-p-1}, we have
 \[ \phi: \tilde{y} \mapsto  \left( \tilde{y} - \tilde{u} \right)^{2}  + \| \left( \hat{\bm{u}} + \tilde{y}\bm{e} \right)^{+} \|_{2}^{2}.\]
Finding a minimizer of $\phi$ can be done in $O(n\log(n))$,  with the same methods as in the proof in the previous section (Appendix \ref{app:proof-proj-simplex}).

\vspace{2mm}
\noindent
\textbf{The case $p=\infty$.}
Let $p=\infty$.  The problem of computing $\pi_{\C}(\bm{u})$, the orthogonal projection onto the cone $\C$,  is equivalent to 
\begin{equation}\label{eq:ortho-proj-onto-cone-norm-infinity}
\begin{aligned}
\min_{\bm{y} \in \C^{\circ}} \| \bm{y} - \bm{u} \|_{2}^{2} = \min & \; (\tilde{y}-\tilde{u})^{2} + \| \hat{\bm{y}}-\hat{\bm{u}} \|_{2}^{2} \\
& \; \tilde{y} \in \R, \hat{\bm{y}} \in \R^{n}, \\
& \| \hat{\bm{y}} \|_{\infty} \leq \tilde{y}.
\end{aligned}
\end{equation}
Note the similarity between \eqref{eq:ortho-proj-onto-cone-norm-infinity} (computing the orthogonal projection onto $\C$ when $p=\infty$), and \eqref{eq:ortho-proj-onto-polar-norm-1} (computing the orthogonal projection onto $\C^{\circ}$ when $p=1$). From Lemma \ref{lem:ball-p-C-polar}, we know that this is the case because $\| \cdot \|_{1}$ and $\| \cdot \|_{\infty}$ are dual norms to each other.

Therefore, the methods described for computing $\pi_{\C^{\circ}}(\bm{u})$ for $p=1$  can be applied to the case $p=\infty$  for directly computing $\pi_{\C}(\bm{u})$. This gives the complexity results as stated in Proposition \ref{prop:complexity-computing-ortho-proj-norm-p}: $\pi_{\C}(\bm{u})$ can be computed in $O(n\log(n))$ operations.

\vspace{2mm}
\noindent
\textbf{The case $p=2$.}
Let $\| \cdot \|_{p} = \| \cdot \|_{2}$, then $\| \cdot \|_{q} = \| \cdot \|_{2}$.  
Let us fix $\tilde{y}$ and consider solving
\begin{equation}\label{eq:proj-norm-q-y-hat-2}
\begin{aligned}
 \min & \; \| \hat{\bm{y}}-\hat{\bm{u}} \|_{2}^{2} \\
& \hat{\bm{y}} \in \R^{n}, \\
& \| \hat{\bm{y}} \|_{2} \leq -\tilde{y}.
\end{aligned}
\end{equation}
The projection \eqref{eq:proj-norm-q-y-hat-2} can be computed in closed-form as
\[ \hat{\bm{y}}^{*}(\tilde{y}) = \left( - \tilde{y} \right) \dfrac{\hat{\bm{u}}}{\| \hat{\bm{u}} \|_{2}},\]
since this is just the orthogonal projection of the vector $\hat{\bm{u}}$ onto the $\ell_{2}$-ball of radius $-\tilde{y}$.
Let us call $\phi: \R \mapsto \R$ such that
\[ \phi(\tilde{y}) = \left( \tilde{y} - \tilde{u} \right)^{2} + \| \hat{\bm{y}}^{*}(\tilde{y}) - \hat{\bm{u}} \|_{2}^{2}.\]
Note that here, $\tilde{y} \mapsto \hat{\bm{y}}^{*}(\tilde{y})$ is differentiable.  Therefore $\phi:\tilde{y} \mapsto \left( \tilde{y} - \tilde{u} \right)^{2} + \| \hat{\bm{y}}^{*}(\tilde{y}) - \hat{\bm{u}} \|_{2}^{2}$ is also differentiable.  First-order optimality conditions yield a closed-form solution for computing $(\tilde{y}^{*},\hat{\bm{y}}^{*})=\pi_{\C^{\circ}}(\bm{u})$, as 
\begin{equation}
\tilde{y}^{*} = \dfrac{\tilde{u} - \| \hat{\bm{u}} \|_{2}}{2}, \hat{\bm{y}}^{*} = -\dfrac{1}{2}(\tilde{u} - \| \hat{\bm{u}} \|_{2})\frac{\hat{\bm{u}}}{\| \hat{\bm{u}} \|_{2}}.
\end{equation}
\end{proof}
\subsection{The case of an ellipsoidal confidence region in the simplex}
\label{sec:CBA-subregion}
In this section we assume that $\XX$ is
$\XX = \{ \bm{x} \in \Delta(n) \; | \; \| \bm{x} - \bm{x}_{0} \|_{2} \leq \epsilon_{x} \}$. 
We also assume that $\{ \bm{x} \in \R^{n} | \bm{x}^{\top}\bm{e}=1 \} \bigcap \{ \bm{x} \in \R^{n} \; | \; \| \bm{x}- \bm{x}_{0} \|_{2} \leq \epsilon_{x} \} \subseteq \Delta(n)$,  so that we can write $\XX = \bm{x}_{0} + \epsilon \tilde{B},$ where
$ \tilde{B} = \{ \bm{z} \in \R^{n} \; | \; \bm{z}^{\top}\bm{e}=0, \| \bm{z} \|_{2} \leq 1\}.$

Suppose we made a sequence of decisions $\bm{x}_{1}, ...,\bm{x}_{T}$, which can be written as $\bm{x}_{t} = \bm{x}_{0} + \epsilon \bm{z}_{t}$ for $\bm{z}_{t} \in \tilde{B}.$ Then it is clear that for any sequence of payoffs $\bm{f}_{1},...,\bm{f}_{T}$, we have
\begin{align}\label{eq:first-reformulation-regret}
\sum_{t=1}^{T} \omega_{t} \langle \bm{f}_{t},\bm{x}_{t} \rangle - \min_{\bm{x} \in \XX}  \sum_{t=1}^{T} \omega_{t} \langle \bm{f}_{t},\bm{x} \rangle = \epsilon_{x} \left( \sum_{t=1}^{T} \omega_{t} \langle \bm{f}_{t},\bm{z}_{t} \rangle - \min_{\bm{z} \in \tilde{B}}  \sum_{t=1}^{T}  \omega_{t}\langle \bm{f}_{t},\bm{z} \rangle \right).
\end{align}
Therefore,  if we run \cbap{} on the set $\tilde{B}$ to obtain $O \left( \sqrt{T} \right)$ growth of the right-hand side of \eqref{eq:first-reformulation-regret},  we obtain a no-regret algorithm for $\XX$. We now show how to run \cbap{} for the set $\tilde{B}$.
Let $\V = \{ \bm{v} \in \R^{n} \; | \; \bm{v}^{\top}\bm{e}=0\}.$
 We use the following orthonormal basis of $\V$:
 let $\bm{v}_{1}, ..., \bm{v}_{n-1} \in \R^{n}$ be the vectors
$\bm{v}_{i} = \sqrt{i/(i+1)} \left(1/i, ..., 1/i, -1, 0,...,0 \right), \forall \; i=1,...,n-1,$ where the component $1/i$ is repeated $i$ times.
The vectors $\bm{v}_{1},...,\bm{v}_{n-1}$ are orthonormal and constitute a basis of $\V$ \citep{egozcue2003isometric}.  Writing $\bm{V} = \left(\bm{v}_{1},...,\bm{v}_{n-1} \right) \in \R^{n \times (n-1)}$,  and noting that $\bm{V}^{\top}\bm{V} = \bm{I}$, we can write $\tilde{B} = \{ \bm{Vs} \; | \; \bm{s} \in \R^{n-1},\| \bm{s} \|_{2} \leq 1 \}.$
Now, if $\bm{x} = \bm{x}_{0} + \epsilon_{x} \bm{z}_{t}$ with $\bm{z}_{t} \in \V$, we have $\bm{z}_{t} = \bm{Vs}_{t}$,  for $\bm{s}_{t} \in \R^{n-1}$ and $\| \bm{s} \|_{2} \leq 1$.  Finally,
\begin{align}\label{eq:second-reformulation-regret}
\sum_{t=1}^{T} \omega_{t} \langle \bm{f}_{t},\bm{x}_{t} \rangle - \min_{\bm{x} \in \XX}  \sum_{t=1}^{T} \omega_{t} \langle \bm{f}_{t},\bm{x} \rangle =  \epsilon_{x} \left( \sum_{t=1}^{T} \omega_{t} \langle \bm{V}^{\top} \bm{f}_{t},\bm{s}_{t} \rangle - \min_{\bm{s} \in \R^{n-1},\|\bm{s}\|_{2} \leq 1}  \sum_{t=1}^{T} \omega_{t} \langle \bm{V}^{\top} \bm{f}_{t},\bm{s} \rangle \right).
\end{align}
Therefore,  to obtain a regret minimizer for the left-hand side of \eqref{eq:second-reformulation-regret} with observed payoffs $\left( \bm{f} \right)_{t \geq 0}$, we can run \cbap{} on the right-hand side,  where the decision set is an $\ell_{2}$ ball and the sequence of observed payoffs is $\left( \bm{V}^{\top}\bm{f}_{t} \right)_{t \geq 0}$.  
In the previous section we showed how to efficiently instantiate \cbap{} in this setting (see Proposition \ref{prop:complexity-computing-ortho-proj-norm-p}).
\begin{remark}
In this section we have highlighted a sequence of reformulations of the regret, from \eqref{eq:first-reformulation-regret} to \eqref{eq:second-reformulation-regret}.  We essentially showed how to instantiate \cbap{} for settings where the decision set $\XX$ is the intersection of an $\ell_{2}$ ball with a hyperplane for which we have an orthonormal basis.
\end{remark}
\section{Additional details and numerical experiments for matrix games and EFGs}\label{app:simu-matrix-games-and-EFGs}
\subsection{Numerical setup}\label{app:numerical-setup-details}
\paragraph{Numerical setup for matrix games}
For the experiments on matrix games, we sample at random the matrix of payoffs $\bm{A} \in \R^{n \times m}$ and we let $n,m=10$, where $n,m$ represent the number of actions of each player. We average our results over $70$ instances. The decision sets $\XX$ and $\YY$ are given as $\XX = \Delta(n)$ and $\YY = \Delta(m)$. 
\paragraph{Alternation} Alternation is a method which improves the performances of \rmm{} and \rmp~\citep{burch2019revisiting}. We leave proving this for \cba{} and \cbap{} to future works. Using alternation,  the players play in turn, instead of playing at the same time. In particular, the $y$-player may observe the current decision $\bm{x}_{t}$ of the $x$-player at period $t$, before choosing its own decision $\bm{y}_{t}$.  For \cba{} and \cbap, it is implemented as follows. At period $t \geq 2$,
 \begin{enumerate}
 \item The $x$-player chooses $\bm{x}_{t}$ using its payoff $\bm{u}^{x}_{t-1}: \bm{x}_{t} = \chp(\bm{u}^{x}_{t-1}).$
 \item The $y$-player observes $\bm{g} \in \partial_{\bm{y}} F(\bm{x}_{t},\bm{y}_{t-1})$ and updates $\bm{u}^{y}_{t}$:
 \[  \bm{u}^{y}_{t} = \upp_{\cbap}(\bm{u}_{t-1}^{y}, \bm{y}_{t-1},\bm{g},\omega_{t},\sum_{\tau=1}^{t-1} \omega_{\tau}).\]
 \item The $y$-player chooses $\bm{y}_{t}$ using $\bm{u}^{y}_{t-1}: \bm{y}_{t} = \chp(\bm{u}^{y}_{t}).$
 \item The $x$-player observes $\bm{f} \in \partial_{\bm{x}} F(\bm{x}_{t},\bm{y}_{t} )$ and updates $\bm{u}^{x}_{t}$:
\[  \bm{u}^{x}_{t} = \upp_{\cbap}(\bm{u}_{t}^{x}, \bm{x}_{t},\bm{f},\omega_{t},\sum_{\tau=1}^{t-1} \omega_{\tau}).\]
 \end{enumerate}
 \subsection{Comparing \rmm, \rmp, \cba, and \cbap{} on matrix games}\label{app:comparing-rm-cba} In Figure \ref{fig:two-player-simplex-uniform} and Figure \ref{fig:two-player-simplex-normal}, we show the performances of \rmm, \rmp, \cba{} and \cbap{} with and without alternation, and with and without linear averaging.  On the $y$-axis we show the duality gap of the current averaged decisions $(\bar{\bm{x}}_{T},\bar{\bm{y}}_{T})$. On the $x$-axis we show the number of iterations. 
 \tb{
\begin{itemize}
\item In Figure \ref{fig:subfig:comp-simplex-without-alt-uniform} and Figure \ref{fig:subfig:comp-simplex-without-alt-normal}, the four algorithms do not use alternation nor linear averaging, i.e., the four algorithms use uniform weights on the sequence of decisions. We note that \rmp{} is the best algorithm in this setting, outperforming \cbap, which performs similarly as \rmm.
\item In Figure \ref{fig:subfig:comp-simplex-without-alt-with-linavg-uniform} and Figure \ref{fig:subfig:comp-simplex-without-alt-with-linavg-normal}, we note that linear averaging (only on the decisions) slightly improves the performances of \rmp{} and \cbap{}. Note that \rmm{} and \cba{} are not known to be compatible with linear averaging (on decisions only), so we use uniform weights for \rmm{} and \cba{} here.
\item In Figures \ref{fig:subfig:comp-simplex-with-alt-uniform} and Figure \ref{fig:subfig:comp-simplex-with-alt-normal}, the four algorithms use alternation, but not linear averaging. The performances of the four algorithms are very similar.
\item Finally, in Figure \ref{fig:subfig:comp-simplex-with-lin-avg-uniform} and Figure \ref{fig:subfig:comp-simplex-with-lin-avg-normal}, \rmp{} and \cbap{} use linear averaging on decisions \textit{and} alternation.  We see that the strongest performances are achieved by \cbap. Recall that \rmm{} and \cba{} are not known to be compatible with linear averaging (on decisions only), so we do not show their performances here.
\item \textbf{Conclusion of our experiments.} We see that it is both alternation {\it and} linear averaging that enable the strong empirical performances of \cbap, rendering it capable to outperform \rmp.  Crucially, it is the ``$+$ operation'' that enables \cbap{} (and \rmp{}) to be compatible with linear averaging on the decisions only and to outperform \cba{} and \rmm.
\end{itemize}}

\begin{figure}[H]
 \begin{subfigure}{0.24\textwidth}
\centering
         \includegraphics[width=1.0\linewidth]{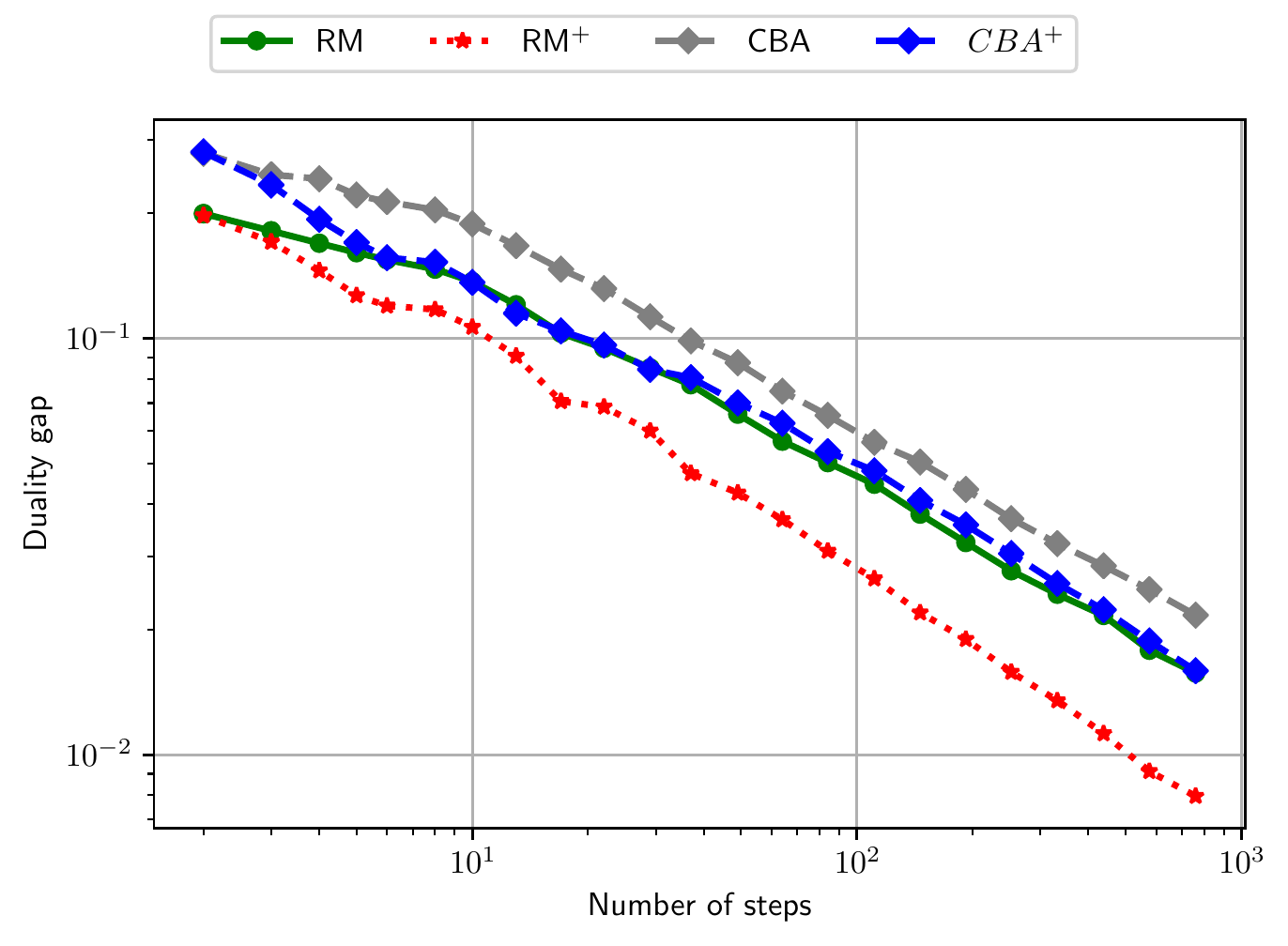}
         \caption{Without alternation nor linear averaging.}
         \label{fig:subfig:comp-simplex-without-alt-uniform}
  \end{subfigure}
     \begin{subfigure}{0.24\textwidth}
\centering
         \includegraphics[width=1.0\linewidth]{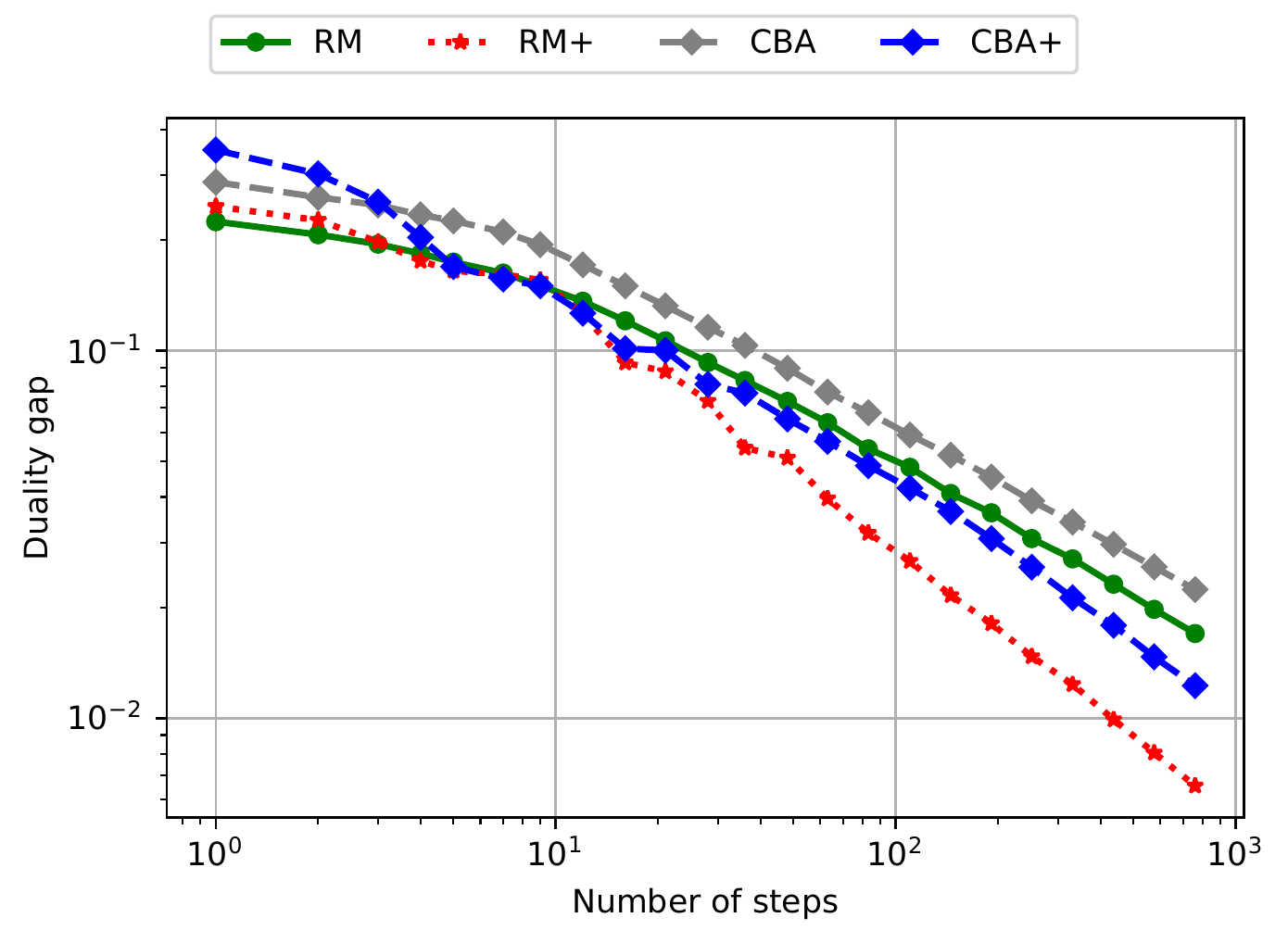}
         \caption{Without alternation but with linear averaging (only for \rmp{} and \cbap).}
          \label{fig:subfig:comp-simplex-without-alt-with-linavg-uniform}
  \end{subfigure}
     \begin{subfigure}{0.24\textwidth}
\centering
         \includegraphics[width=1.0\linewidth]{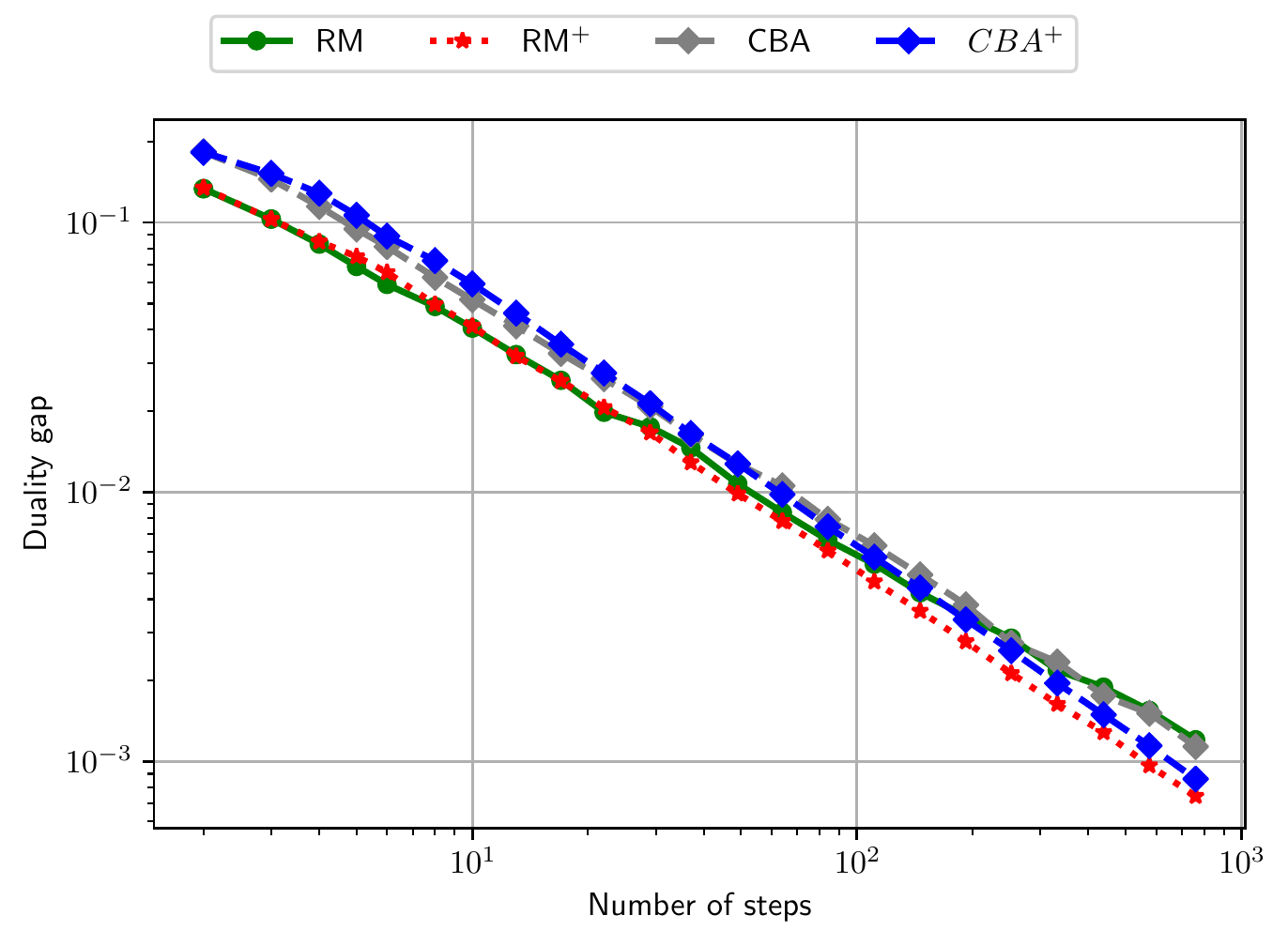}
         \caption{With alternation but no linear averaging.}
                  \label{fig:subfig:comp-simplex-with-alt-uniform}
  \end{subfigure}
     \begin{subfigure}{0.24\textwidth}
\centering
         \includegraphics[width=1.0\linewidth]{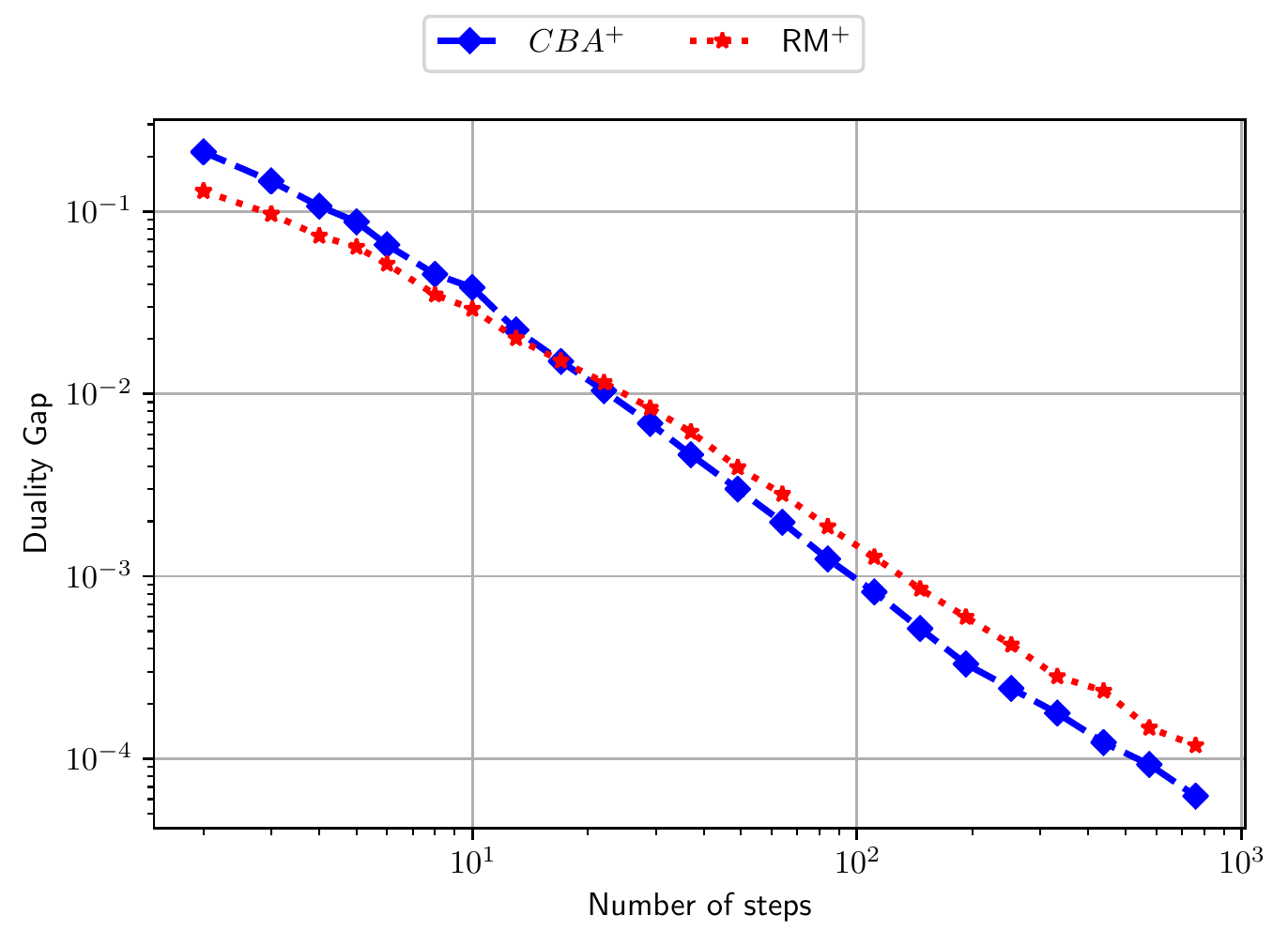}
         \caption{With alternation and linear averaging.}\label{fig:subfig:comp-simplex-with-lin-avg-uniform}
  \end{subfigure}
  \caption{Comparison of \rmm, \rmp, \cba{} and \cbap{} for $\XX = \Delta(n),\YY=\Delta(m)$ and random matrices, with and without alternations, and with and without linear averaging. We choose $n,m=10$ and $A_{ij} \sim U[0,1]$ over $70$ instances.}
  \label{fig:two-player-simplex-uniform}
\end{figure}

\begin{figure}[H]
 \begin{subfigure}{0.24\textwidth}
\centering
         \includegraphics[width=1.0\linewidth]{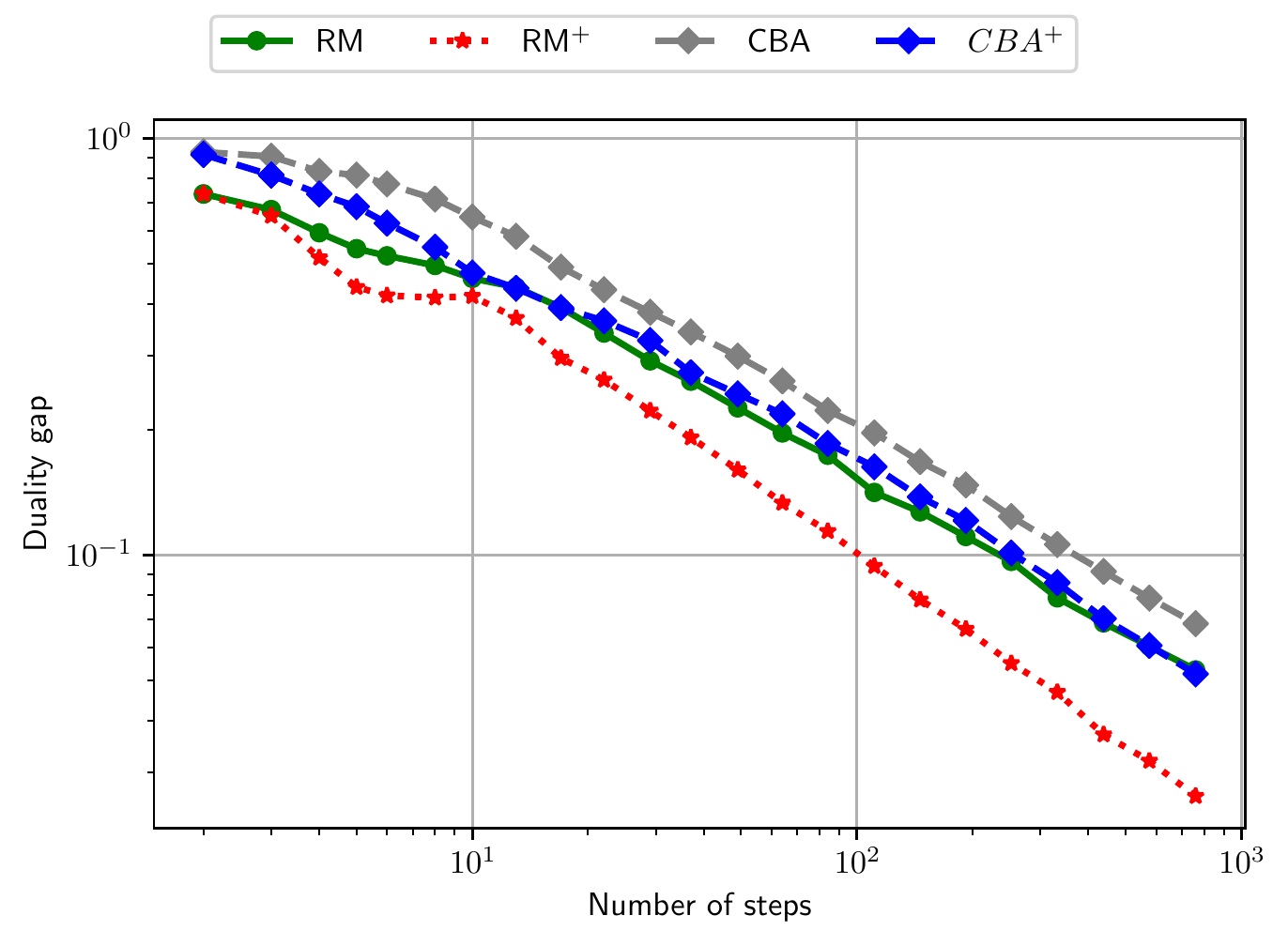}
         \caption{Without alternation nor linear averaging.}
                  \label{fig:subfig:comp-simplex-without-alt-normal}
  \end{subfigure}
              \begin{subfigure}{0.24\textwidth}
\centering
         \includegraphics[width=1.0\linewidth]{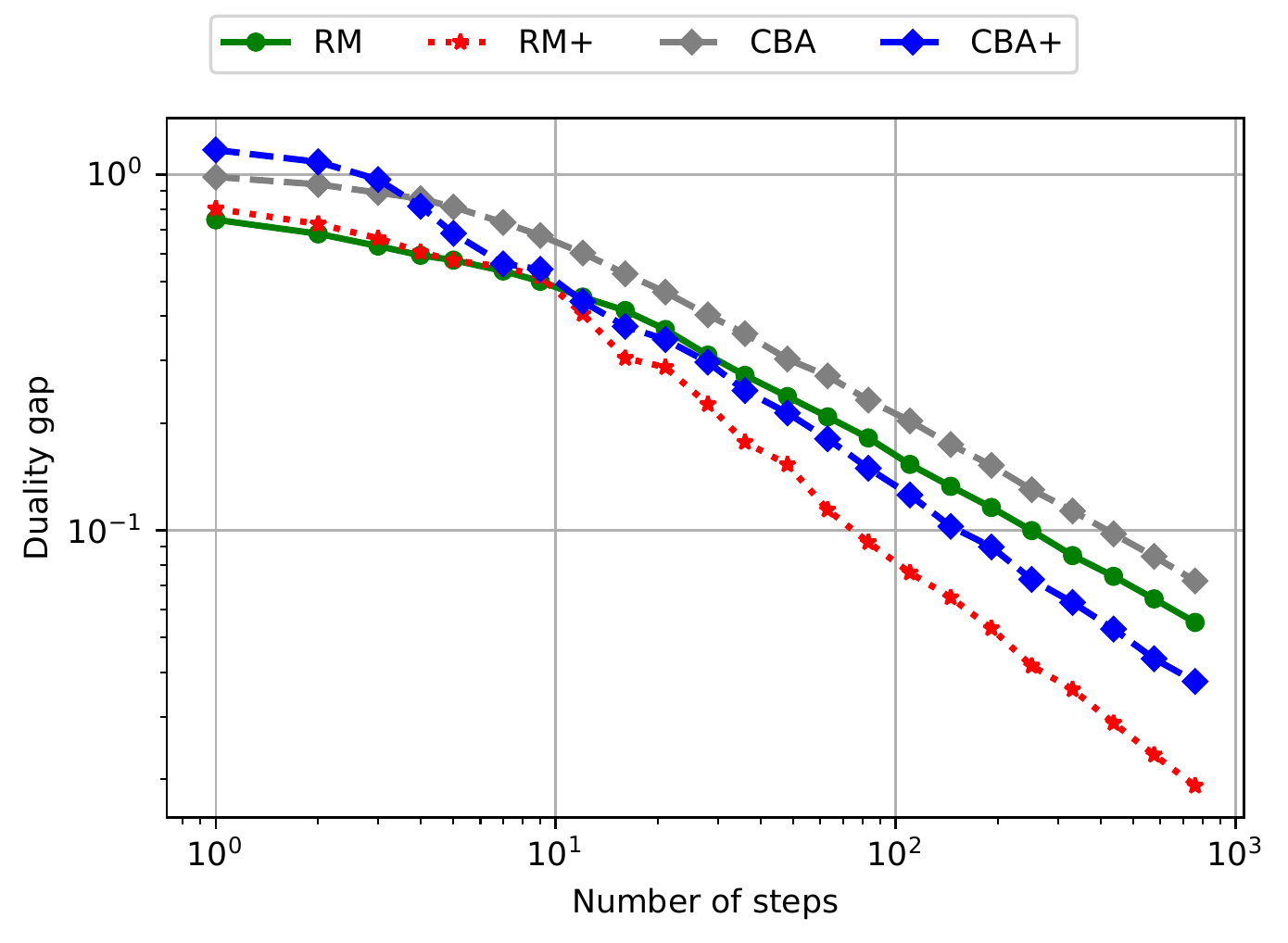}
         \caption{Without alternation but with linear averaging (only for \rmp{} and \cbap).}
          \label{fig:subfig:comp-simplex-without-alt-with-linavg-normal}
  \end{subfigure}
     \begin{subfigure}{0.24\textwidth}
\centering
         \includegraphics[width=1.0\linewidth]{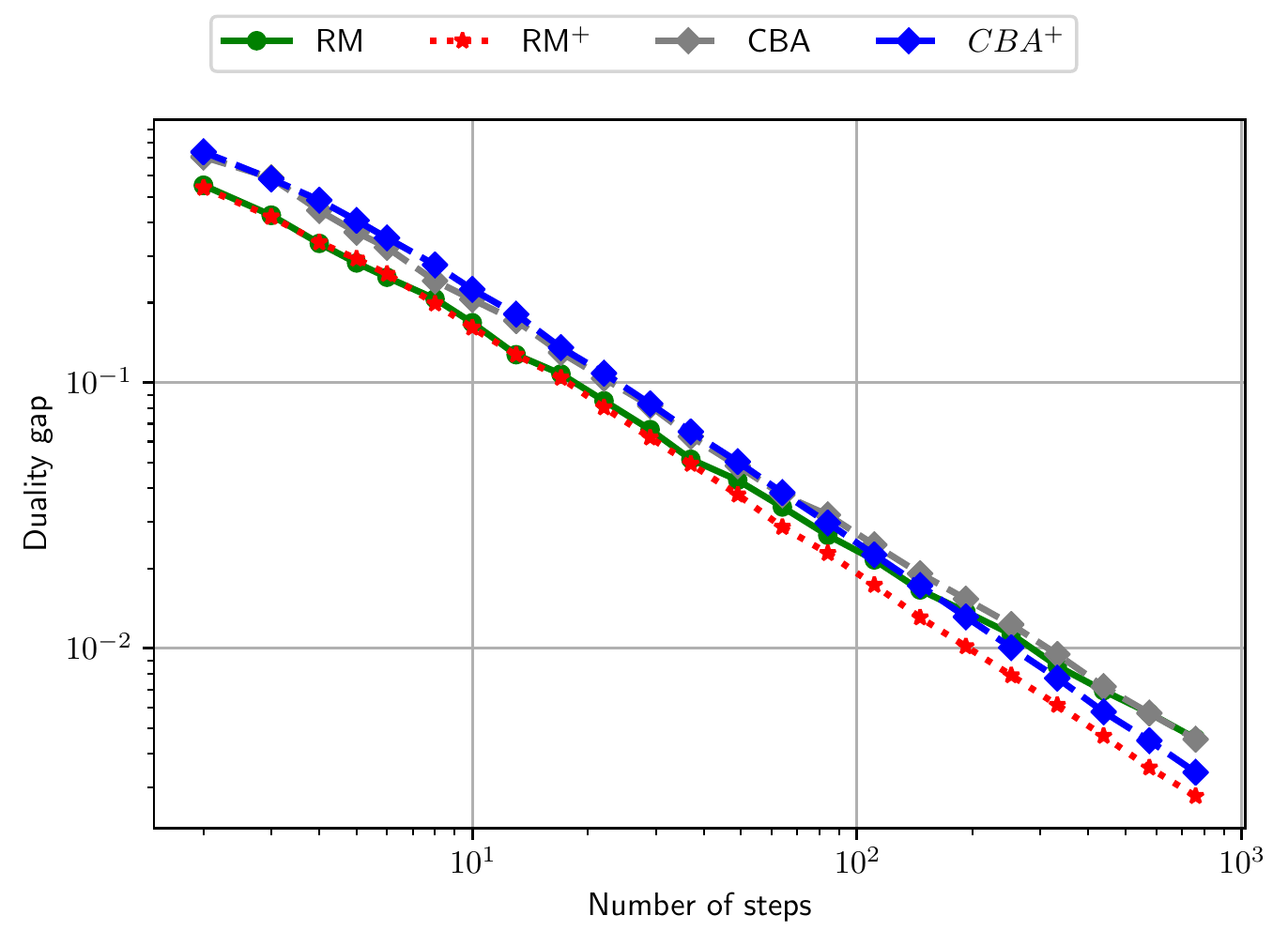}
         \caption{With alternation but no linear averaging.}\label{fig:subfig:comp-simplex-with-alt-normal}
           \end{subfigure}
     \begin{subfigure}{0.24\textwidth}
\centering
         \includegraphics[width=1.0\linewidth]{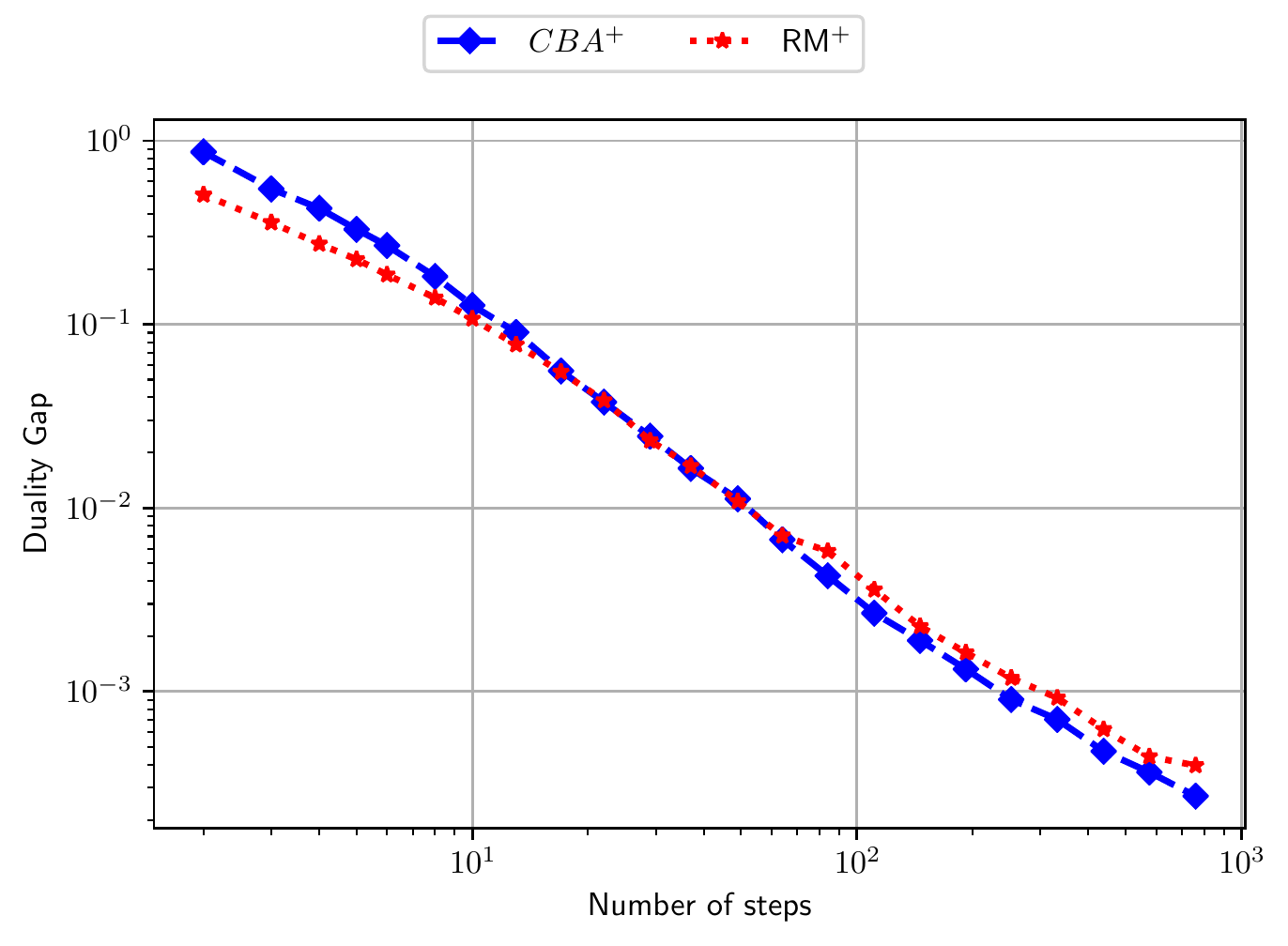}
         \caption{With alternation and linear averaging.}\label{fig:subfig:comp-simplex-with-lin-avg-normal}
  \end{subfigure}
  \caption{Comparison of \rmm, \rmp, \cba{} and \cbap{} for $\XX = \Delta(n),\YY=\Delta(m)$ and random matrices, with and without alternations, and with and without linear averaging. We choose $n,m=10$ and $A_{ij} \sim N(0,1)$ over $70$ instances.}
  \label{fig:two-player-simplex-normal}
\end{figure}

\tb{\subsection{Additional numerical experiments for matrix games}\label{app:simu-matrix-games}
We have seen in Appendix \ref{app:comparing-rm-cba} that \rmp{} and \cbap{} with alternation and linear averaging are outperforming \rmm{} and \cba{}. In Figure \ref{fig:subfig:matrix-uniform-parameters}, we have compared both \rmp{} and \cbap{} with AdaFTRL~\citep{orabona2015scale} and AdaHedge~\citep{de2014follow}, two algorithms that also enjoy the desirable \textit{scale-free} property, i.e., their sequences of decisions remain invariant when the losses are scaled by a constant factor. In the next figure, we provide additional comparisons of \rmp, \cbap, AdaHedge and AdaFTRL when the coefficients of the matrix of payoff are normally distributed. We found that \rmp{} and \cbap{} are both outperforming AdaHedge and AdaFTRL, a situation similar to the case of uniform payoffs (Figure \ref{fig:subfig:matrix-uniform-parameters}.)
\begin{figure}[H]
\center
\includegraphics[scale=0.5]{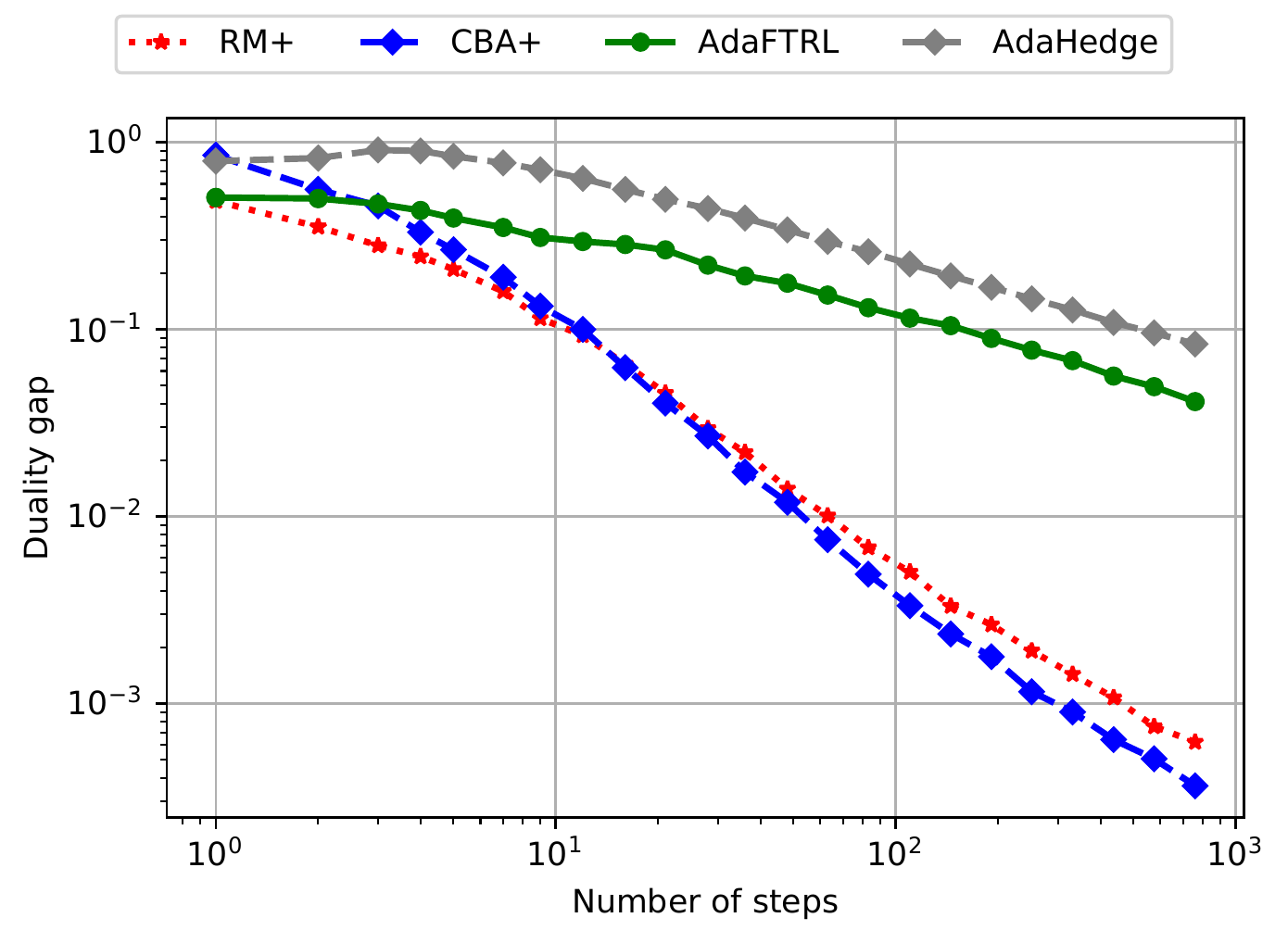}
\caption{Comparison of \rmp, \cbap, AdaHedge and AdaFTRL for $\XX = \Delta(n),\YY=\Delta(m)$ and random matrices. We choose $n,m=10$ and $A_{ij} \sim N(0,1)$ over $70$ instances.}
\label{fig:comparison-rmp-cbap-adahedge-adaftrl}
\end{figure}}
\subsection{Additional numerical experiments for EFGs}\label{app:simu-EFG} In Section \ref{sec:simu-bilinear-games-on-simplex} , we have compared \cbap{} (using alternation and linear averaging) and CFR$^{+}$ on various EFGs instances. We present in Figure \ref{fig:EFG-appendices} additional simulations where \cbap{} and CFR$^+$ performs similarly.  A description of the games can be found in \cite{farina2021faster}. On the $y$-axis we show the duality gap of the current averaged decisions $(\bar{\bm{x}}_{T},\bar{\bm{y}}_{T})$. On the $x$-axis we show the number of iterations. 

\begin{figure}[H]
 \begin{subfigure}{0.30\textwidth}
\centering
         \includegraphics[width=1.0\linewidth]{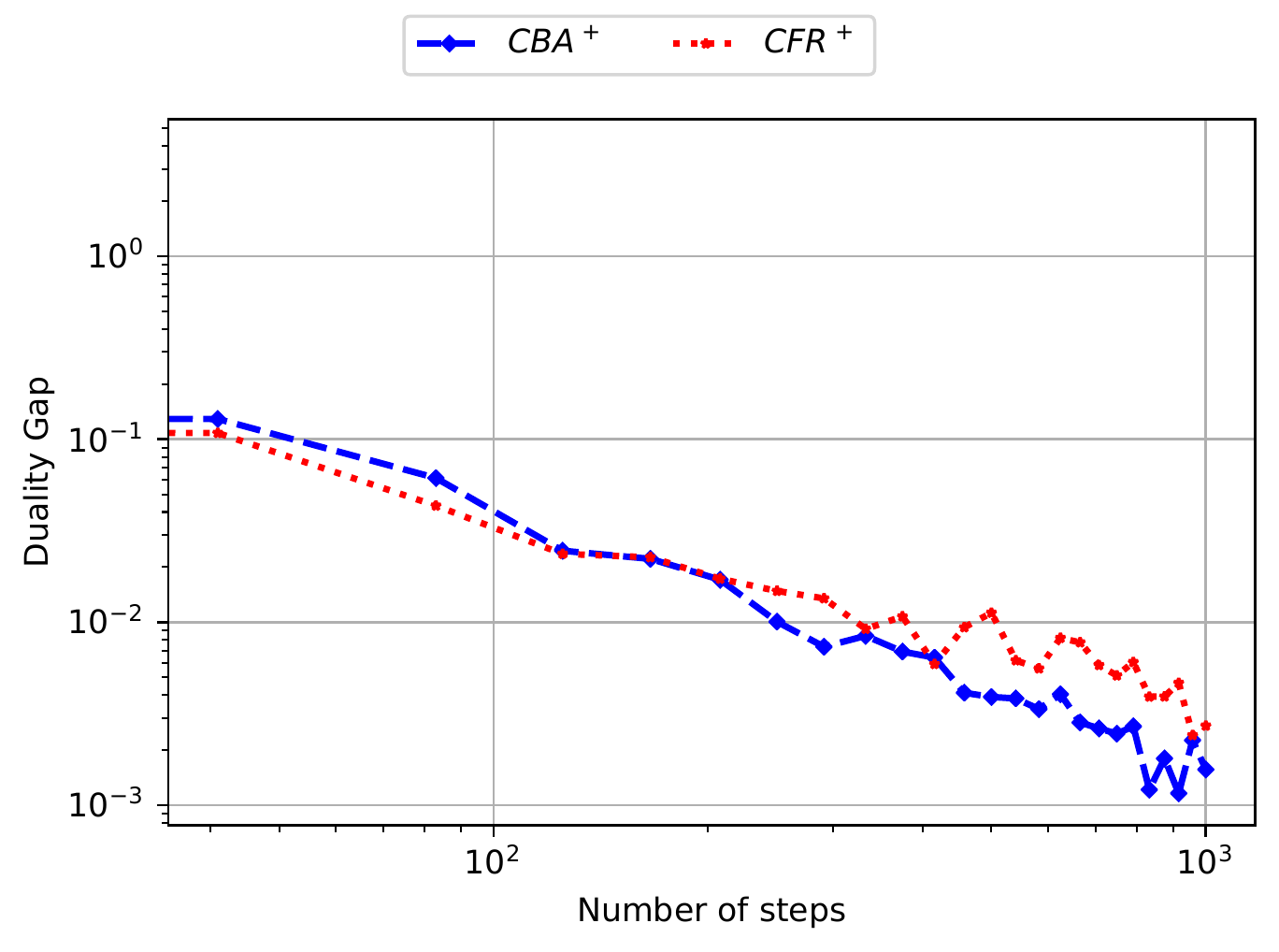}
         \caption{Search game, 4 turns.}
  \end{subfigure}
   \begin{subfigure}{0.30\textwidth}
\centering
         \includegraphics[width=1.0\linewidth]{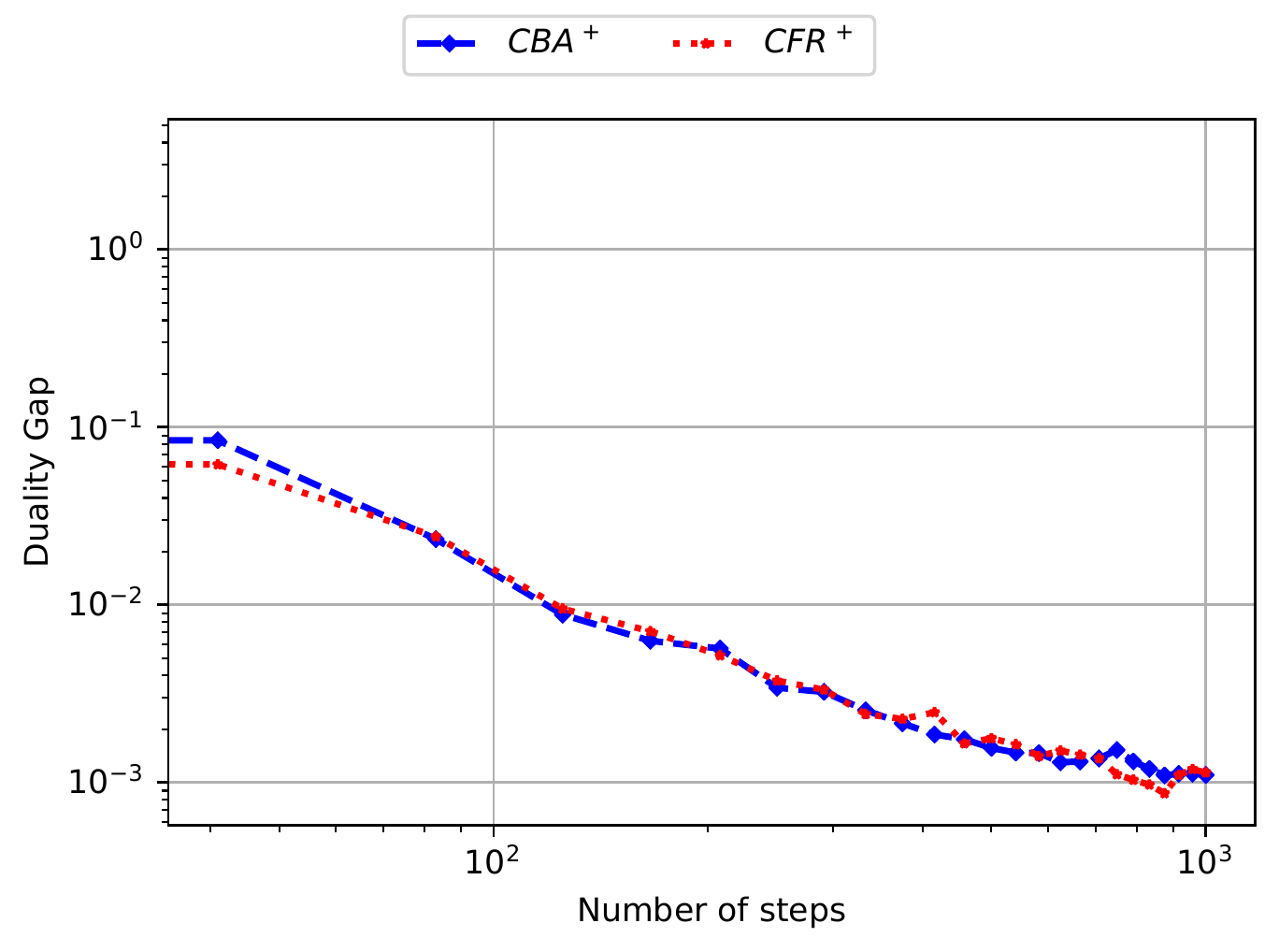}
         \caption{Search game, 5 turns.}
  \end{subfigure}
     \begin{subfigure}{0.30\textwidth}
\centering
         \includegraphics[width=1.0\linewidth]{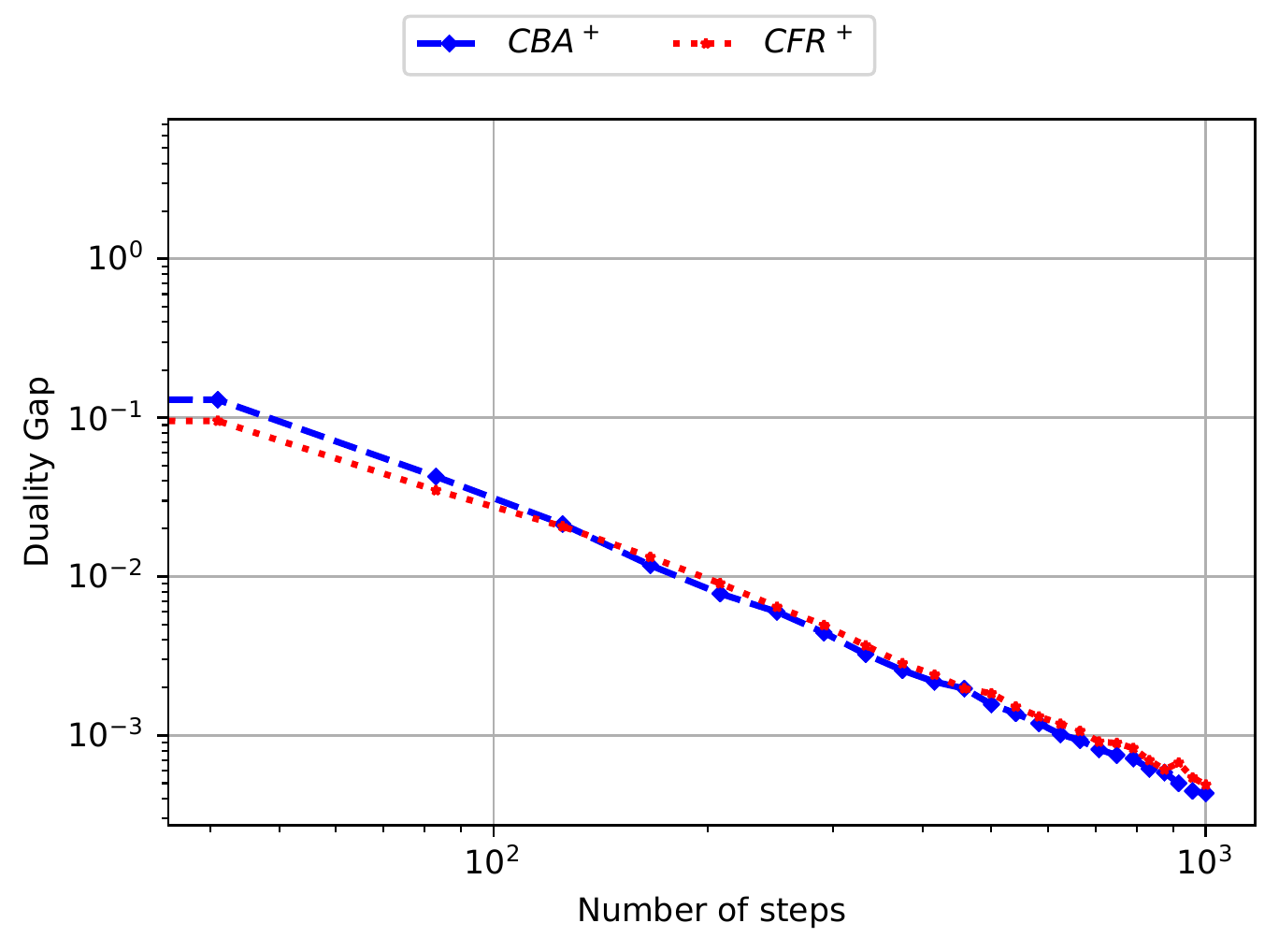}
         \caption{Leduc, 2 players, 3 ranks.}
  \end{subfigure}
       \begin{subfigure}{0.30\textwidth}
\centering
         \includegraphics[width=1.0\linewidth]{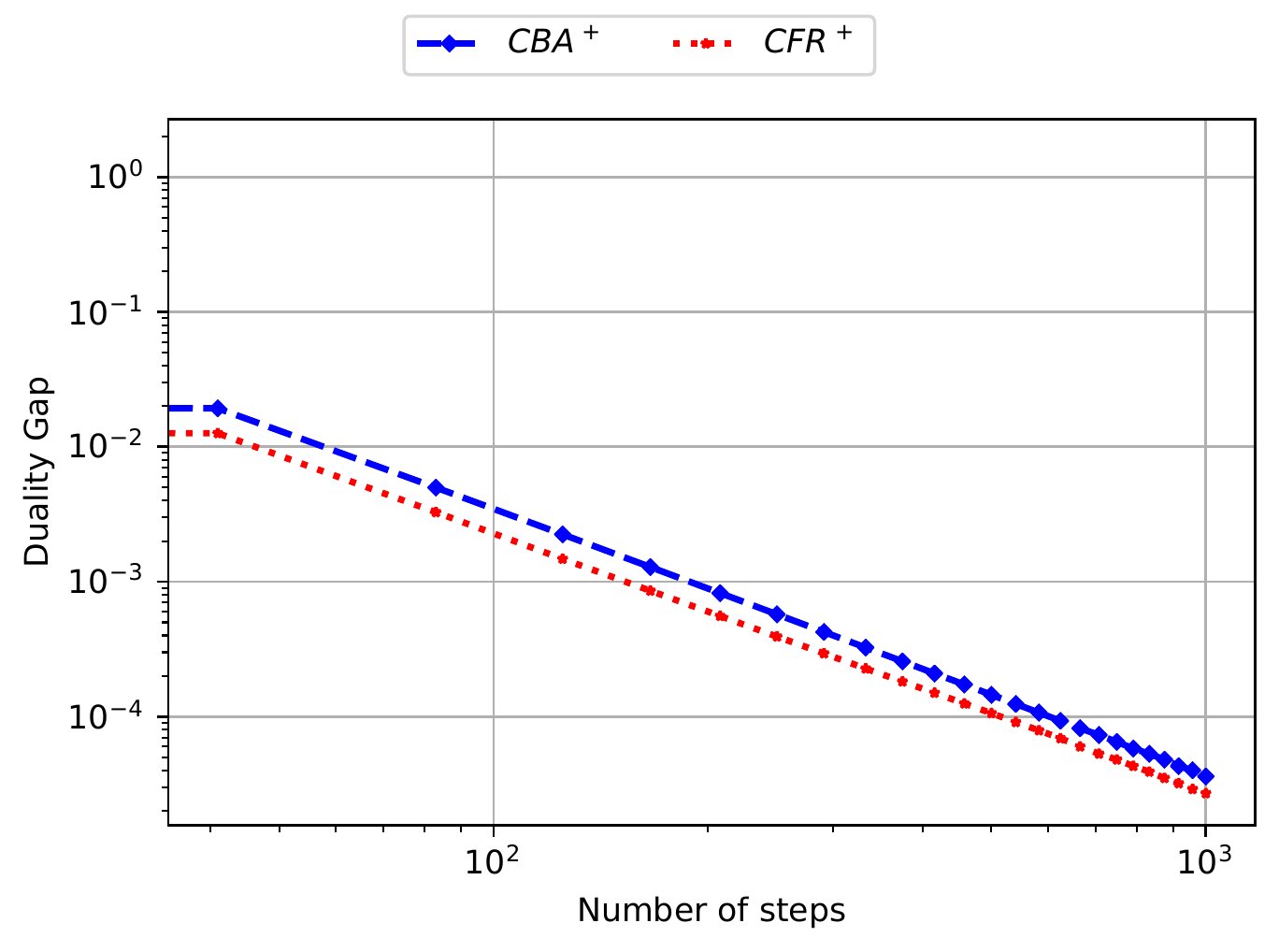}
         \caption{Leduc, 2 players, 6 faces.}
  \end{subfigure}
  \caption{Comparison of \cbap{} and CFR$^+$ for various Extensive-Form Games (EFG) instances.}
  \label{fig:EFG-appendices}
\end{figure} 
\section{OMD, FTRL and optimistic variants}\label{app:OMD-etc}
\subsection{Algorithms}
Let us fix some step size $\eta >0$.
For solving our instances of distributionally robust optimization, we compare Algorithm \cbap{} with the following four state-of-the-art algorithms: 
\begin{enumerate}
\item Follow-The-Regularized-Leader (FTRL) \citep{abernethy2009competing,mcmahan2011follow}:
\begin{equation}\label{alg:FTRL}\tag{FTRL}
\bm{x}_{t+1} \in \arg \min_{ \bm{x} \in \XX } \; \langle \sum_{\tau=1}^{t} \bm{f}_{\tau},\bm{x}\rangle + \dfrac{1}{ \eta } \| \bm{x} \|_{2}^{2}.
\end{equation}
Optimistic FTRL \citep{rakhlin2013online}: given estimation $\bm{m}^{t+1}$ of loss at period $t+1$, choose
\begin{equation}\label{alg:pred-FTRL}\tag{O-FTRL}
\bm{x}_{t+1} \in \arg \min_{ \bm{x} \in \XX } \; \langle \sum_{\tau=1}^{t} \bm{f}_{\tau} + \bm{m}^{t+1},\bm{x}\rangle + \dfrac{1}{ \eta } \|  \bm{x} \|_{2}^{2}.
\end{equation}
\item Online Mirror Descent (OMD) \citep{nemirovsky1983problem,beck2003mirror}:
\begin{equation}\label{alg:OMD}\tag{OMD}
\bm{x}_{t+1} \in \min_{\bm{x} \in \XX } \langle \bm{f}_{t},\bm{x} \rangle + \dfrac{1}{ \eta } \| \bm{x}-\bm{x}_{t}\|_{2}^{2}.
\end{equation}
 Optimistic OMD \citep{chiang2012online}: given estimation $\bm{m}^{t+1}$ of loss at period $t+1$, 
 \begin{equation}\label{alg:pred-OMD}\tag{O-OMD}
  \begin{aligned}
 &\bm{z}_{t+1}  \in \min_{\bm{z} \in \XX } \langle \bm{m}_{t+1},\bm{z} \rangle + \dfrac{1}{ \eta } \| \bm{z}-\bm{x}_{t}\|_{2}^{2}, \\
 & \text{Observe the loss }\bm{f}_{t+1} \text{ related to } \bm{z}_{t+1}, \\
 & \bm{x}_{t+1}  \in \min_{\bm{x} \in \XX } \langle \bm{f}_{t+1},\bm{x} \rangle + \dfrac{1}{ \eta} \| \bm{x} - \bm{x}_{t}\|_{2}^{2}.
 \end{aligned}
 \end{equation}
\end{enumerate}
Note that a priori these algorithms can be written more generally using Bregman divergence (e.g., \cite{BenTal-Nemirovski}). We choose to work with $\| \cdot \|_{2}$ instead of \textit{Kullback-Leibler divergence} as this $\ell_{2}$-setup is usually associated with faster empirical convergence rates \citep{ChambollePock16,GKG20}. Additionally,  following \cite{chiang2012online,rakhlin2013online}, we use the last observed loss as the predictor for the next loss, i.e., we set $\bm{m}^{t+1} = \bm{f}_{t}$.
\subsection{Implementations}\label{app:OMD-implementation}
When $\XX$ is the simplex or a ball based on the $\ell_{2}$-distance and centered at $\bm{0}$, there is a closed-form solution to the proximal updates for \ref{alg:FTRL}, \ref{alg:OMD}, \ref{alg:pred-FTRL} and \ref{alg:pred-OMD}. However, it is not clear how to compute these proximal updates for different settings, e.g., when $\XX$ is a subset of the simplex or an $\ell_{p}$-ball. We present the details of our implementation below. The results in the rest of this section are reminiscient to the novel tractable proximal setups presented in \cite{grand2020first,grand2020scalable}.

\paragraph{Computing the projection steps for min-player}
For $\XX = \{ \bm{x} \in \R^{n} \; | \; \| \bm{x}  - \bm{x}_{0} \|_{2} \leq \epsilon_{x}\}$, $\bm{c}, \bm{x}' \in \R^{n}$ and a step size $\eta>0$, the prox-update becomes 
\begin{equation}\label{eq:prox-update-x-player}
\min_{\| \bm{x} -\bm{x}_{0} \|_{2} \leq \epsilon_{x}} \langle \bm{c}, \bm{x} \rangle + \dfrac{1}{2 \eta} \| \bm{x} - \bm{x}'\|_{2}^{2}.
\end{equation}
This is the same arg min as
\[\min_{\| \bm{x} -\bm{x}_{0} \|_{2} \leq \epsilon_{x}} \| \bm{x} - (\bm{x}' - \eta \bm{c})\|_{2}^{2}.\]
We can change $\bm{x}$ by $\bm{z} = \left( \bm{x}-\bm{x}_{0} \right)/\epsilon_{x}$ to solve the equivalent program
\[ \min_{\| \bm{z} \|_{2} \leq 1 } \| \bm{z} - \dfrac{1}{\epsilon_{x}} \left( \bm{x}'-\eta \bm{c} - \bm{x}_{0} \right) \|_{2}^{2}.\]
The solution to the above program is
\[ \bm{z} = \dfrac{\bm{x}'-\eta \bm{c} - \bm{x}_{0}}{\max\{\epsilon_{x},\| \bm{x}'-\eta \bm{c} - \bm{x}_{0} \|_{2} \}}.\]
From $\bm{x} = \bm{x}_{0} + \epsilon_{x}\bm{z}$ we obtain $\bm{x}^{*}$ the solution to \eqref{eq:prox-update-x-player}
\[\bm{x}^{*} = \bm{x}_{0} + \epsilon_{x} \dfrac{\bm{x}'-\eta \bm{c} - \bm{x}_{0}}{\max\{\epsilon_{x},\| \bm{x}'-\eta \bm{c} - \bm{x}_{0} \|_{2} \}}.\]
\paragraph{Computing the projection steps for max-player}
For $\YY = \{ \bm{y} \in \Delta(m) \; | \; \| \bm{y}  - \bm{y}_{0} \|_{2} \leq \epsilon_{y}\}$, the proximal update of the max-player from a previous point $\bm{y}'$ and a step size of $\eta>0$ becomes
\begin{equation}\label{eq:prox-update-y-player}
\min_{\| \bm{y} -\bm{y}_{0} \|_{2} \leq \epsilon_{y}, \bm{y} \in \Delta(m)} \langle \bm{c}, \bm{y} \rangle + \dfrac{1}{2 \eta} \| \bm{y} - \bm{y}'\|_{2}^{2}.
\end{equation}
If we dualize the $\ell_{2}$ constraint with a Lagrangian multiplier $\mu \geq 0$ we obtain the relaxed problem $q(\mu)$ where
\begin{equation}\label{eq:q-mu}
q(\mu) = - (1/2) \epsilon_{y}^{2} \mu + \min_{\bm{y} \in \Delta(m)} \langle \bm{c}, \bm{y} \rangle + \dfrac{1}{2 \eta} \| \bm{y} - \bm{y}'\|_{2}^{2} + \dfrac{\mu}{2}\| \bm{y} - \bm{y}_{0} \|_{2}^{2}.
\end{equation}
Note that the $\arg \min$ in 
\[ \min_{\bm{y} \in \Delta(m)} \langle \bm{c}, \bm{y} \rangle + \dfrac{1}{2 \eta} \| \bm{y} - \bm{y}'\|_{2}^{2} + \dfrac{\mu}{2}\| \bm{y} - \bm{y}_{0} \|_{2}^{2} \]
is the same $\arg \min$ as in 
\begin{equation}\label{eq:intermediary-q-mu}
 \min_{\bm{y} \in \Delta(m)} \| \bm{y} - \dfrac{\eta}{\eta \mu + 1} \left( \dfrac{1}{\eta} \bm{y}' + \mu \bm{y}_{0} - \bm{c} \right) \|_{2}^{2}.
\end{equation}
Note that \eqref{eq:intermediary-q-mu} is an orthogonal projection onto the simplex.  Therefore, it can be solved efficiently \citep{euclidean-projection}. We call $\bm{y}(\mu)$ an optimal solution of \eqref{eq:intermediary-q-mu}. Then $q(\mu)$ can be rewritten
\[ q(\mu) = - (1/2) \epsilon_{y}^{2} \mu + \langle \bm{c}, \bm{y}(\mu) \rangle + \dfrac{1}{2 \eta} \| \bm{y}(\mu) - \bm{y}'\|_{2}^{2} + \dfrac{\mu}{2}\| \bm{y}(\mu) - \bm{y}_{0} \|_{2}^{2}.\]
We can therefore binary search $q(\mu)$ as in the previous expression.
An upper bound $\bar{\mu}$ for $\mu^{*}$ can be computed as follows. Note that 
\[ q(\mu) \leq - (1/2) \epsilon_{y}^{2} \mu + \langle \bm{c}, \bm{y}_{0} \rangle + \dfrac{1}{2 \eta} \| \bm{y}_{0}- \bm{y}'\|_{2}^{2}.\]
Since $\mu \mapsto q(\mu)$ is concave we can choose $\bar{\mu}$ such that $q(\mu) \leq q(0)$. Using the previous inequality this yields
\[ \bar{\mu} = \frac{2}{\epsilon_{y}^{2}} \left(  \langle \bm{c}, \bm{y}_{0} \rangle + \dfrac{1}{2 \eta} \| \bm{y}_{0}- \bm{y}'\|_{2}^{2} - q(0) \right).\]
We choose a precision of $\epsilon=0.001$ in our simulations. Note that these binary searches make \ref{alg:OMD}, \ref{alg:FTRL}, \ref{alg:pred-FTRL} and \ref{alg:pred-OMD} slower than \cbap{} in terms of running times, since the updates in \cbap{} only requires to compute the projection $\pi_{\C}(\bm{u})$, and we have shown in Proposition \ref{prop:complexity-computing-ortho-proj-norm-p} and Appendix \ref{app:proof-ell-p-ball} how to compute this in $O(n)$ when $\XX$ is an $\ell_{2}$ ball $\XX = \{ \bm{x} \in \Delta(n) \; | \; \| \bm{x} - \bm{x}_{0} \|_{2} \leq \epsilon_{x} \}$.

\paragraph{Computing the theoretical step sizes} We now give details about the choice of choice of theoretical step sizes.  In theory (e.g., \cite{BenTal-Nemirovski}),  for a player with decision set $\XX$, we can choose $\eta_{\sf th} = \sqrt{2} \Omega/L\sqrt{T}$ with $\Omega = \max_{\bm{x},\bm{x}' \in \XX} \| \bm{x}-\bm{x}'\|_{2}$, and $L$ an upper bound on the norm of any observed loss $ \bm{f}_{t}$: $ \| \bm{f}_{t} \|_{2} \leq L, \forall \; t \geq 1$. Note that this requires to know 1) the number of steps $T$, and 2) the upper bound $L$ on the norm of any observed loss $\bm{f}_{t}$, before the losses are generated.  
We now show how to compute $L_{x}$ and $L_{y}$ (for the $x$-player and the $y$-player) for an instance of the distributionally robust optimization problem \eqref{eq:DRO-empirical}.
\begin{enumerate}
\item For the $y$-player, the loss$\bm{f}_{t}$ is $\bm{f}_{t} = \left( \ell_{i}(\bm{x}_{t}) \right)_{i \in [1,m]}$, with $\ell_{i}(\bm{x}) = \log(1+\exp(-b_{i}\bm{a}^{\top}_{i}\bm{x}))$. 
For each $i \in[1,m]$ we have $| \ell_{i} | \leq \log(1+\exp( |b_{i}| R \| \bm{a}_{i} \|_{2} ))$ so that
\[L_{y} = \sqrt{\sum_{i=1}^{m} \log(1+\exp( |b_{i}| R \| \bm{a}_{i} \|_{2} ))^{2} }.\]
\item For the $x$-player we have $\bm{f}_{t} = \bm{A}^{t}\bm{y}_{t}$, where $\bm{A}^{t}$ is the matrix of subgradients of $\bm{x} \mapsto F(\bm{x},\bm{y}_{t})$ at $\bm{x}_{t}$:
\[ A^{t}_{ij} = \dfrac{-1}{1+\exp(b_{i}\bm{a}_{i}^{\top}\bm{x}_{t})}b_{i}a_{i,j}, \forall \; (i,j) \in \{ 1,...,m\} \times \{1,...,n\}.\]
Therefore, $\| \bm{f}_{t} \|_{2} \leq \| \bm{A}^{t} \|_{2} \| \bm{y}_{t} \|_{2} \leq \| \bm{A}^{t} \|_{2}$, because $\bm{y} \in \Delta(m)$. Now we have $\| \bm{A}^{t} \|_{2} \leq \| \bm{A}^{t} \|_{F} = \sqrt{\sum_{i,j} |A_{ij}^{t}|^{2}}$.  
From $|A^{t}_{ij}| \leq |b_{i}a_{i,j}|$ we use
\[ L_{x} = \sqrt{\sum_{i,j} |b_{i}a_{i,j}|^{2} }.\]
\end{enumerate}
%
\section{Additional details and numerical experiments for distributionally robust optimization}\label{app:details-simu}
We compare \cbap{} with alternation and linear averaging, \ref{alg:OMD},\ref{alg:FTRL},\ref{alg:pred-OMD} and \ref{alg:pred-OMD} for various step sizes $\eta$ where $\eta = \alpha \eta_{\sf th}$ for $\alpha \in \{1,100,1,000,10,000\}$, on additional synthetic and real data sets. We also add a comparison with adaptive step sizes.
\paragraph{Data sets} We present here the characteristics of the data sets that we use in our DRO simulations.  All data sets can be downloaded from the {\sf libsvm} classification libraries\footnote{https://www.csie.ntu.edu.tw/$\sim$cjlin/libsvmtools/datasets/}
\begin{itemize}
\item {\em Adult} data set: two classes,  $m=1,605$ samples with $n=123$ features.
\item {\em Australian} data set: two classes,  $m=690$ samples with $n=14$ features.
\item {\em Madelon} data set: two classes,  $m=2,000$ samples with $n=500$ features.
\item {\em Splice} data set: two classes,  $m=1,000$ samples with $n=60$ features.
\end{itemize}
\paragraph{Additional experiments with fixed step sizes}
In this section we present additional numerical experiments for solving distributionally robust optimization instances in Figure \ref{fig:simu-DRO-2}.
We use a synthetic data set, where we sample the features $a_{ij}$ as uniform random variables in $[0,1]$.   We also present results for the \textit{adult} and the \textit{australian} data sets from libsvm.  We vary the aggressiveness of the step sizes $\eta = \alpha \eta_{\sf th}$ by multiplying the theoretical step sizes $\eta_{\sf th}$ by a multiplicative step factor $\alpha$. The empirical setting is the same as in Section \ref{sec:simu}.
We note that our algorithm still outperforms or performs on par with the classical approaches after $10^{2}$ iterations, without requiring a single choice of parameter. 
\begin{figure}
\includegraphics[width=1.0\linewidth]{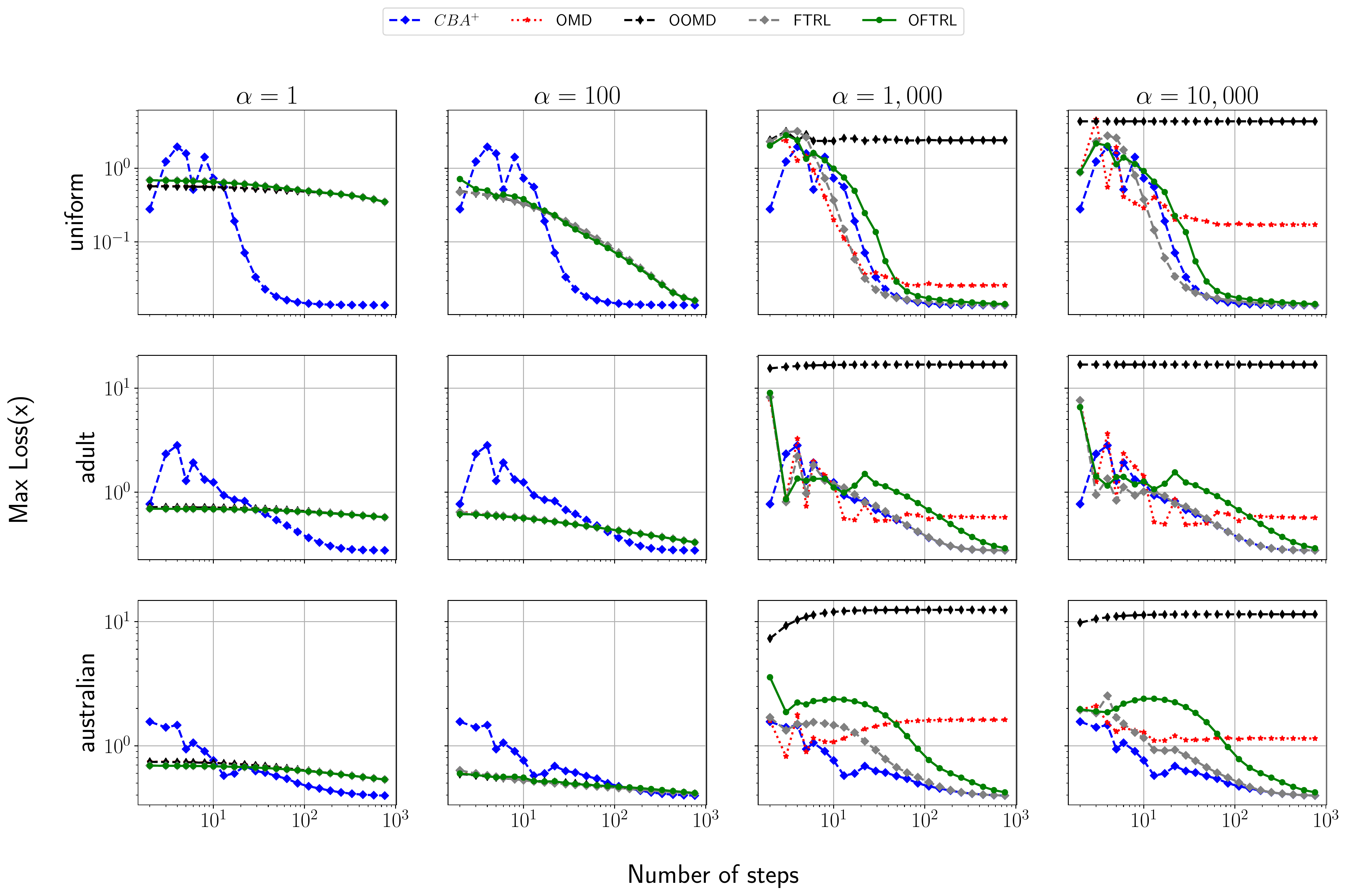}
\caption{Comparisons of the performances of \cbap{} with alternation and linear averaging, \ref{alg:OMD},\ref{alg:FTRL},\ref{alg:pred-OMD} and \ref{alg:pred-FTRL} on synthetic (with \textit{uniform} distribution) and real data sets (\textit{adult} and \textit{australian}). We use fixed step sizes $\eta = \alpha \eta_{\sf th}$, where $\eta_{\sf th}$ is the theoretical step size that guarantees convergence.}\label{fig:simu-DRO-2}
\end{figure}
\paragraph{Additional experiments with adaptive step sizes} We present our additional results with adaptive step sizes in Figure \ref{fig:DRO-adaptive}.  Given $\bm{v}_{t}$ the payoff observed by the player at period $t$, and following~\cite{orabona2019modern}, we choose the step sizes $\left( \eta_{t} \right)_{t \geq 1}$ as
\begin{equation}\label{eq:eta-adaptive}
\eta_{t} = 1/\sqrt{\sum_{\tau=1}^{t} \| \bm{v}_{\tau} \|_{2}^{2}}.
\end{equation}
 We note that \cbap{} still outperforms, or performs on par, with the state-of-the-art approaches.

\begin{figure}[hbt]
   \begin{subfigure}{0.3\textwidth}
\centering
         \includegraphics[width=1.0\linewidth]{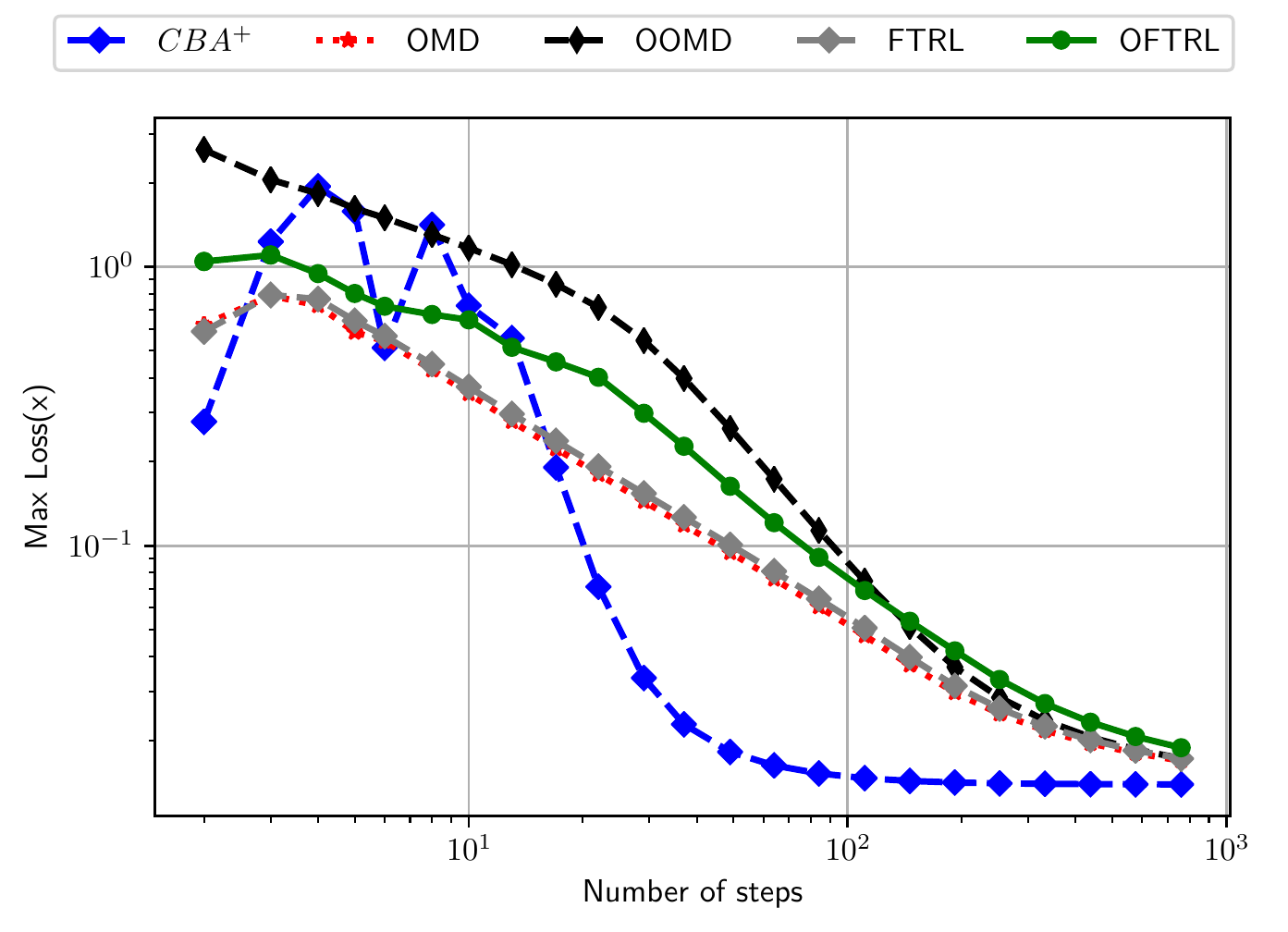}
         \caption{Uniform.}
  \end{subfigure}
   \begin{subfigure}{0.3\textwidth}
\centering
         \includegraphics[width=1.0\linewidth]{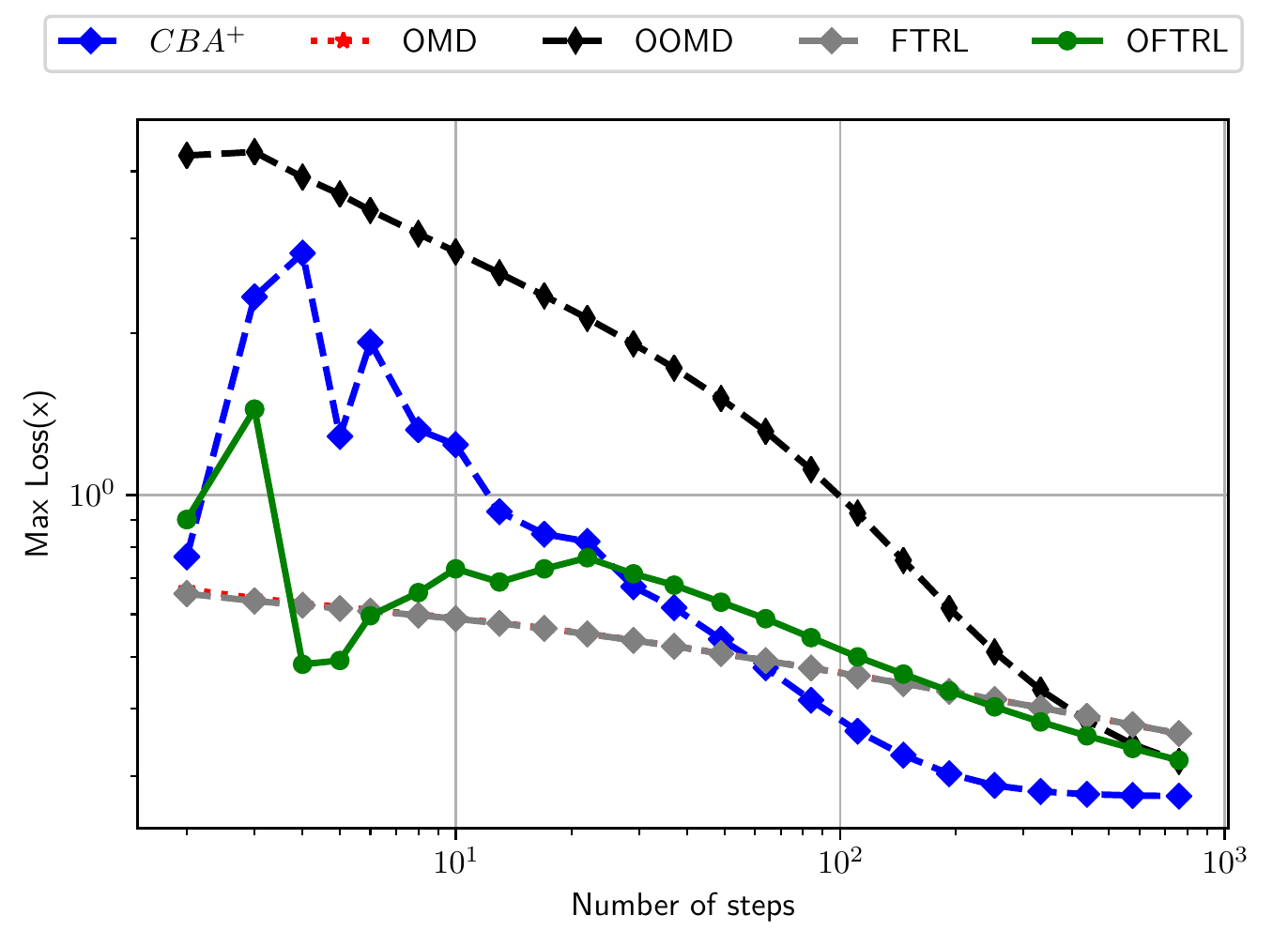}
         \caption{Adult}
  \end{subfigure}
   \begin{subfigure}{0.3\textwidth}
\centering
         \includegraphics[width=1.0\linewidth]{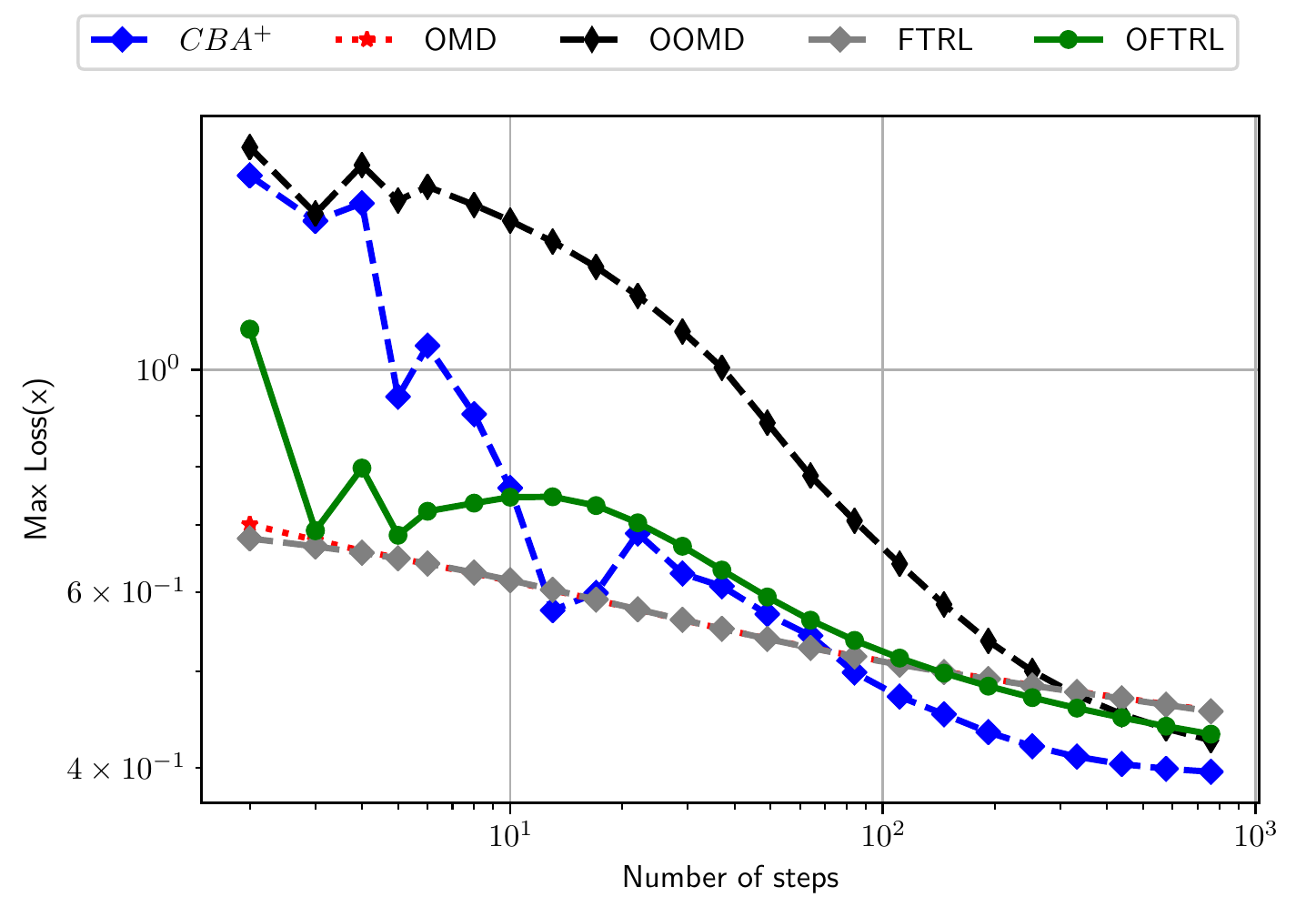}
         \caption{Australian}
  \end{subfigure}
  \caption{Comparisons of the performances of \cbap{} with alternation and linear averaging, \ref{alg:OMD},\ref{alg:FTRL},\ref{alg:pred-OMD} and \ref{alg:pred-FTRL} on synthetic (with \textit{uniform} distribution) and real data sets (\textit{adult} and \textit{australian}).  For the non-parameter free algorithms, we use the adaptive step sizes as in \eqref{eq:eta-adaptive}.}
  \label{fig:DRO-adaptive}
\end{figure}

\end{document}